\documentclass{article}

\usepackage[preprint]{corl_2026} % Use this for the initial submission.
\usepackage{enumitem}
\usepackage[numbers]{natbib}
\usepackage{amsthm}
\usepackage{amsmath}
\usepackage{amssymb}
\usepackage{mathtools}
\usepackage{xcolor}
\usepackage{physics}
\usepackage{booktabs}
\usepackage[ruled]{algorithm2e}
\usepackage{float}
\usepackage{fnpct}
\usepackage{wrapfig}
\usepackage{comment}
\usepackage{multirow}

\usepackage[table]{xcolor}
\SetKwInput{KwInput}{Input}

\newcommand{\argmin}{\operatorname*{argmin}}
\newcommand{\diag}{\operatorname{diag}}        % diagonal matrix
\newcommand{\bdiag}{\operatorname{bdiag}}      % block diagonal matrix
\newcommand{\defeq}{\vcentcolon=}

\newcommand{\softmin}{\operatorname{softmin}}

\newcommand{\calI}{\mathcal{I}}
\newcommand{\calL}{\mathcal{L}}

\newcommand{\R}{\mathbb{R}}

\newcommand{\bx}{\boldsymbol{x}}
\newcommand{\bu}{\boldsymbol{u}}
\newcommand{\by}{\boldsymbol{y}}
\newcommand{\bz}{\boldsymbol{z}}

\newcommand{\bs}{\boldsymbol{s}}

\newcommand{\tx}{\tilde{\mathbf{x}}}
\newcommand{\tu}{\tilde{\mathbf{u}}}
\newcommand{\ty}{\tilde{\mathbf{y}}}
\newcommand{\tz}{\tilde{\mathbf{z}}}

\newcommand{\ts}{\tilde{\mathbf{s}}}

\newcommand{\txu}{\widetilde{\mathbf{xu}}}

\newcommand{\blambda}{\boldsymbol{\lambda}}
\newcommand{\bxi}{\boldsymbol{\xi}}

\newcommand{\bzero}{\boldsymbol{0}}
\newcommand{\bp}{\boldsymbol{p}}
\newcommand{\bc}{\boldsymbol{c}}
\newcommand{\bv}{\boldsymbol{v}}
\newcommand{\bTheta}{\boldsymbol{\boldsymbol{\theta}}}
\newcommand{\bOmega}{\boldsymbol{\Omega}}
\newcommand{\bchi}{\boldsymbol{\chi}}
\newcommand{\bzeta}{\boldsymbol{\zeta}}

\newcommand{\bQ}{\boldsymbol{Q}}
\newcommand{\bR}{\boldsymbol{R}}
\newcommand{\bI}{\boldsymbol{I}}

\newcommand{\J}[1]{J_{#1}}
\newcommand{\f}[1]{f_{#1}}
\newcommand{\g}[1]{g_{#1}}
\newcommand{\h}[1]{h_{#1}}
\newcommand{\paramell}[1]{\ell_{#1}}

\newcommand{\paramrho}[1]{\boldsymbol{\rho}_{#1}}
\newcommand{\parammu}[1]{\boldsymbol{\mu}_{#1}}

\renewcommand{\norm}[1]{\left\lVert{#1}\right\rVert}

\newtheorem{theorem}{Theorem}

\newtheorem{proposition}[theorem]{Proposition}

\newtheorem{corollary}[theorem]{Corollary}

\newcommand{\deepcoordinator}{Deep Coordinator }
\newcommand{\deepcoordinatorcomma}{Deep Coordinator, }
\newcommand{\deepcoordinatorperiod}{Deep Coordinator. }

\newcommand{\hquad}{\hspace{0.5em}} 

\definecolor{ForestGreen}{RGB}{25, 102, 25}
\definecolor{MplBlue}{RGB}{16, 105, 189}
\definecolor{MplOrange}{RGB}{224, 111, 11}

\title{Deep-Unfolded Coordination}
\author{
Hunter Kuperman \qquad 
Minchan Jung\thanks{These authors contributed equally (equal second authorship)} \qquad 
Rahul V. Ghosh\footnotemark[1]
\\[0.5em]
\textbf{Alex Oshin}\qquad
\textbf{Evangelos A. Theodorou} 
\\[0.75em]
Autonomous Control and Decision Systems Laboratory \\
Georgia Institute of Technology \\
United States
}

\begin{document}
\maketitle

\begin{abstract}
    Distributed optimization is a highly scalable and structurally transparent technique to solve multi-agent robotics problems; however, such methods often suffer from the need for highly-specialized, problem-specific hyperparameter tunings. In this work, we propose \textbf{Deep Coordinator}, a deep-unfolding framework that learns to dynamically adjust the hyperparameters of ADMM-DDP, a popular distributed solver for robotics tasks, at solve-time in response to optimizer performance. Our architecture consists of unrolling a fixed number of ADMM-DDP iterations into a neural network with learnable functions between layers mapping the optimizer state to the next hyperparameters. To the best of our knowledge, \deepcoordinator is the first deep-unfolding framework to adapt the penalty parameters of a non-convex optimizer at solve-time; we show that the mainstream supervised approach can yield degenerate solutions when training such models, and propose an unsupervised learning scheme. On simulations with fleets of cars and quadrotors, Deep Coordinator produces trajectories of comparable quality 6.18-9.44x faster than conventional solvers. Furthermore, Deep Coordinator retains its performance benefits when deployed to systems up to 8x larger than trained on.
\end{abstract}

% Two or three meaningful keywords should be added here
% \keywords{Multi-Agent Coordination, Model-Based Learning, Deep-Unfolding} 

\section{Introduction}
\begin{figure}[h]
\centering
\begin{tabular}{ccc}
    \hspace{-12pt}
    \includegraphics[width=0.33\linewidth]{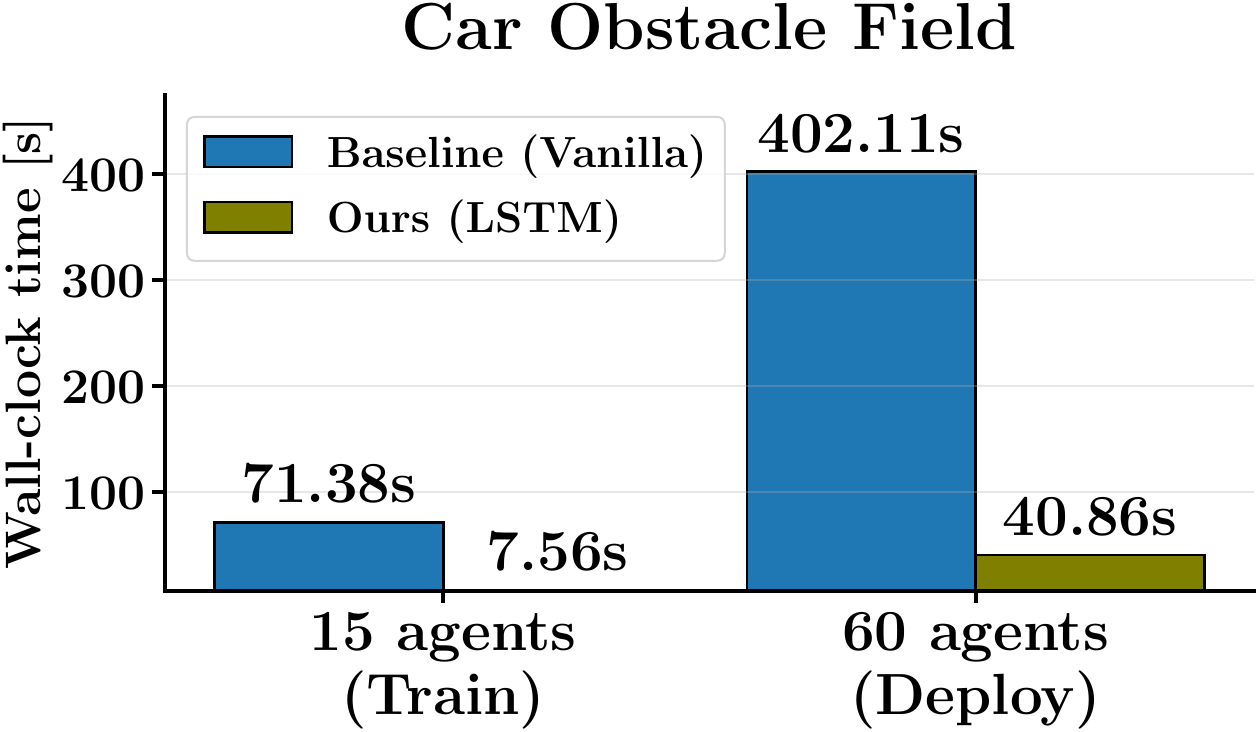} &
    \hspace{-12pt}
    \includegraphics[width=0.33\linewidth]{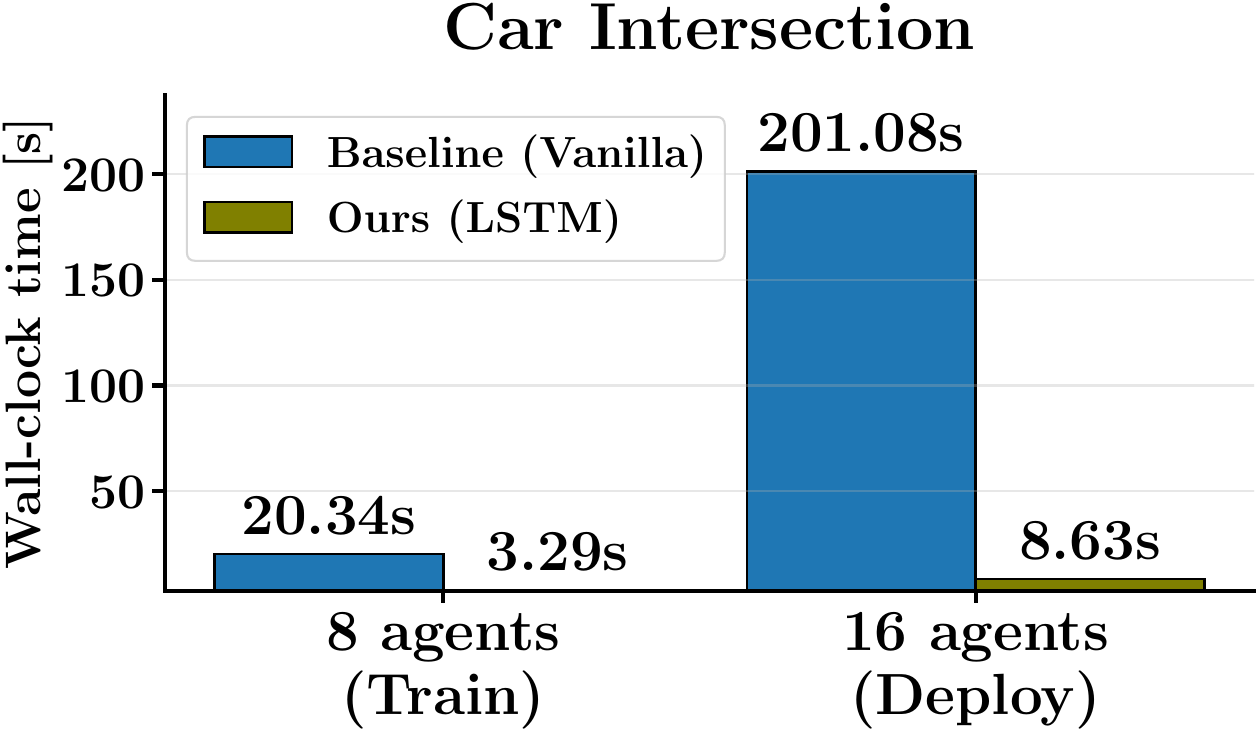} &
    \hspace{-12pt}
    \includegraphics[width=0.33\linewidth]{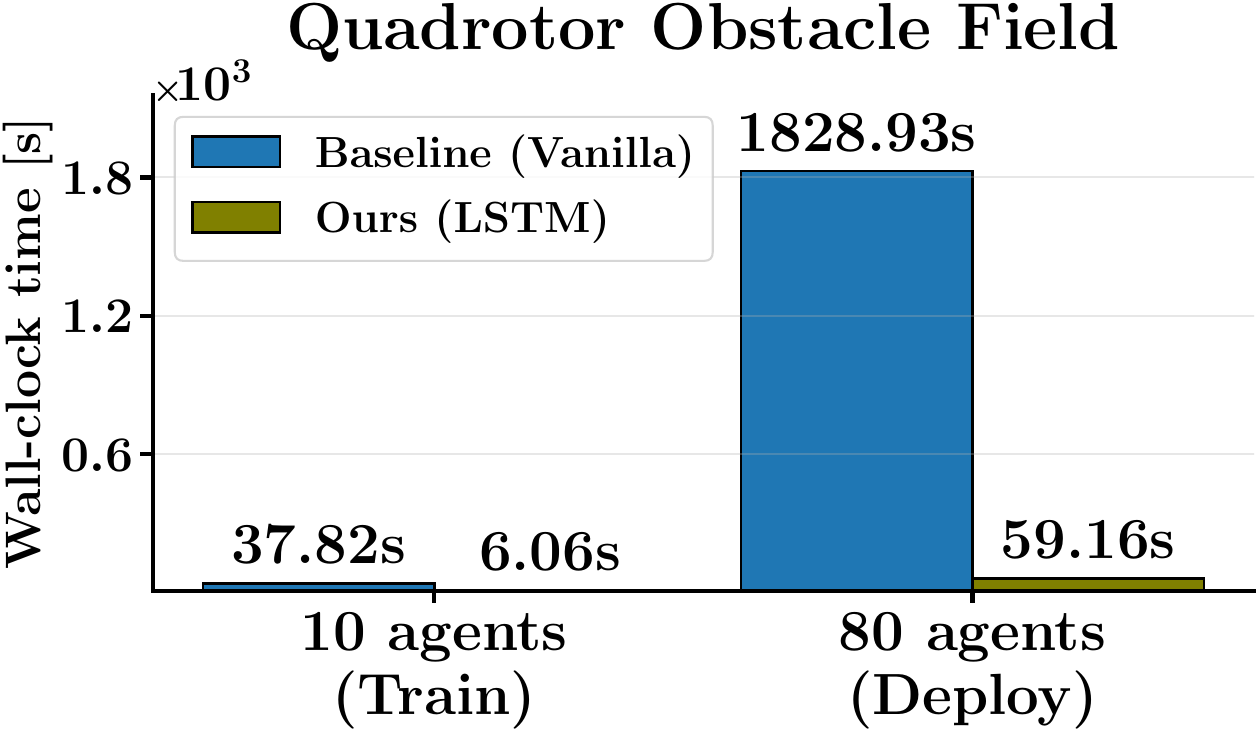}
    \hspace{-12pt}
\end{tabular} 
    \caption{Wall-clock time of \deepcoordinator trained on smaller-scale systems but deployed to larger-scale ones, where \deepcoordinator yields significant speedups across task types and scales. Plot titles indicate multi-agent task type. Results are averaged over 20 problems for each task.}
\end{figure}
Multi-agent robotic systems, characterized by their large dimensionality, inter-agent constraints, and the need to coordinate effectively, span a variety of applications. Drone swarms performing search and rescue, for example, must coordinate their coverage patterns while avoiding collision~\citep{ferraro2023multi}; teams of robots collaborating to lift heavy objects must manage weight distribution while maintaining balance~\citep{tuci2018cooperative}; and fleets of autonomous vehicles must jointly plan maneuvers such as lane changes and merges without compromising safety~\citep{hua2025connected}. Many of these applications benefit from scaling to systems with hundreds or thousands of agents. However, in robotics tasks, controllers typically only have milliseconds to solve the next problem, and existing methods often cannot deliver solutions fast enough to be practical at scale.

Two main approaches have arisen to solve such multi-agent problems, each with complementary strengths. Multi-Agent Reinforcement Learning (MARL)~\citep{zhang2019marl} leverages data to learn decentralized policies for each agent, resulting in fast inference time and the ability to specialize to complex problem distributions. However, MARL policies are frequently parameterized by model-free neural networks, meaning the resulting models are difficult to interpret and---although recent work~\citep{li2024cooperative, sun2024llmbased} has made progress in this area---often generalize poorly to out-of-distribution tasks.

Conversely, distributed optimization decomposes large tasks into smaller optimization problems that can be solved quicker and in parallel. A popular method under this paradigm is ADMM-DDP \citep{saravanos2022distributed}, which combines the Alternating Direction Method of Multipliers (ADMM) \citep{boyd2011distributed} with Differential Dynamic Programming (DDP) \citep{mayne1966secondorder}. Their method introduces safe copy-variables of the control and state which always satisfy constraints, enabling constraint satisfaction to be handled separately from solving the underlying control task. Although ADMM-DDP efficiently decomposes the problem structure and is highly interpretable, its convergence speed is heavily dependent on the tuning of coupled hyperparameters, the number of which increases significantly with the size of the problem. Despite this, ADMM-DDP is often hand-tuned, leading to slow convergence. Furthermore, performant tunings can vary significantly between problems, limiting the generalization capabilities of any one tuning.

Rather than choosing between the speed and specializability of data-driven learning or the safety and interpretability of structured optimization, we view these characteristics as composable. In recent years, \emph{deep-unfolding} \citep{monga2021algorithm} has emerged as a promising technique to accelerate optimization algorithms by parameterizing optimizer steps with trainable weights that can be learned from data. Deep-unfolding reinterprets these optimizer iterations as differentiable layers in a neural network, which can then be trained end-to-end to learn the optimal weights. The resulting learned optimizers inherit the interpretability and structure of their traditional counterparts, while often converging significantly faster and demonstrating improved generalization~\citep{saravanos2024deep, noah2025distributed, lupu2023deep}.

We claim that multi-agent robotics tasks are particularly well-suited for deep-unfolding, for the following reasons: 
\begin{enumerate}
    \item Multi-agent distributed optimizers often have tunable parameters for each agent, yielding hundreds or thousands of total hyperparameters. Deep-unfolding can efficiently learn complex relationships between these hyperparameters that are difficult to identify manually.
    \item Control algorithms are often deployed in a model-predictive control (MPC) fashion, where the optimizer re-solves similar problems, differing only in their initial conditions and constraints, at each timestep. Deep-unfolded networks can be trained exclusively on this problem distribution, yielding specialized hyperparameter policies that improve convergence.
    \item In robotics, slow convergence can lead to suboptimal or unsafe behavior. Given the time constraints, the performance benefits of deep-unfolded networks are particularly valuable. 
    % \item Given the time-constraints in robotics contexts, slow convergence can lead to suboptimal behavior. The performance benefits of deep-unfolded networks are particularly valuable.
\end{enumerate}

We propose \deepcoordinatorcomma a deep-unfolding framework for learning to dynamically adapt ADMM-DDP hyperparameters at solve-time in response to optimizer performance. Our architecture consists of unrolling a fixed number of ADMM-DDP iterations into a neural network with learnable functions between layers mapping the optimizer state to the next hyperparameters, and can be deployed to larger or different teams of agents than trained on. For training, we utilize the Implicit Function Theorem (IFT)~\citep{krantz2003implicit} to derive a novel gradient computation scheme which exploits the distributed structure of ADMM-DDP to efficiently differentiate through Deep Coordinator end-to-end. To the best of our knowledge, \deepcoordinator is the first deep-unfolding framework to adapt the penalty parameters of a non-convex optimizer at solve-time. We show that the mainstream supervised learning approach can yield degenerate solutions when training such non-convex deep-unfolded models, and propose an unsupervised training framework. 

To showcase the performance of \deepcoordinatorcomma we benchmark our model on three multi-agent tasks involving fleets of cars and quadrotors. Across problems, \deepcoordinator finds solutions of the same cost and comparable constraint satisfaction 6.18-9.44x faster than its traditional optimizer counterparts. Furthermore, \deepcoordinator retains its performance benefits when deployed to systems with up to 8x larger team sizes than trained on. Lastly, we validate the improved performance of the unsupervised framework when compared to the mainstream supervised approach.

\section{Related Work}
\textbf{Deep-Unfolding as Control.} 
The authors of \citep{saravanos2024deep} note that selecting the optimal hyperparameters in a deep-unfolded network can itself be interpreted as a robotic control problem. From this perspective, the optimizer steps at each iteration serve as a dynamics function which propagates the intermediate solutions through time, while the hyperparameters act as control inputs which drive the system towards an optimum. Under this interpretation, the mainstream approach of learning fixed, per-iteration parameters in deep-unfolding can be viewed as a form of open-loop control, since the hyperparameters are selected without feedback. In this sense, schemes that adapt hyperparameters in response to the optimizer state ``close the loop."

\textbf{Learning to Optimize ADMM. } Deep-unfolding belongs to a broader class of methods called Learning to Optimize~(L2O), which use machine learning to improve the performance of optimization algorithms. Several recent L2O works have targeted ADMM-based distributed optimization. One work \citep{noah2025distributed} unfolds a generic distributed ADMM algorithm and learns an open-loop hyperparameter sequence. Another work \citep{doerks2025learning} reformulates ADMM as a Graph Neural Network~(GNN) and learns feedback policies on step size and communication weights. Reinterpreting ADMM as a GNN enables the learned model to explicitly reason over the underlying graph topology induced by the problem splitting; however, their framework does not consider learning penalty parameters and is limited to unconstrained, convex problems. In addition, \citep{saravanos2024deep} introduce and unfold a distributed quadratic programming algorithm based on OSQP~\cite{stellato2020osqp}. They adapt hyperparameters using closed-loop feedback policies, which are parameterized as Multi-Layer Perceptrons (MLPs) taking local and global residuals as input. Each agent uses the same MLP weights, enabling policies to be deployed to problems with more agents than trained on. Although it demonstrates strong performance, their framework is limited to convex quadratic programs, and, while their model adjusts the penalty parameters per-agent, it does not adapt the entire penalty vector.

\textbf{Learning to Optimize in Robotics. }
Several other recent works have applied L2O methods to robotics tasks. The authors of \citep{lupu2023deep} unfold centralized projected gradient descent, replacing several terms in the update equations with learned matrices, and apply the resulting learned optimizer to a linear MPC task. Another work \citep{wang2025learning} introduces a framework to meta-learn ADMM-DDP hyperparameters from example problems and derives an IFT-based gradient computation framework to differentiate through the ADMM-DDP optimizer. This approach demonstrates promising results, but is limited to producing a fixed hyperparameter sequence without utilizing the closed-loop form of adaptivity; additionally, the presented gradient computation framework does not handle cases where learnable functions are introduced between iterations.

\section{ADMM-DDP}
\subsection{Problem Formulation}
Before we present our framework, we first review the mathematical foundation of ADMM-DDP. Consider a team of $N$ agents collaborating on a task. Let $\bx_{i,t} \in \mathbb{R}^{n_x}$ and $\bu_{i,t} \in \mathbb{R}^{n_u}$ denote the state and control of agent $i$ at time $t$, respectively. Each agent $i$ is subject to the dynamics
\begin{equation*}
    \bx_{i,t+1} = \f{i}(\bx_{i,t}, \bu_{i,t})
\end{equation*}
and seeks to minimize its cost \begin{equation*}
    \J{i}(\bx_i, \bu_i) = \paramell{T}(\bx_{i,T}) + \sum_{t=0}^{T-1} \paramell{t}(\bx_{i,t}, \bu_{i,t})
\end{equation*}
where $T$ denotes the planning horizon, $\paramell{t}$ denotes the stage cost, and $\paramell{T}$ denotes the terminal cost.
% \begin{align*}
%     \bx_{i,t+1} = \f{i}(\bx_{i,t}, \bu_{i,t})
% \end{align*}
% Each agent seeks to minimize its cost
% \begin{align*}
%     \J{i}(\bx_i, \bu_i) = \paramell{T}(\bx_{i,T}) + \sum_{t=0}^{T-1} \paramell{t}(\bx_{i,t}, \bu_{i,t})
% \end{align*}
% where $\paramell{t}$ denotes the step cost and $\paramell{T}$ denotes the terminal cost.
We consider the following optimization problem
\begin{align*}
\label{eq:optimization_problem} \tag{A}
    \left(\{\bx_i, \bu_i\}_{i=1}^N\right)^* = \argmin_{\{\bx_i, \bu_i\}} \quad & \sum_{i=1}^N \J{i}(\bx_i, \bu_i), \notag \\
    \text{subject to} \quad & \bx_{i,t+1} = \f{i}(\bx_{i,t}, \bu_{i,t}), \quad \forall i, t \notag \\
    & \g{i, t}(\bx_{i,t}, \bu_{i,t}) \leq 0, \quad \forall i, t \notag \\
    & \h{ij,t}(\bx_{i,t}, \bu_{i,t}, \bx_{j,t}, \bu_{j,t}) \leq 0, \quad \forall i, j, t
\end{align*}
where $\g{i, t}$ defines the local constraints for agent $i$ and $\h{ij,t}$ defines the inter-agent constraints between agents $i$ and $j$.

\subsection{The ADMM-DDP Algorithm}
Problem \eqref{eq:optimization_problem} is a multi-agent nonlinear program (NLP) with a large number of variables and constraints, and cannot be solved scalably in a centralized fashion. To solve this problem efficiently, we consider the ADMM-DDP formulation employed in \citep{wang2025learning}, introducing safe copy-variables $\tx$ and $\tu$ and decomposing the original task into the distributed problem
\begin{equation*} \label{eq:distributed_optimization}\tag{B}
\begin{aligned}
    \min_{\bx, \bu, \tx, \tu} \quad & \sum_{i=1}^{N} \Biggl[ 
    \J{i}(\bx_i, \bu_i) 
    + \calI_{\f{i}}(\bx_i, \bu_i) + \calI_{\g{i}}(\tx_i, \tu_i) + \sum_{j \neq i} \Bigl(\calI_{\h{ij}}(\tx_i, \tx_j, \tu_i, \tu_j) \Bigr) \Biggl], \\
    \text{subject to} \quad &  \hquad \bx = \tx, \hquad \bu = \tu,
    % \blambda_i^T(\bx_i - \tx_i) \notag \\
    % &\qquad + \bxi_i^T(\bu_i - \tu_i) 
    % + \frac{\paramrho{i}}{2}\|\bx_i - \tx_i\|_2^2 
    % + \frac{\parammu{i}}{2}\|\bu_i - \tu_i\|_2^2 \Biggr],
\end{aligned}
\end{equation*}
where $\calI_{\f{i}}$, $\calI_{\g{i}}$, $\calI_{\h{ij}}$ are indicator functions for the dynamics, local, and inter-agent constraints, respectively.
The augmented Lagrangian for this problem is
\begin{align*}
    \calL = &\sum_{i=1}^{N} \Biggl[ 
    \J{i}(\bx_i, \bu_i) 
    + \calI_{\f{i}}(\bx_i, \bu_i) 
    + \calI_{\g{i}}(\tx_i, \tu_i) + \sum_{j \neq i} \Bigl(\calI_{\h{ij}}(\tx_i, \tx_j, \tu_i, \tu_j) \Bigr)
     \notag \\
    &\qquad \qquad  + \blambda_i^T(\bx_i - \tx_i) + \bxi_i^T(\bu_i - \tu_i) 
    + \frac{\paramrho{i}}{2}\|\bx_i - \tx_i\|_2^2 
    + \frac{\parammu{i}}{2}\|\bu_i - \tu_i\|_2^2 \Biggr]
\end{align*}
% Note that this is different from the augmented Lagrangian in the Nested-Distributed and Merged-Distributed structures proposed in \citep{saravanos2022distributed}, as it does not contain local copy variables (these are denoted $\bx_i^a$ in that work, although we re-use the superscript $a$ to mean ADMM iterations in our work). 
% \vspace{1em}
% \noindent
where $\boldsymbol{\rho}, \blambda \in \R^{N \times (T + 1) \times n_x}$ and $\boldsymbol{\mu}, \bxi \in \R^{N \times T \times n_u}$ are the penalty parameter and dual variable for the consensus constraints on $\bx$ and $\bu$, respectively. Minimizing the augmented Lagrangian with respect to each variable yields the following subproblems to solve at ADMM iteration $a + 1$, whose mathematical formulations are given in Appendix~\ref{app:sec:subproblems_of_admm_ddp}:

\textbf{Subproblem 1: } First, consider solving for the nominal trajectory updates. This yields the agent-wise unconstrained control problems~\eqref{eq:subproblem_1}. These subproblems can be solved in parallel using DDP to find dynamically-feasible trajectories that minimize each agent's individual costs.

\textbf{Subproblem 2: } Second, consider solving for the safe copy variable updates. Minimizing the augmented Lagrangian yields optimization problems~\eqref{eq:subproblem_2} decoupled across time.
The $\tx, \tu$ variables find a safe trajectory that respects the agent-specific and inter-agent constraints, and can be updated in parallel across the time horizon using an NLP solver.

\textbf{Subproblem 3: } Third, consider updating the dual variables via dual ascent. This yields the parallelizable updates~\eqref{eq:subproblem_3} that are decoupled for each agent-time pair.

\pagebreak
\section{Deep Coordinator}
In this section, we present the \deepcoordinator framework. For notational concision, we stack the ADMM iterates into a single vector $\bv^a = [\bx^a, \bu^a, \tx^a, \tu^a, \blambda^a, \bxi^a].$

\subsection{The Deep Coordinator Framework}
Suppose $K$ is a predetermined maximum number of optimizer iterations we intend to unfold. We unroll $K$ ADMM-DDP iterations into a neural network, re-interpreting each ADMM iteration as its own layer. We then construct the \deepcoordinator architecture by introducing penalty parameter adaptation policies between ADMM-DDP layers, of the form
\begin{equation*}
\begin{split}
    \boldsymbol{\theta}^a = \pi_{w^a}(\boldsymbol{\theta}^{a-1}, \bv^{a-1}, \boldsymbol{\chi}),
\end{split}
\end{equation*}
where $\boldsymbol{\theta}^a = [\boldsymbol{\rho}^a, \boldsymbol{\mu}^a]$, $\pi_{w^a}$ is a learnable feedback policy, $w^a$ are the network weights, and $\boldsymbol{\chi}$ represents problem-specific data.
There are several reasonable choices of $\pi_{w^a}$, which we discuss in Appendix \ref{app:choosing_a_feedback_scheme}. We denote the resulting neural network with $\Pi_w$.
Running \deepcoordinator amounts to propagating the initial iterates and problem data through $\Pi_w$, i.e.,  alternating between computing the next hyperparameters and executing a step of ADMM-DDP with these hyperparameters. We summarize this process in Algorithm~\ref{alg:deep_coordinator}, and the architecture is visualized in Fig. \ref{fig:deep_coordinator_architecture}.

To learn the optimal policy weights, we train \deepcoordinator end-to-end on a dataset of $M$ sample tasks. Our training objective is the following bi-level optimization problem to find optimal weights
\begin{equation*}
\begin{split}
    w^* = \argmin_{w} \; & \sum_{m = 1}^M \; L_m(\Pi_w(\bv^0, \boldsymbol{\chi}_m)),
\end{split}
\end{equation*}
where $\boldsymbol{\chi}_1, ..., \boldsymbol{\chi}_M \in \mathcal{X}$ are sample tasks and $L_m$ is an upper-level loss function which evaluates the trajectories generated by Deep Coordinator. We discuss choices of $L_m$ in Subsection \ref{subsection:trainingloop}.

\begin{figure*}[t]
    \centering
    \includegraphics[width=0.8\textwidth]{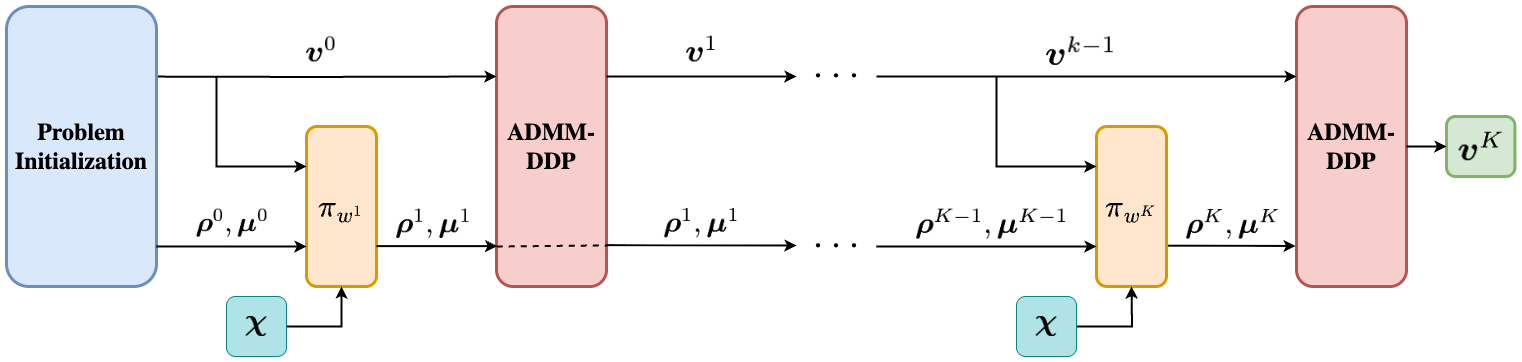}
    \vspace{-3pt}
    \caption{The Deep Coordinator architecture. Our framework unfolds ADMM-DDP iterations into a deep learning architecture. We train policies for determining the optimal hyperparameters in an end-to-end fashion to maximize task performance.}
    \label{fig:deep_coordinator_architecture}
\end{figure*}

\subsection{Designing the Training Loop} \label{subsection:trainingloop}

The mainstream approach~\citep{lupu2023deep, doerks2025learning, sambharya2024learning, saravanos2024deep} to train deep-unfolded networks has been to minimize a supervised loss function which measures the distance to a ground-truth solution. This strategy has proven effective when unfolding convex solvers, which admit a unique global minimizer to utilize as the ground-truth. However, the multi-agent robotics tasks \deepcoordinator solves are heavily non-convex and often have many local optima. Whether or not a supervised approach is well-suited for this case is an open question. 

To build intuition for why the supervised framework can be poorly suited for training non-convex deep-unfolded models, we present a 1D example employing a simpler optimizer in Appendix~\ref{app:illustrative_example}.

Motivated by this, we adopt the following unsupervised loss function
\begin{equation*}
\begin{split}
    L(\bv^K) = \gamma_{\textrm{cost}} \sum_{i} J_i(\bv_i^K) + \gamma_{\textrm{const}} \left( \sum_{i, t} \max{\{0, g_{i,t}(\bv_{i, t}^K)\}}^2 + \sum_{i, j, t} \max{\{0, h_{ij, t}(\bv_{i, t}^K, \bv_{j, t}^K)\}}^2\right),
\end{split}
\end{equation*}
where $\gamma_{\textrm{cost}}$ and $\gamma_{\textrm{const}}$ are tunable parameters. This loss penalizes a linear combination of the total cost and constraint violation and proved empirically reliable in our experiments.

A natural alternative scheme is to employ a loss function which utilizes the ADMM-DDP residuals (e.g., a linear combination of the primal and dual residuals). However, when the initial trajectory is constraint-satisfying, both ADMM-DDP residuals can be made zero by degenerately scaling the penalty parameters arbitrarily high. In this case, the quadratic coupling term scaled by the penalty parameters acts as a proximal term, halting the optimization process. With arbitrarily high penalty parameters, the dual residual becomes zero since the iterates do not change, while the primal residual remains zero since the initial solution was feasible. As a result, residual minimization can reward stalled optimization rather than progress towards lower cost solutions.

\subsection{Computing Gradients}
Training \deepcoordinator requires backpropagating through $K$ iterations of ADMM-DDP, where each iteration involves solving Subproblems \eqref{eq:subproblem_1} and \eqref{eq:subproblem_2} using iterative optimizers.
Unrolling these inner optimization loops using automatic differentiation \citep{baydin2017automatic} is computationally intractable, requiring $O(K \cdot T \cdot N_\text{inner})$ time and memory, where $N_\text{inner}$ is the number of inner solver iterations.
Moreover, this unrolling approach does not support the use of black box optimizers that do not provide gradients. To address these shortcomings, we utilize the IFT~\citep{krantz2003implicit} to derive a gradient computation scheme that avoids differentiating through the inner solver iterations and only requires $O(K \cdot T)$ time and memory. A detailed discussion of this framework is in Appendices \ref{app:gradient_computation} and \ref{app:gradient_computation_detailed_derivation}. 

\vspace{0.5em}
\begin{algorithm}[H]
\caption{Deep Coordinator}\label{alg:deep_coordinator}
\KwInput{Problem data $\bchi$, weights $w$, number of iterations $K$, initialization $\bv^0$}
% \KwOutput{Final iterate $\bv^K$}
\For{$a = 1, \ldots, K$}{
    Predict hyperparameters: $\boldsymbol{\theta}^a = \pi_{w^a}(\boldsymbol{\theta}^{a-1}, \bv^{a-1}, \boldsymbol{\chi})$; \\
    Solve Subproblem 1 with DDP: $\bx^a, \bu^a \gets $ Eq. \eqref{eq:subproblem_1}; \\
    Solve Subproblem 2 with NLP: $\tx^a, \tu^a \gets $ Eq. \eqref{eq:subproblem_2}; \\
    Update dual variables: $\blambda^a, \bxi^a \gets$ Eq. \eqref{eq:subproblem_3};
}
\Return{$\bv^K$}
\end{algorithm}

\section{Experimental Results}\label{sec:experimental_results}
To evaluate the performance of \deepcoordinatorcomma we benchmark our framework on several tasks involving fleets of cars and quadrotors. We compare \deepcoordinator models with three different parameterizations of the policy $\pi_{w^a}$, which we discuss below, to analyze the effect of feedback.

\textbf{Learned Scalar. } We first consider an open-loop policy parameterized by a single scalar for $\boldsymbol{\rho}$ and a single scalar for $\boldsymbol{\mu}$ which are shared across all agents, timesteps, and algorithm iterations.

\textbf{Learned Scalar Per Iteration. } We second consider a policy parameterized by $K$ open-loop scalars for $\boldsymbol{\rho}$ and $K$ open-loop scalars for $\boldsymbol{\mu}$ which are shared across all agents and timesteps, but can differ across iterations.  

\textbf{LSTM. } We last consider a shared feedback policy parametrized by a Long Short-Term Memory~(LSTM) network \citep{hochreiter1997long}. The LSTM encodes an architectural prior that memory of past iterations is important in selecting the next hyperparameters. 

For each task, we unfold \deepcoordinator for $K = 30$ iterations and train with 80 train and 20 test problems. We compare \deepcoordinator against two un-learned benchmarks: Vanilla ADMM-DDP, which employs a fixed, manually-tuned scalar, and Adaptive ADMM-DDP, which employs the penalty parameter adaptation rule proposed in \citep{saravanos2022distributed}. For details on the training setup, LSTM architecture, and baseline optimizers, consult Appendix~\ref{app:details_on_experimental_setup}. Additional experiments ablating the effect of feedback and comparing the supervised and unsupervised loss can be found in Appendix~\ref{app:additional_experiments}.

\subsection{Main Experiments}
We study three types of control problems, which we discuss below. For additional details on problem generation and agent dynamics, consult Appendix~\ref{app:details_on_problem_types}.

\textbf{Car Obstacle Field. } We first consider a multi-agent avoidance scenario with a randomized obstacle field. In this task, teams of 15 Dubins vehicles must reach a target formation across a field of three circular obstacles. The centers and radii of the obstacles are randomized for each task instance. The results and a representative \deepcoordinator trajectory are shown in Fig. \ref{fig:car_plot_type_1}.

\begin{figure}[h]
\centering
\begin{tabular}{ccc}
    \hspace{-12pt}
    \includegraphics[width=0.33\linewidth]{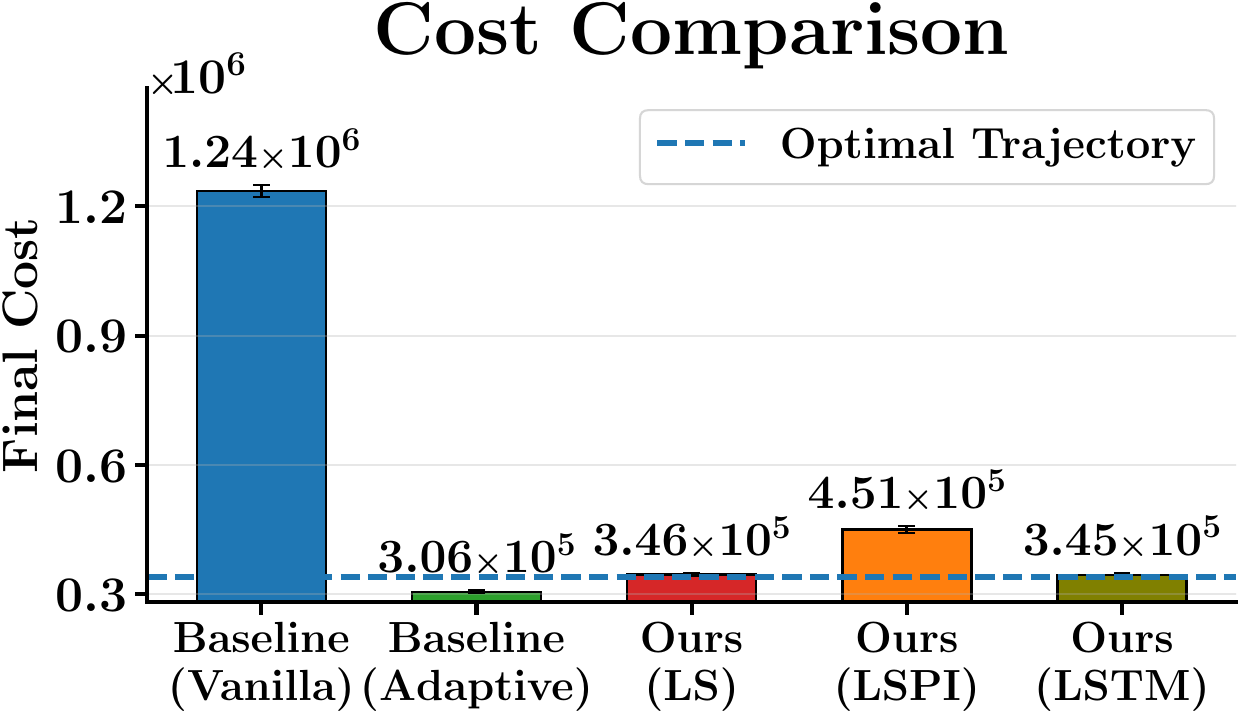} &
    \hspace{-12pt}
    \includegraphics[width=0.33\linewidth]{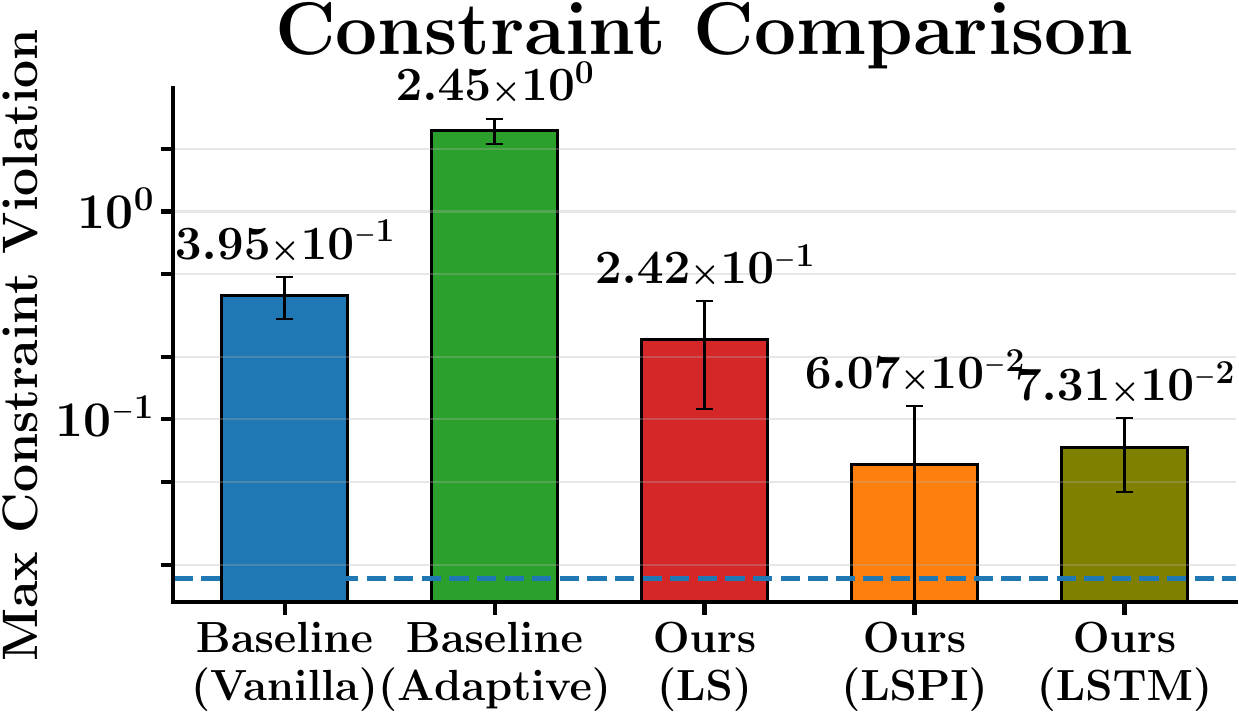} &
    \hspace{-12pt}
    \includegraphics[width=0.33\linewidth]{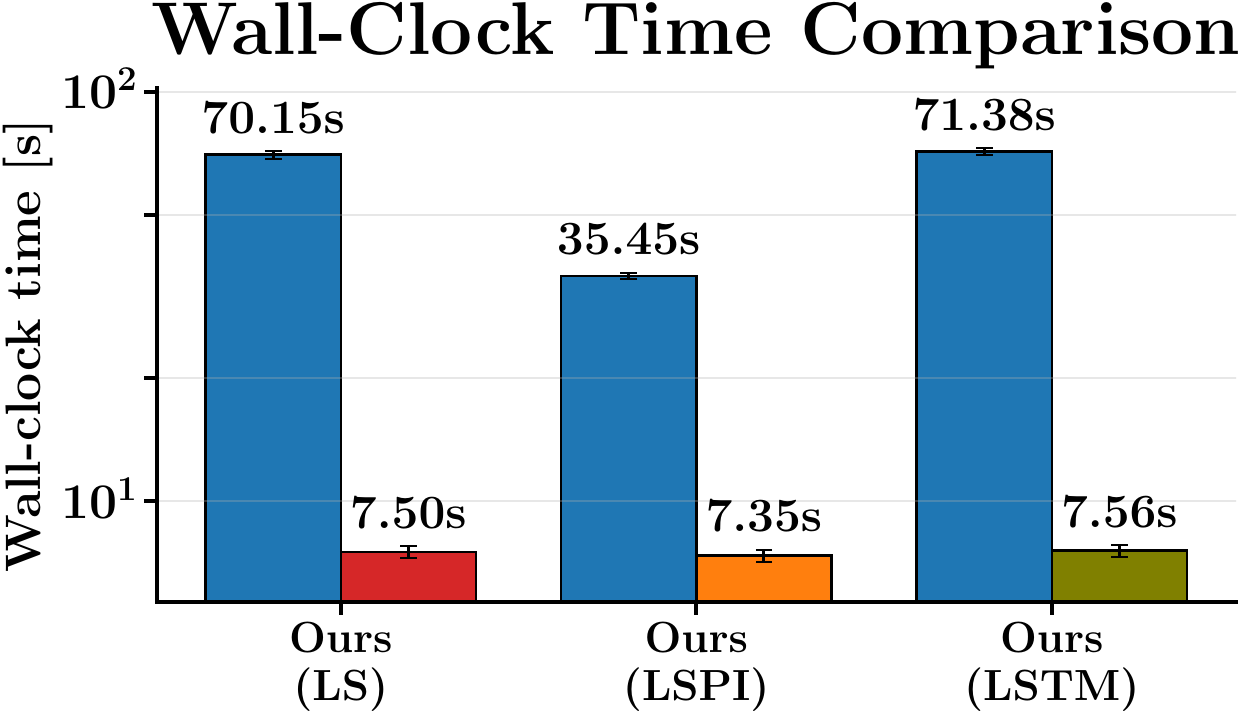}
    \hspace{-12pt}
\end{tabular} \\
\includegraphics[width=0.24\linewidth]{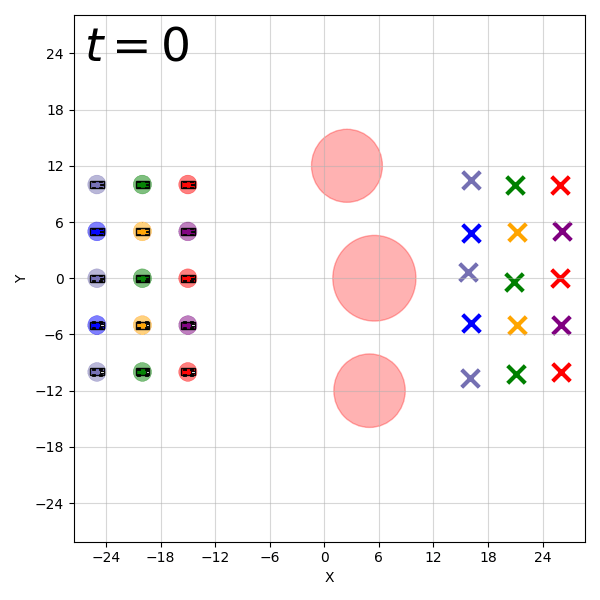}
\includegraphics[width=0.24\linewidth]{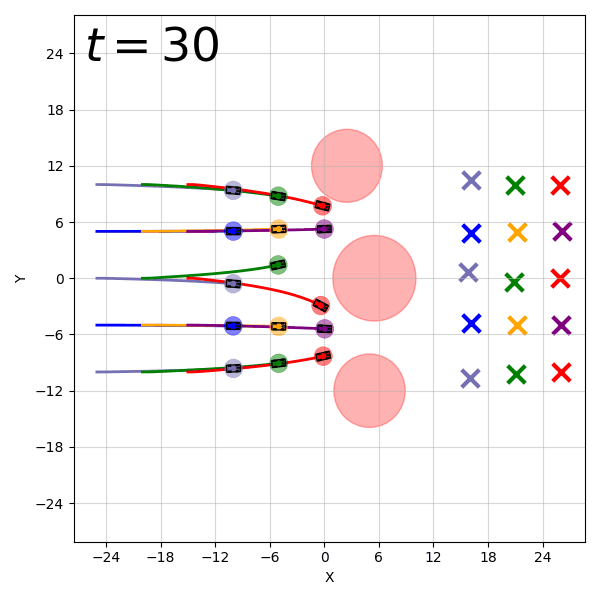}
\includegraphics[width=0.24\linewidth]{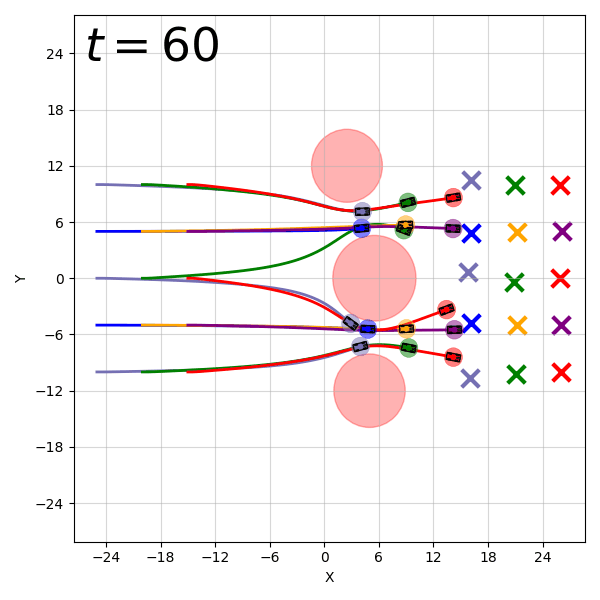}
\includegraphics[width=0.24\linewidth]{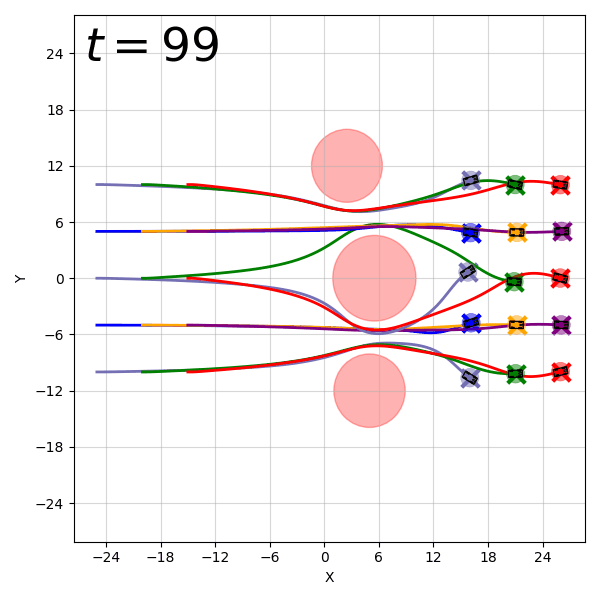}
\vspace{-6pt}
\caption{\textbf{Top: }Results of the car obstacle field task on 20 unseen test instances. Error bars indicate standard deviation. \textbf{Bottom: }Trajectories of 15 vehicles performing the car obstacle field task generated by \deepcoordinatorcomma with snapshots from different timesteps.}
\label{fig:car_plot_type_1}
\vspace{-6pt}
\end{figure}

\textbf{Car Intersection. } We second study a more complex intersection task where 8 Dubins vehicles must navigate a four-way junction. The initial and target states of the agents are randomly sampled, requiring significantly different trajectories for each problem instance. The results and a representative \deepcoordinator trajectory are shown in Fig. \ref{fig:inter_plot_type_1}.

\begin{figure}[h]
\centering
\begin{tabular}{ccc}
    \hspace{-12pt}
    \includegraphics[width=0.33\linewidth]{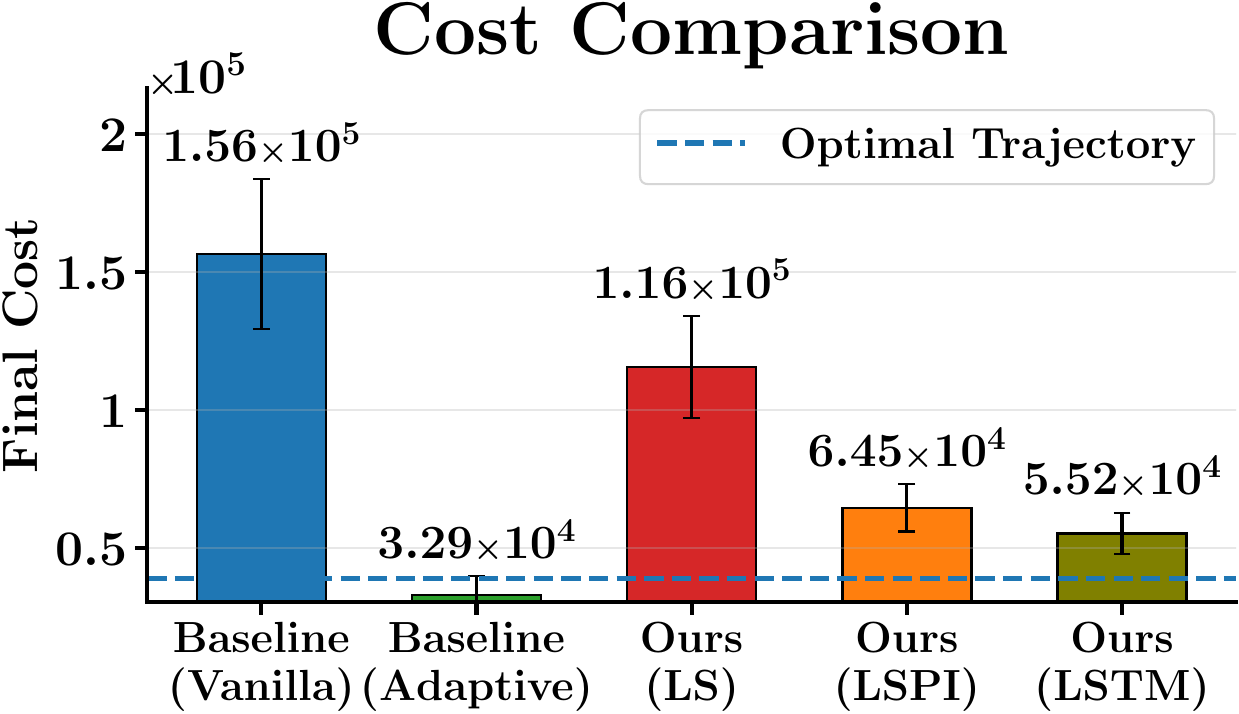} &
    \hspace{-12pt}
    \includegraphics[width=0.33\linewidth]{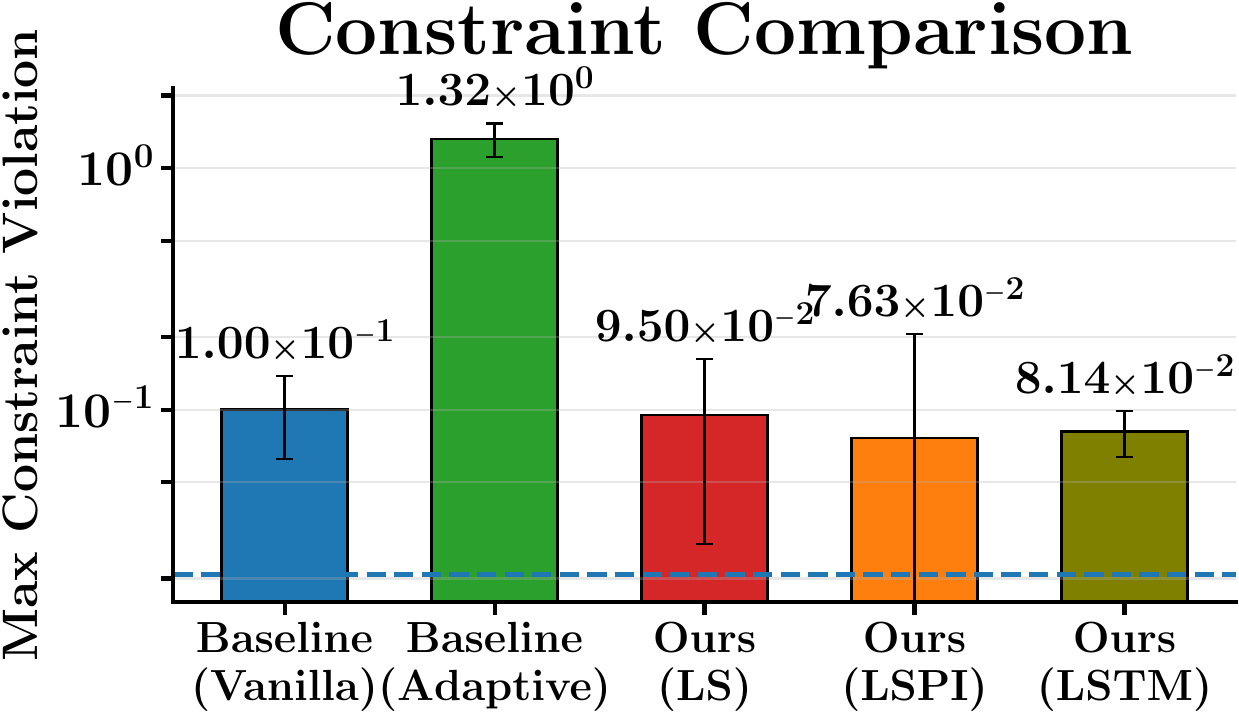} &
    \hspace{-12pt}
    \includegraphics[width=0.33\linewidth]{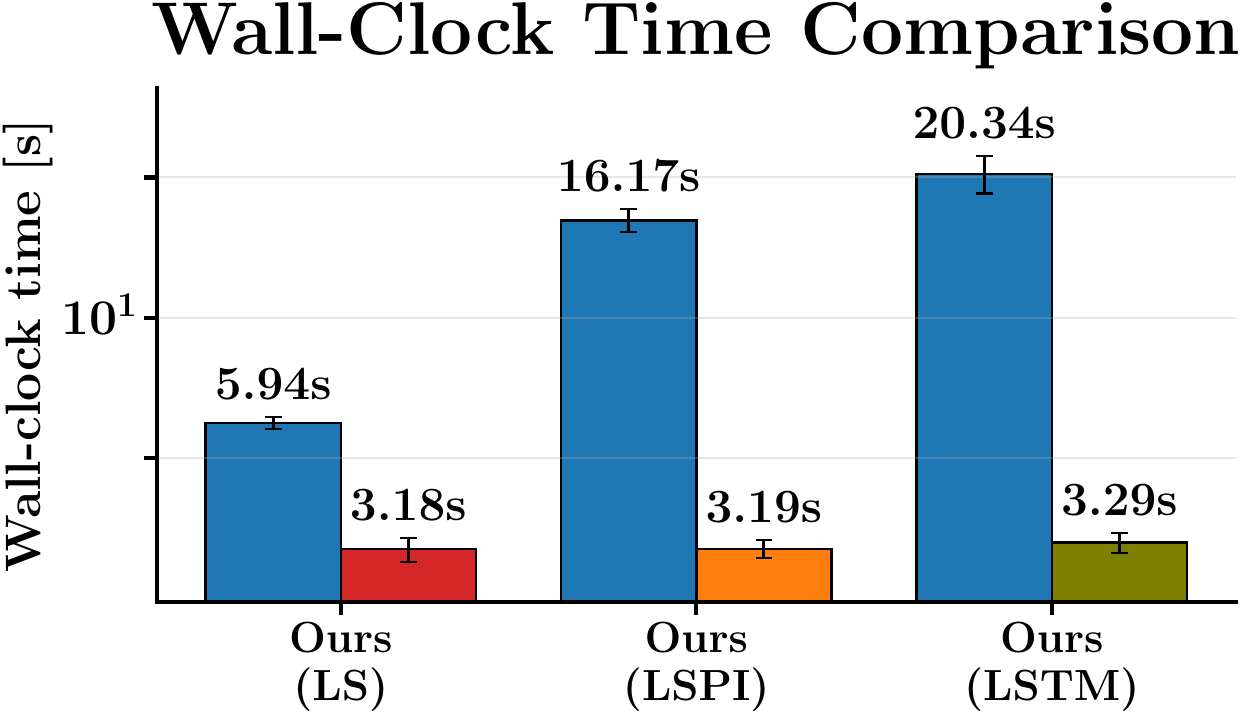}
    \hspace{-12pt}
\end{tabular} \\
\includegraphics[width=0.24\linewidth]{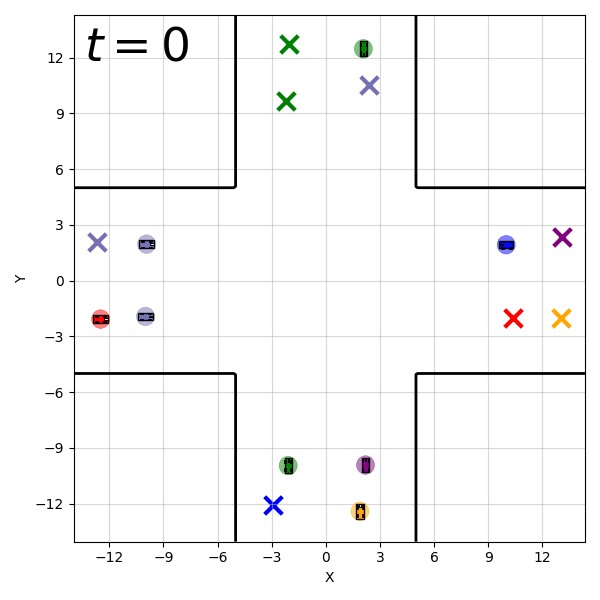}
\includegraphics[width=0.24\linewidth]{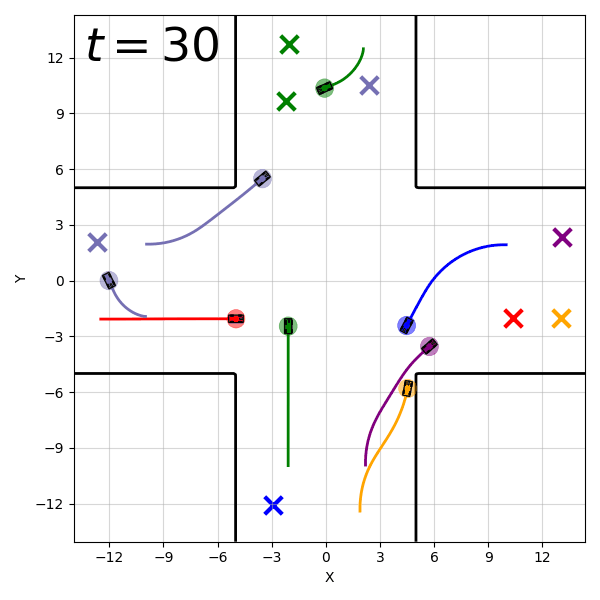}
\includegraphics[width=0.24\linewidth]{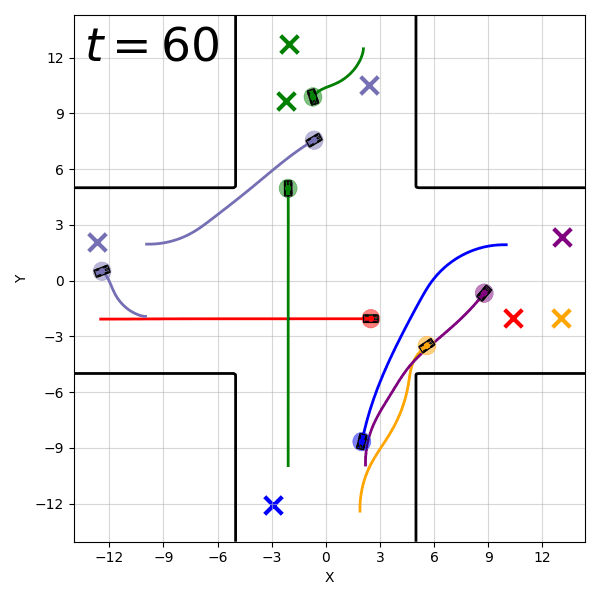}
\includegraphics[width=0.24\linewidth]{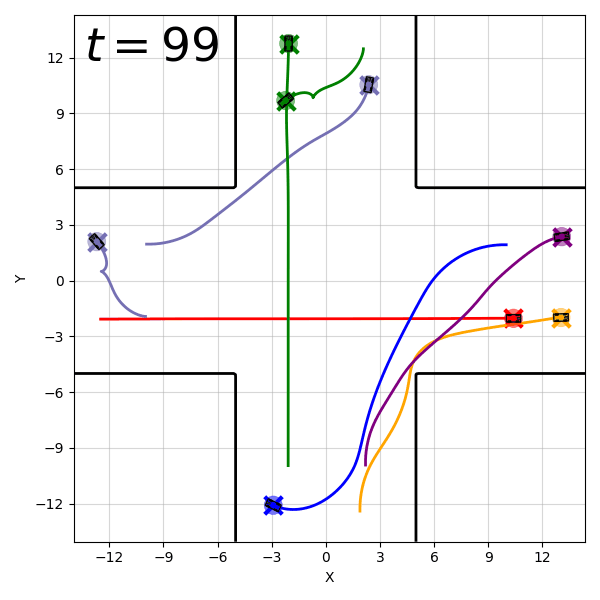}
\vspace{-6pt}
\caption{\textbf{Top: }Results of car intersection tasks on 20 unseen test instances. Error bars indicate standard deviation. \textbf{Bottom: }Trajectories of 8 vehicles performing the car intersection task generated by \deepcoordinatorcomma with snapshots from different timesteps.}
\label{fig:inter_plot_type_1}
\vspace{-6pt}
\end{figure}

\textbf{Quadrotor Obstacle Field. } We third examine a quadrotor maneuvering problem where teams of 10 quadrotors navigate a field of 7 cylindrical obstacles. To maximize interactions between agents, the two columns of agents must swap their relative positions while crossing the field. Both displacements to the quadrotors' initial altitudes and the obstacles' centers and radii are randomized across task instances. The results and a representative \deepcoordinator trajectory are shown in Fig. \ref{fig:quad_plot_type_1}.

\begin{figure}[h]
\centering
\begin{tabular}{ccc}
    \hspace{-12pt}
    \includegraphics[width=0.33\linewidth]{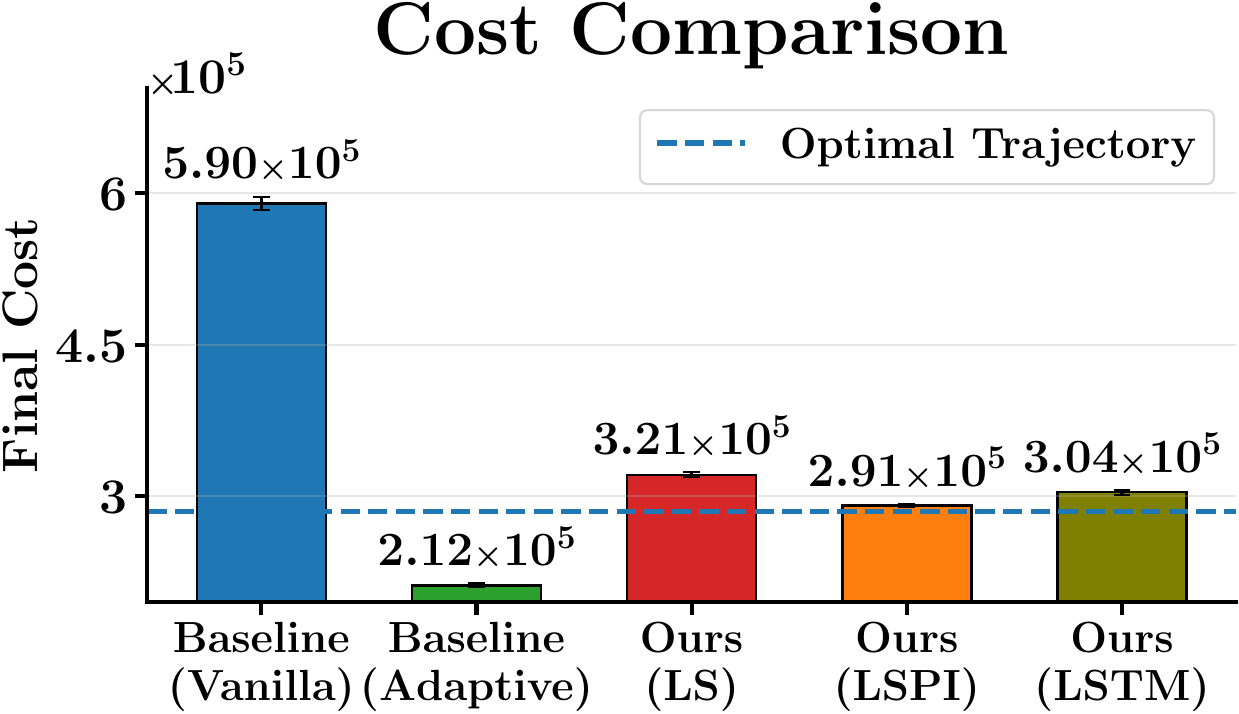} &
    \hspace{-12pt}
    \includegraphics[width=0.33\linewidth]{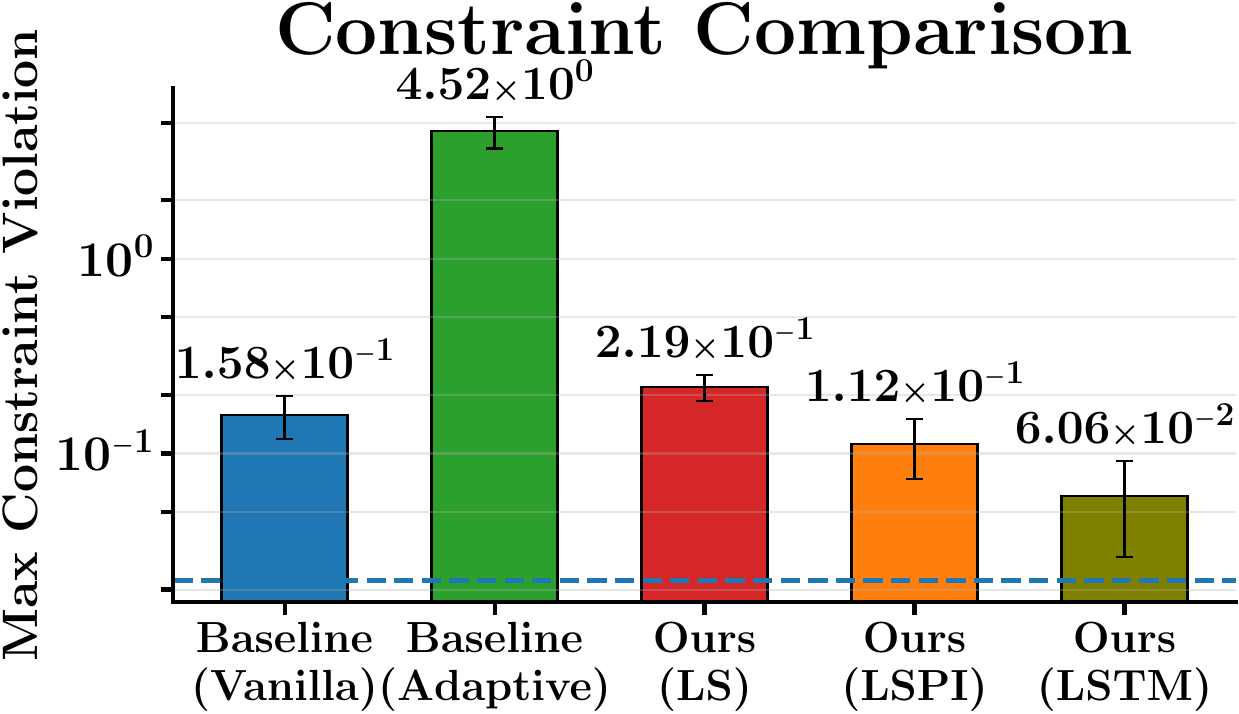} &
    \hspace{-12pt}
    \includegraphics[width=0.33\linewidth]{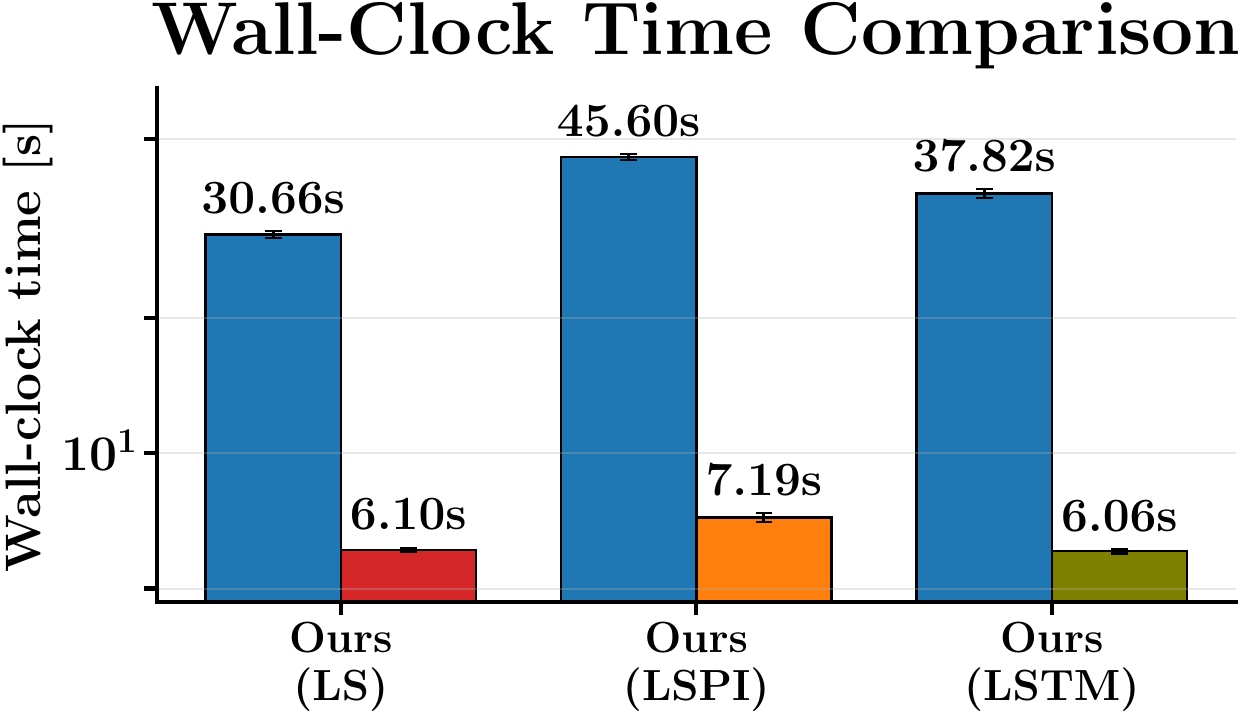}
    \hspace{-12pt}
\end{tabular} \\
\includegraphics[width=0.24\linewidth]{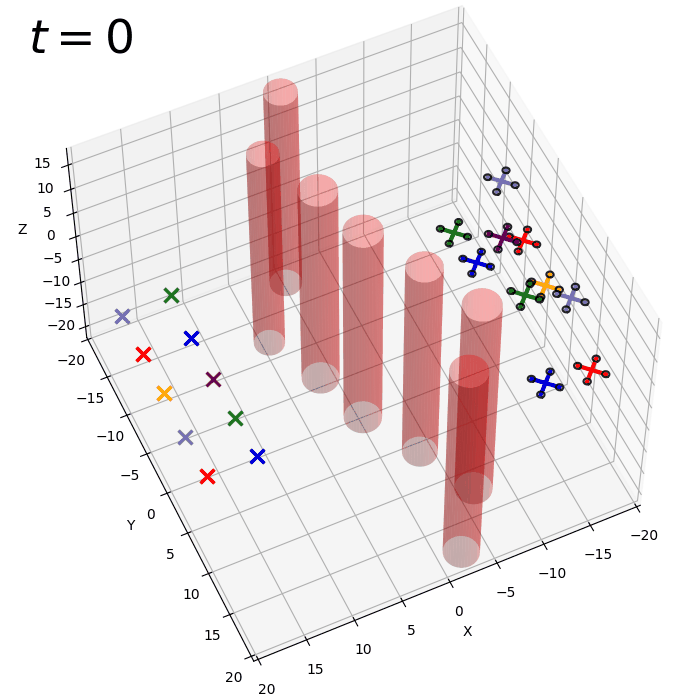}
\includegraphics[width=0.24\linewidth]{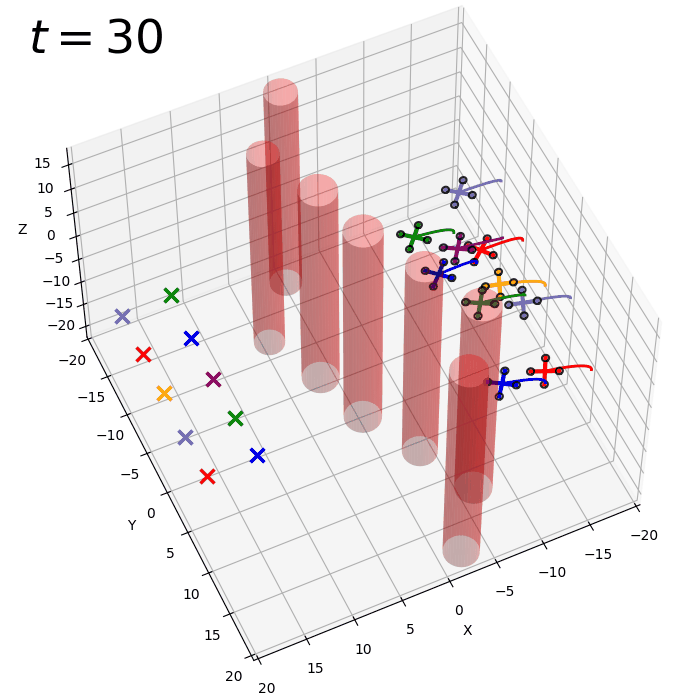}
\includegraphics[width=0.24\linewidth]{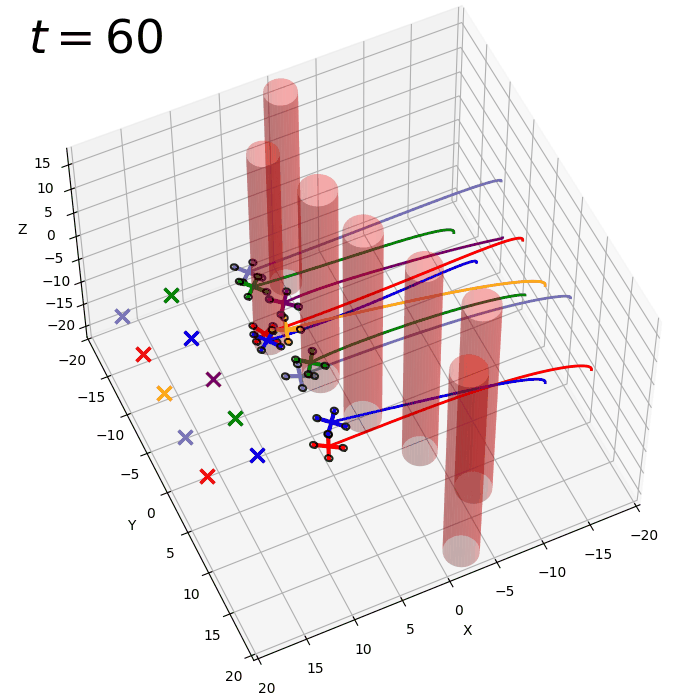}
\includegraphics[width=0.24\linewidth]{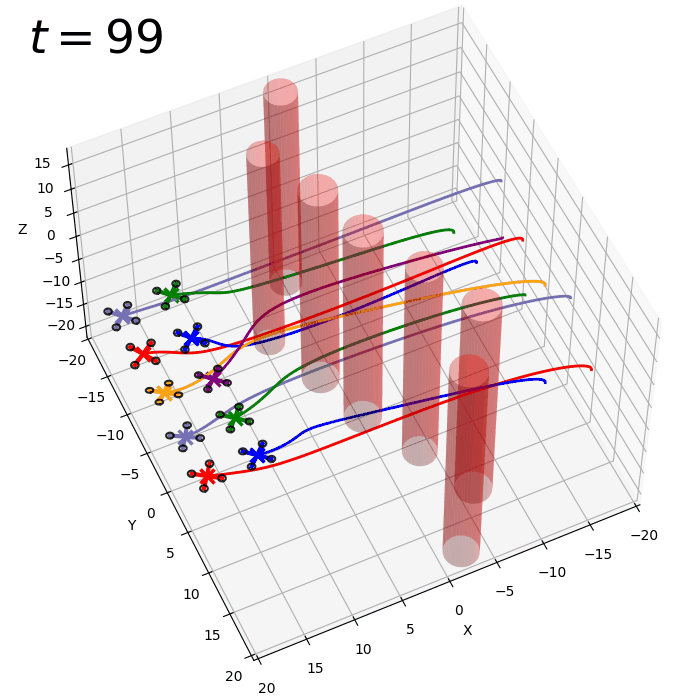}
\vspace{-6pt}
\caption{\textbf{Top: }Results of quadrotor obstacle field tasks on 20 unseen test instances. Error bars indicate standard deviation. \textbf{Bottom: }Trajectories of 10 quadrotors performing the obstacle field task generated by \deepcoordinatorcomma with snapshots from different timesteps.}
\label{fig:quad_plot_type_1}
\vspace{-6pt}
\end{figure}

For each task, we evaluate the cost, maximum constraint violation, and wall-clock time of all models on 20 unseen test problems, and average the results. The cost and constraint plots report values at the end of the $K = 30$ iteration budget. To compare wall-clock times, \deepcoordinator is run for 30 iterations. Then, in order to show the relative improvement of our method, Vanilla ADMM-DDP is run until its cost is within 5\% of Deep Coordinator's. We omit timings for Adaptive ADMM-DDP as, though it quickly finds low-cost solutions, its trajectories significantly violate safety constraints. The optimal trajectory is computed by running Vanilla ADMM-DDP until convergence. 

Our experiments indicate that \deepcoordinator significantly outperforms its traditional optimizer counterparts. We partition our analysis into three parts: within the 30-iteration budget, when allowing Vanilla ADMM-DDP to run until it reaches the same cost as \deepcoordinatorcomma and compared to the optimal solution.

Within the 30-iteration budget, Deep Coordinator finds much higher-quality solutions than its traditional counterparts. The LSTM policy, in particular, Pareto dominates Vanilla ADMM-DDP. On the car and quadrotor obstacle field tasks, LSTM \deepcoordinator is able to find solutions with significantly lower cost (1.94-3.59x) and constraint violation (2.61-5.40x) than Vanilla ADMM-DDP. In the intersection task, where the agents must manage more complex right-of-way interactions, the performance benefit of \deepcoordinator is lower, but still yields 2.82x lower cost and 1.22x lower constraint violation trajectories. Adaptive ADMM-DDP quickly converges to slightly lower cost solutions than \deepcoordinatorcomma but does so with much higher constraint violation---up to an order of magnitude greater than the other solvers---and is therefore unsafe to use. 

The most significant benefit of the \deepcoordinator framework is the reduction in wall-clock time required to reach high-quality solutions. When allowing Vanilla ADMM-DDP to run for longer, \deepcoordinator finds trajectories of comparable cost \textbf{6.18-9.44x faster} with comparable constraint violation. For additional context, figures depicting the cost and constraint violation over an extended wall-clock time are in Appendix~\ref{app:extended_wall_clock_time_plots}. We emphasize that the cost-thresholded Vanilla solution is not the optimal trajectory; rather, it is an intermediate iterate that has not yet converged and, in general, exhibits higher constraint violation. Compared to the optimal trajectories, \deepcoordinator finds solutions of comparable cost, but higher constraint violation. This is expected, as \deepcoordinator is significantly constrained by its iteration budget, taking 13.8-18.6x less time than it takes to converge to the optimal solution. 

\begin{figure}[h]
    \centering
    \begin{tabular}{ccc}
    \hspace{-10pt}
    \includegraphics[width=0.33\linewidth]{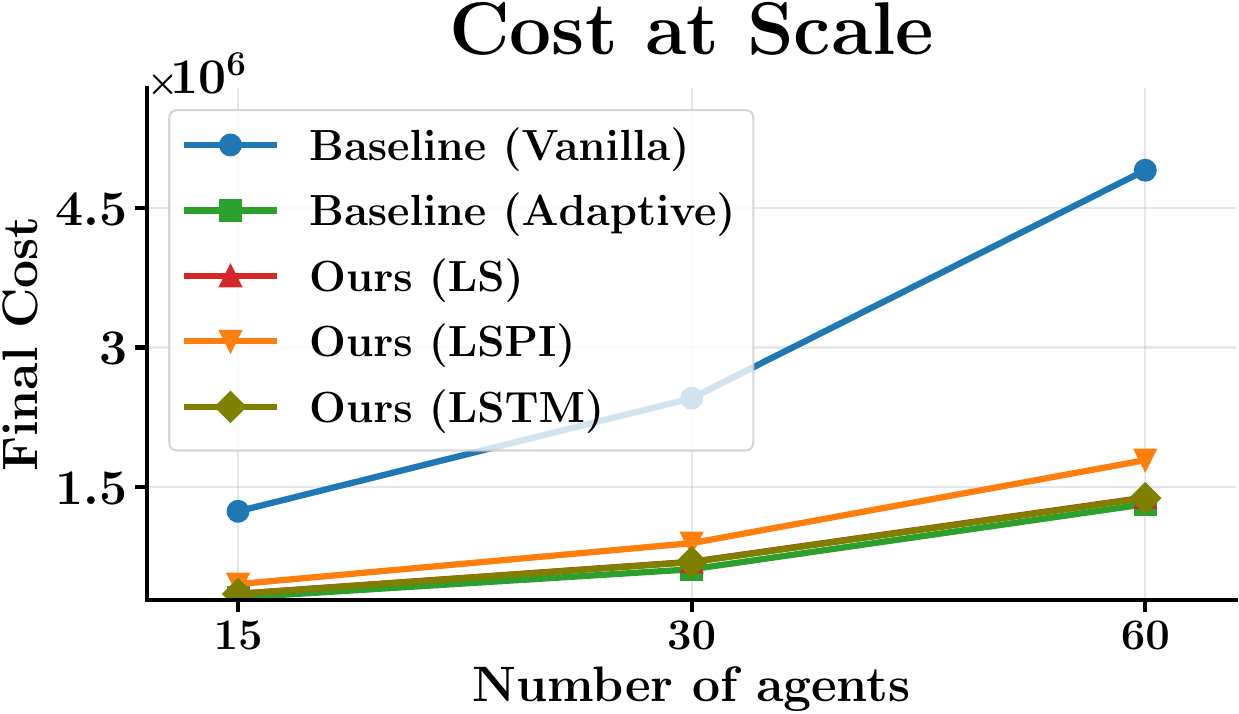} &
    \hspace{-14pt}
    \includegraphics[width=0.33\linewidth]{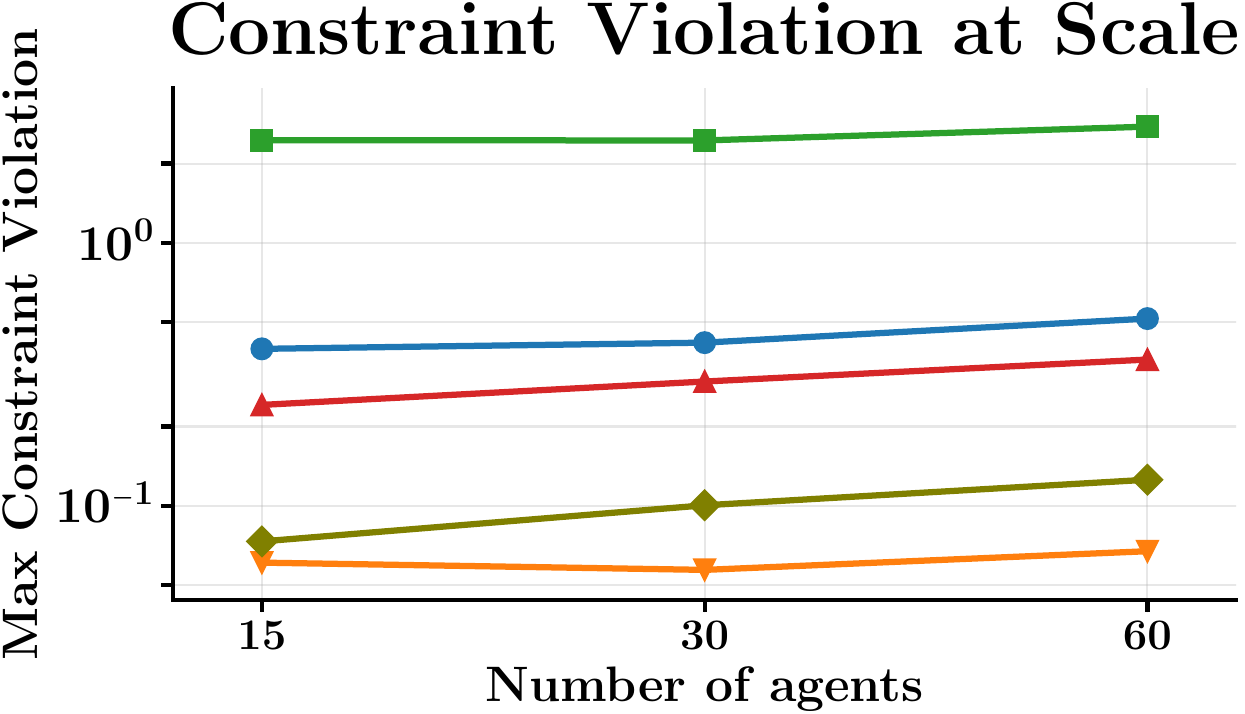} &
    \hspace{-14pt}
    \includegraphics[width=0.33\linewidth]{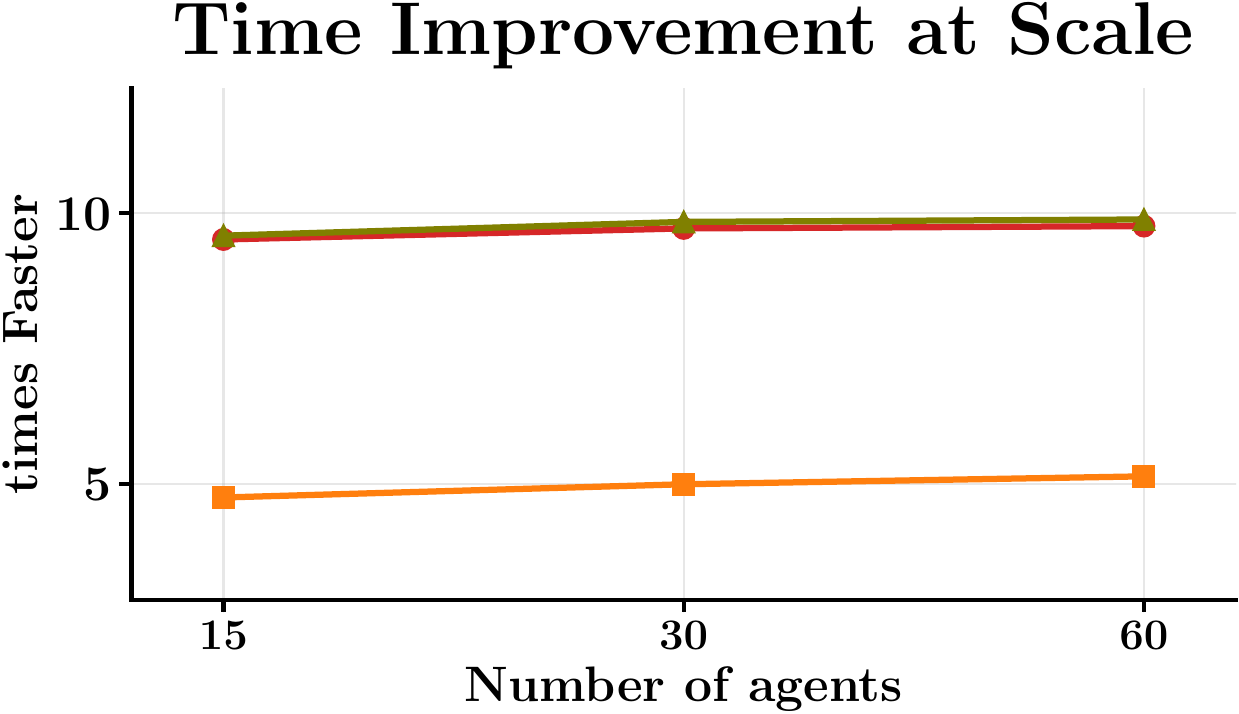} 
    \hspace{-10pt} \\
    \hspace{-10pt}
    \includegraphics[width=0.33\linewidth]{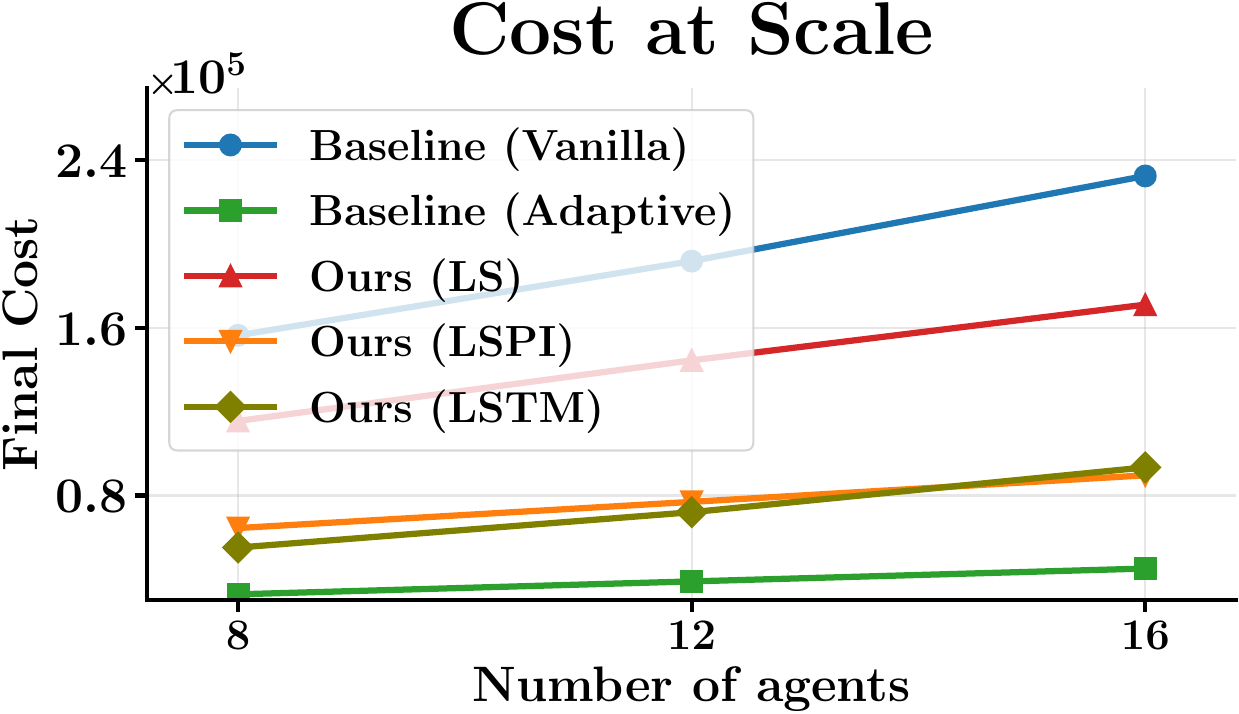} &
    \hspace{-14pt}
    \includegraphics[width=0.33\linewidth]{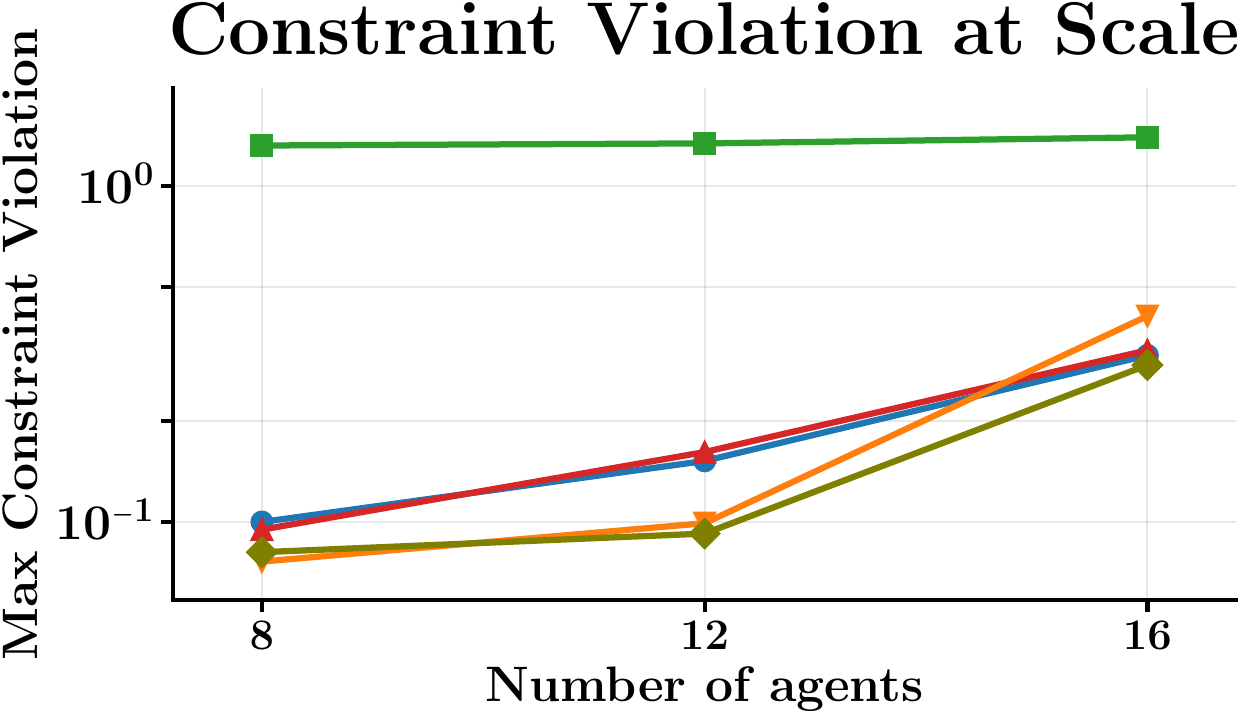} &
    \hspace{-14pt}
    \includegraphics[width=0.33\linewidth]{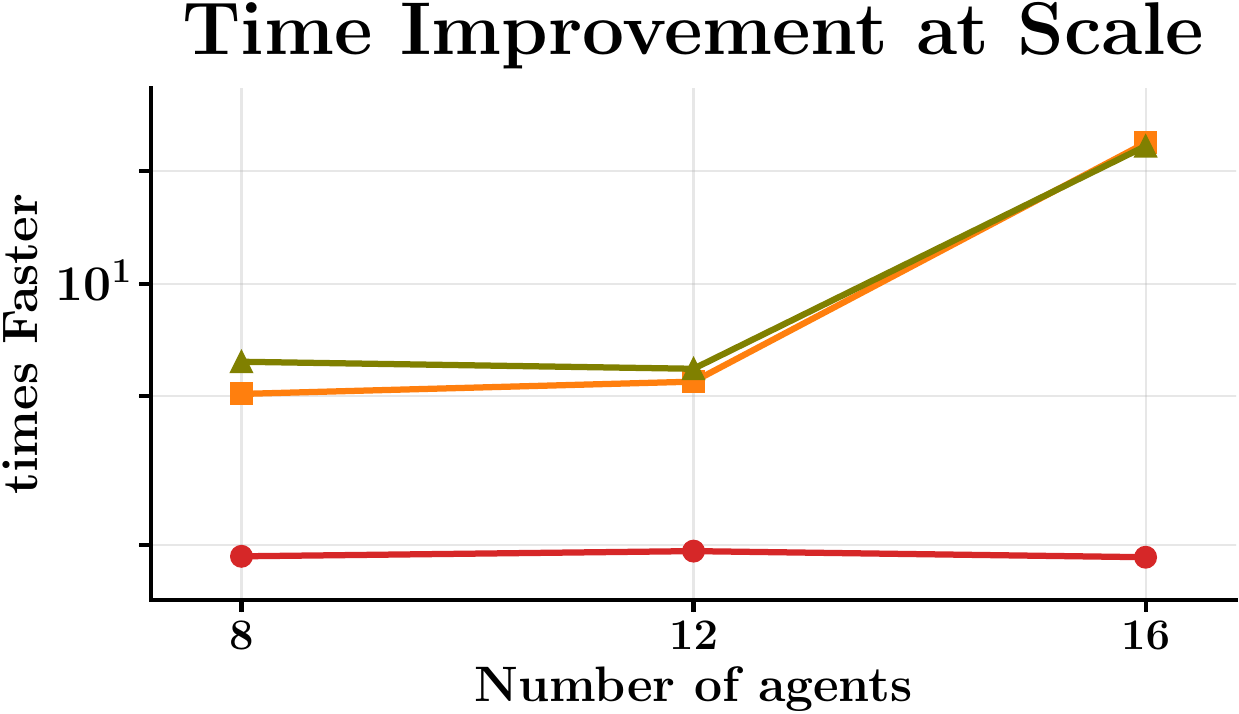} 
    \hspace{-10pt} \\
    \hspace{-10pt}
    \includegraphics[width=0.33\linewidth]{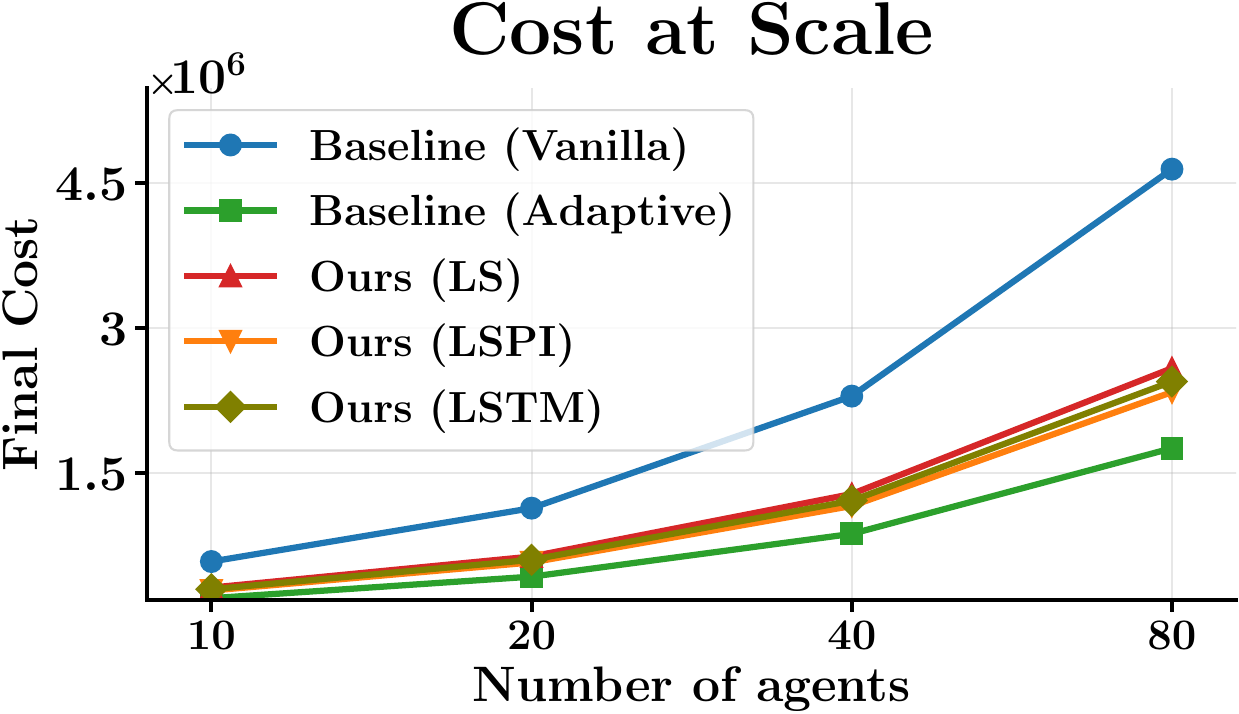} &
    \hspace{-14pt}
    \includegraphics[width=0.33\linewidth]{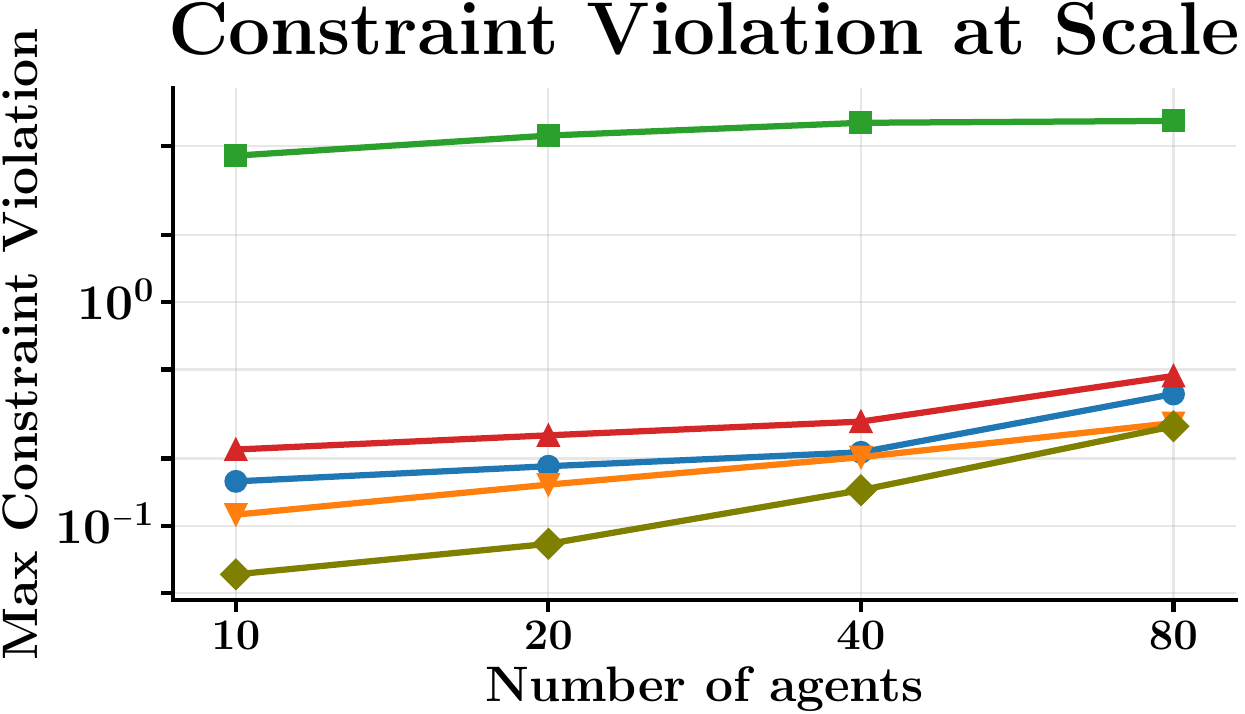} &
    \hspace{-14pt}
    \includegraphics[width=0.33\linewidth]{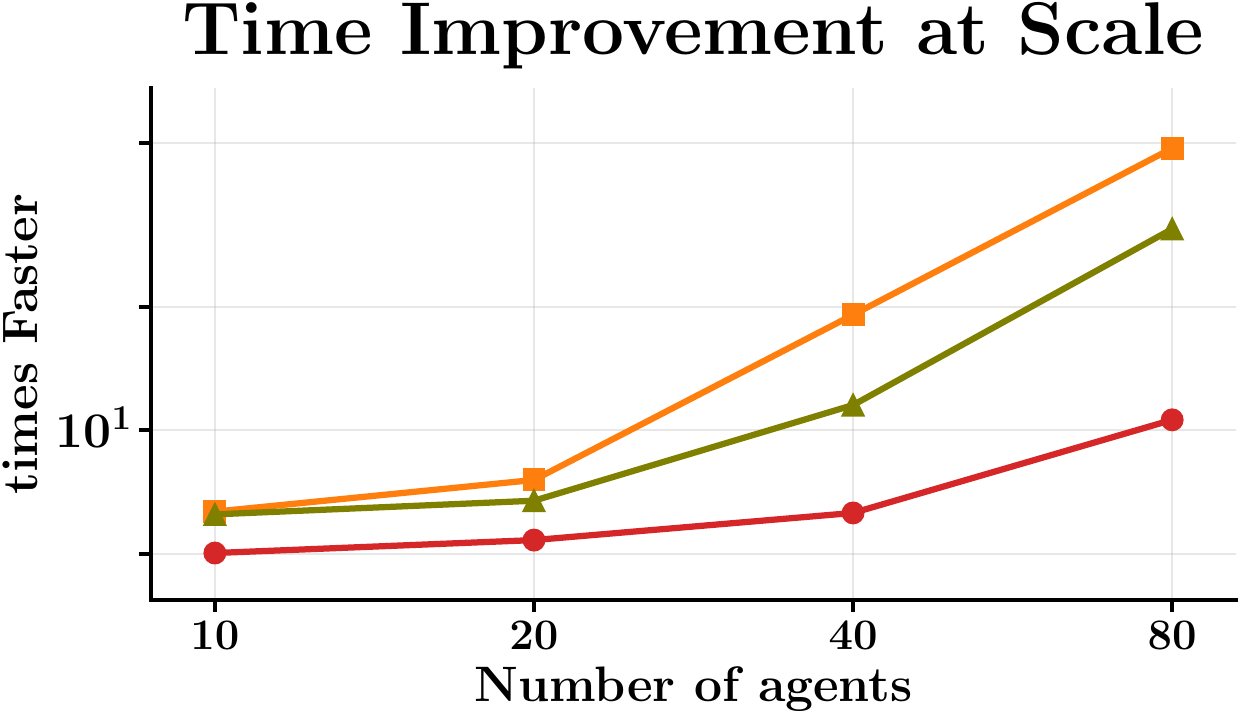} 
    \hspace{-10pt}
    \end{tabular}
\vspace{-6pt}
\caption{Deployment results as the number of agents scales. For each task type and scaling factor, we test on 20 problems and average the results. From top to bottom, the rows correspond to the car obstacle field, car intersection, and quadrotor obstacle field task, respectively.
% Detailed numerical results and plotting details are provided in Table~\ref{tab:scaling_details}.
}
\vspace{-12pt}
\label{fig:scaling}
\end{figure}

\subsection{Scaling Experiments}
Next, we study how Deep Coordinator's learned update policies generalize to larger and different teams of agents than those seen during training. We deploy the trained \deepcoordinator models to scaled tasks. For each task type and scaling factor, we test on twenty problems and average the results. Fig. \ref{fig:scaling} summarizes the results, while a table of numerical values is in Table~\ref{tab:scaling_details} in Appendix~\ref{app:detailed_experimental_results}. For details on scaled problem generation, consult Appendix~\ref{app:details_on_scaled_problem_types}.

In the car and quadrotor obstacle field tasks, scaling largely preserves the local structure observed by each agent. All three \deepcoordinator parameterizations generally preserve their advantage over Vanilla ADMM-DDP as the number of agents increases. 

The car intersection task presents a more significant challenge, as increasing the number of agents fundamentally alters the underlying coordination problem and the optimal right-of-way behavior. Nevertheless, \deepcoordinator continues to find significantly lower-cost and higher-constraint satisfaction solutions than Vanilla ADMM-DDP, while retaining its 5x speedup in the LSPI and LSTM cases.

Standard black-box neural networks trained on such a narrow and small dataset of tasks typically perform poorly in such out-of-distribution regimes due to overfitting to the agent topologies and spatial trajectories found in the training dataset. Deep Coordinator avoids this by delegating the specific inter-agent coordination behavior to the ADMM-DDP optimizer and instead learning to control the optimization process itself. The feedback policies it learns are local, taking ADMM quantities specific to a single (agent, timestep) pair which reflect whether that agent's trajectory is consistent with its neighbors', regardless of how many neighbors are present or how they are arranged. These signals remain structurally similar across tasks, meaning the policy stays in-distribution with respect to the optimizer, even when deployed on problems well outside of its training distribution. In this sense, the architectural prior of the ADMM-DDP structure, rather than the expressive capacity of the neural network alone, enables Deep Coordinator's generalization.

We perform additional experiments and analysis ablating the impact of feedback and empirically validating the performance improvements of the unsupervised loss in Appendix~\ref{app:additional_experiments}. 

\newpage
\section{Limitations} 
In principle, non-convex deep-unfolded optimizers can be trained to find globally-optimal solutions. However, our work is focused on accelerating the convergence of ADMM-DDP to high-quality local optima. Training to find global solutions would require incorporating a mechanism to explore the optimization landscape (e.g., training with the exploration mechanism of RL or unfolding a global optimizer). Future work will explore training with RL and unfolding global optimizers to train models which can quickly converge to the globally-optimal trajectories. 

Furthermore, while \deepcoordinator performs well within its iteration budget, it has no canonical mechanism to extend the optimization process beyond this, if needed. In principle, the Learned Scalar and LSTM policies can be deployed for more than $K$ ADMM-DDP iterations; however, there is no guarantee that the policy will generalize well beyond this point. Other deep-unfolding work
\citep{sambharya2024learning} proposes a progressive training scheme, which is a promising mechanism to address this.

\bibliography{references}  % .bib

\newpage
\appendix

\newpage
\section{Detailed Description of the ADMM-DDP Subproblems}
\label{app:sec:subproblems_of_admm_ddp}

In this appendix, we provide the mathematical breakdown of each subproblem of ADMM-DDP.

\textbf{Subproblem 1. } 
For each agent $i = 1, ..., N$, we solve
\begin{equation} \label{eq:subproblem_1}
\begin{aligned}[c]
    \bx_i^{a+1}, \bu_i^{a+1} = \argmin_{\{\bx_i, \bu_i\}} \hquad & \sum_{t=0}^{T-1} \hat{\paramell{}}_{i, t}(\bx_{i, t}, \bu_{i, t}) + \hat{\paramell{}}_{i, T}(\bx_{i, T}) \\
    \text{subject to} \hquad & \bx_{i,t+1} = \f{i}(\bx_{i,t}, \bu_{i,t})
\end{aligned}
\end{equation}
where the augmented stage costs are defined as
\begin{equation*}
\begin{aligned}
    \hat{\paramell{}}_{i, t}(\bx_{i, t}, \bu_{i, t}) &= \paramell{i,t}(\bx_{i, t}, \bu_{i, t}) + \frac{\paramrho{i, t}}{2} \left\| \bx_{i,t} - \tx_{i,t}^a + \paramrho{i, t}^{-1} \blambda_{i,t}^a \right\|_2^2 + \frac{\parammu{i, t}}{2} \left\| \bu_{i,t} - \tu_{i,t}^a + \parammu{i, t}^{-1} \bxi_{i,t}^a \right\|_2^2,
\end{aligned}
\end{equation*}

and the terminal cost is defined as
\begin{equation*}
\begin{aligned}
    \hat{\paramell{}}_{i,T}(\bx_{i, T}) &= \paramell{i, T}(\bx_{i, T}) + \frac{\paramrho{i, T}}{2} \left\| \bx_{i,T} - \tx_{i,T}^a + \paramrho{i, T}^{-1} \blambda_{i,T}^a \right\|_2^2.
\end{aligned}
\end{equation*}

\textbf{Subproblem 2. } 
For each time $t = 0, \ldots, T - 1$, we solve
\begin{equation} \label{eq:subproblem_2}
\begin{aligned}[c] 
    \hspace{-1em}\tx_t^{a+1}, \tu_t^{a+1} = \argmin_{\tx_t, \tu_t} & \sum_{i=1}^N \Biggl[ \frac{\paramrho{i, t}}{2} \left\| \bx_{i,t}^{a+1} - \tx_{i,t} + \paramrho{i, t}^{-1} \blambda_{i,t}^a \right\|_2^2 + \frac{\parammu{i, t}}{2} \left\| \bu_{i,t}^{a+1} - \tu_{i,t} + \parammu{i, t}^{-1} \bxi_{i,t}^a \right\|_2^2 \Biggr] \\
    \text{subject to} \hquad & \g{i, t}(\tx_{i,t}, \tu_{i,t}) \leq 0, \\
    & \h{ij,t}(\tx_{i,t}, \tx_{j,t}, \tu_{i,t}, \tu_{j,t}) \leq 0.
\end{aligned}
\end{equation}

\noindent
For time $t = T$, the problem is of the form
\begin{equation}
\begin{aligned}
    \tx_T^{a+1} = \argmin_{\tx_T} \hquad & \sum_{i=1}^N \frac{\paramrho{i, T}}{2} \left\| \bx_{i,T}^{a+1} - \tx_{i,T} + \paramrho{i, T}^{-1} \blambda_{i,T}^a \right\|_2^2, \notag \\
    \text{subject to} \hquad & \g{i, T}(\tx_{i,T}) \leq 0, \notag \\
    & \h{ij,T}(\tx_{i,T}, \tx_{j,T}) \leq 0.
\end{aligned}
\end{equation}

\textbf{Subproblem 3. } 
Across all agents and timesteps, we update the dual variables with
\begin{equation} \label{eq:subproblem_3}
\begin{aligned}[c]
    \blambda_{i,t}^{a+1} &= \blambda_{i,t}^a + \paramrho{i, t} (\bx_{i,t}^{a+1} - \tx_{i,t}^{a+1}), \\
    \bxi_{i,t}^{a+1} &= \bxi_{i,t}^a + \parammu{i, t} (\bu_{i,t}^{a+1} - \tu_{i,t}^{a+1}).
\end{aligned}
\end{equation}

\newpage
\section{Choosing a Feedback Scheme}\label{app:choosing_a_feedback_scheme}
There are many reasonable choices of feedback policies $\pi_{w^a}$. For example, a straightforward approach is to learn $\sigma$ and $\chi_{\textrm{\{incr},  \,\textrm{decr\}}}$ in the ADMM-DDP penalty parameter adaptation rule proposed in~\citep{saravanos2022distributed}. However, given the distributed nature of ADMM-DDP, we particularly desire feedback schemes that have the capacity to:
\begin{enumerate}
    \item Independently adapt each element of the penalty vectors $\boldsymbol{\rho}^a \in \R^{N \times (T + 1) \times n_x}$ and $\boldsymbol{\mu}^a \in \R^{N \times T \times n_u}$, and
    \item Deploy to problems with longer time horizons and larger or different graph topologies underlying the agents than trained on
\end{enumerate}
The first property allows for policies that capture complex relationships between hyperparameters that would be impossible to represent with simpler schemes (e.g., re-scaling a fixed vector). The second property not only increases the flexibility of the trained optimizer, but also allows models to be trained on smaller problems then deployed to much larger ones. This significantly reduces the computational burden of training. 

Since the penalty vectors scale with the number of agents and timesteps, any policy that has a fixed output size---such as a single MLP---cannot have property two. To address this, we consider \emph{shared per-agent, per-timestep policies} which utilize shared weights. Under this scheme, a shared policy $\varphi_{w^a}$ maps agent- and timestep-specific inputs to the associated components of the hyperparameter vector. While the policy weights are shared, the varying inputs enable the network to provide context-aware local updates. 

To ensure scalability, these inputs are derived from $\boldsymbol{\theta}^{a - 1}$, $\bv^{a-1}$ and $\boldsymbol{\chi}$ in a manner which is invariant to the problem dimension; examples of such inputs include the relevant components of the ADMM residual vector, dual variables, or trajectories. The overall feedback scheme can be formulated as
\begin{equation*}
\begin{split}
    (\pi_{w^a})_{i, t} = \varphi_{w^a} \circ \psi_{i, t},
\end{split}
\end{equation*}
where $\psi_{i, t}$ extracts the local inputs for agent $i$ and timestep $t$. 

\newpage
\section{Illustrative Example For The Supervised vs Unsupervised Loss}\label{app:illustrative_example}
To build intuition for why the supervised framework can be poorly suited for training non-convex deep-unfolded models, we present the following illustrative 1D example employing a simpler optimizer. 
Consider the task of unfolding $K = 10$ iterations of a gradient descent algorithm, characterized by the update equation
\begin{equation}
\begin{split}
    x^{a + 1} = x^{a} - \alpha \nabla_x f(x^a) 
\end{split}
\end{equation}
where $f$ is the objective function of the current problem. In this case, we can learn a scalar per iteration for the step-size hyperparameter $\alpha$. Consider the following training problem
\begin{equation}
\begin{split}
    \min_{\bx \in \mathbb{R}} \ \left(x-0.3\right)^{8}-\left(x-0.25\right)^{2}+0.55475
\end{split}
\end{equation}
where the constant term is added to set the global minimum to zero. This function exhibits a local minimum at $x \approx -0.4850$ and a global minimum at $x \approx 1.1017$. 

Suppose we train the resulting unfolded model using the supervised scheme introduced in Section~\ref{subsection:trainingloop} with the global minimizer as the ground-truth solution. Lastly, suppose the initial iterate $x^0$ is zero. We've plotted the objective function, along with the resulting loss function and the initial iterate in Fig.~\ref{fig:functions_plot}.

\begin{figure}[htbp]
    \centering
    \includegraphics[width=0.6\textwidth]{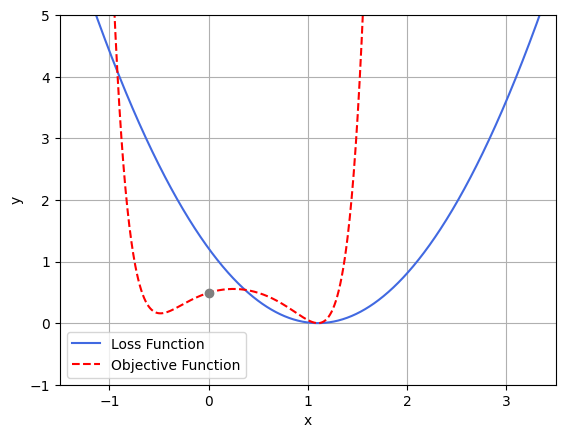}
    \caption{A comparison of the loss function (solid blue) and objective function (dashed red) in the illustrative example. The initial iterate is shown as a grey dot.}
    \label{fig:functions_plot}
\end{figure}

\noindent
With sufficiently small initial alpha values, the optimizer will converge to the local minimum at $x \approx -0.4850$. This is suboptimal with respect to the loss, which penalizes distance away from the global minimizer. 

To find the loss-minimizing solution, the upper-level training scheme is forced to  discover that large step sizes cause the optimizer to jump to the other basin. If it is unable to do so, the locally loss-minimizing choice of alpha---among those which do not cause the optimizer to ascend on the objective function---is to set all alpha values to zero. This prevents the optimizer from moving further away from the initial iterate. In this manner, when the initial iterates of an unfolded optimizer begin in a basin which does not contain the ground-truth solution, the training scheme may push the parameters to degenerate values.

\newpage
\section{Additional Analysis and Experiments}\label{app:additional_experiments}

\subsection{Ablating the Impact of Feedback}
We now compare the performance of the three feedback policy parameterizations, based on the results in Section \ref{sec:experimental_results}. The Learned Scalar policy represents the optimal hand-tuning of the penalty parameters. We, therefore, expect the more sophisticated policies to outperform LS. Indeed, they do. The LSTM policy, in particular, achieves lower cost and constraint violation than LS in all cases. 

We also observe that the LSTM exhibits favorable behavior compared to LSPI. In the quadrotor obstacle field task, the LSTM achieves moderately (1.85x) lower constraint violation while incurring slightly higher (1.05x) cost. In the car obstacle field and car intersection tasks, the LSTM achieves similar (0.83x-0.93x lower) constraint violation while incurring lower costs (1.17-1.31x) than LSPI. This improvement is meaningful, but lower than in the quadrotor case. The difference in the prioritization of cost and constraint satisfaction is determined by the choice of scaling parameters in the unsupervised loss. In principle, this difference could be alleviated by performing a finer-grained sweep of these parameters.

\begin{figure}[h]
\hspace{-16pt}
\centering
    \begin{tabular}{cc}
    \vspace{-2pt}
    \includegraphics[width=0.49\linewidth]{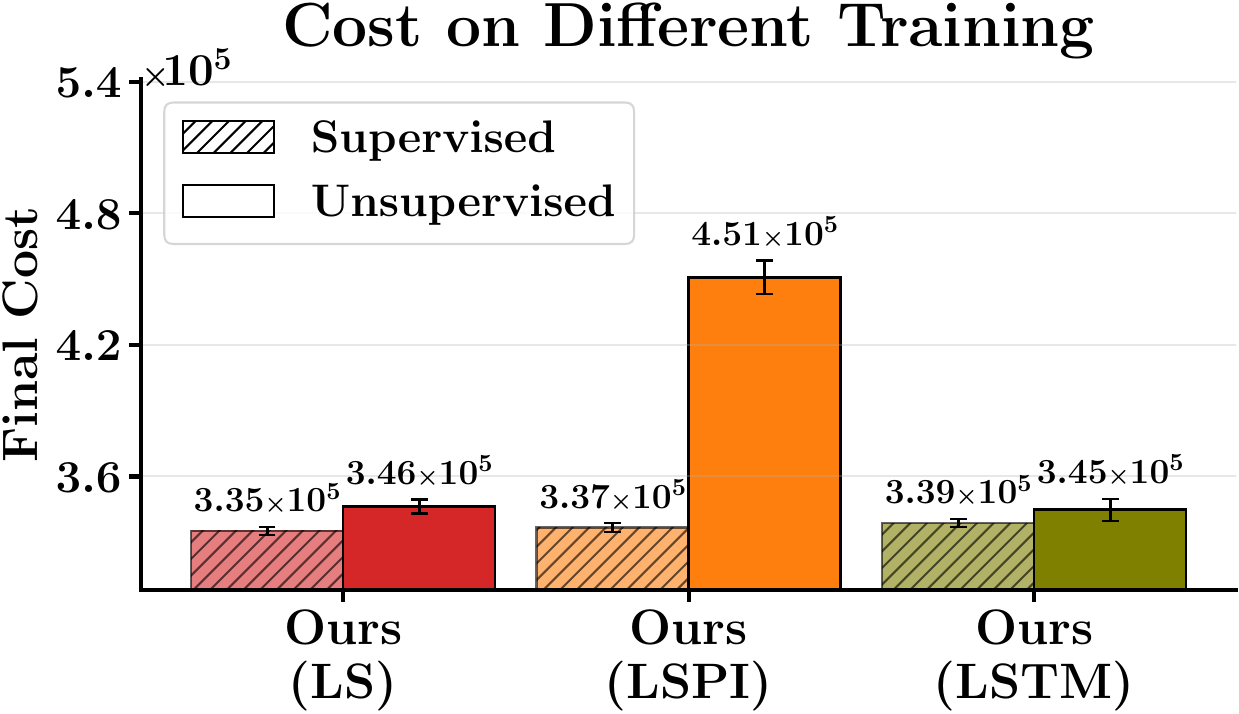} &
\hspace{-12pt}
    \includegraphics[width=0.49\linewidth]{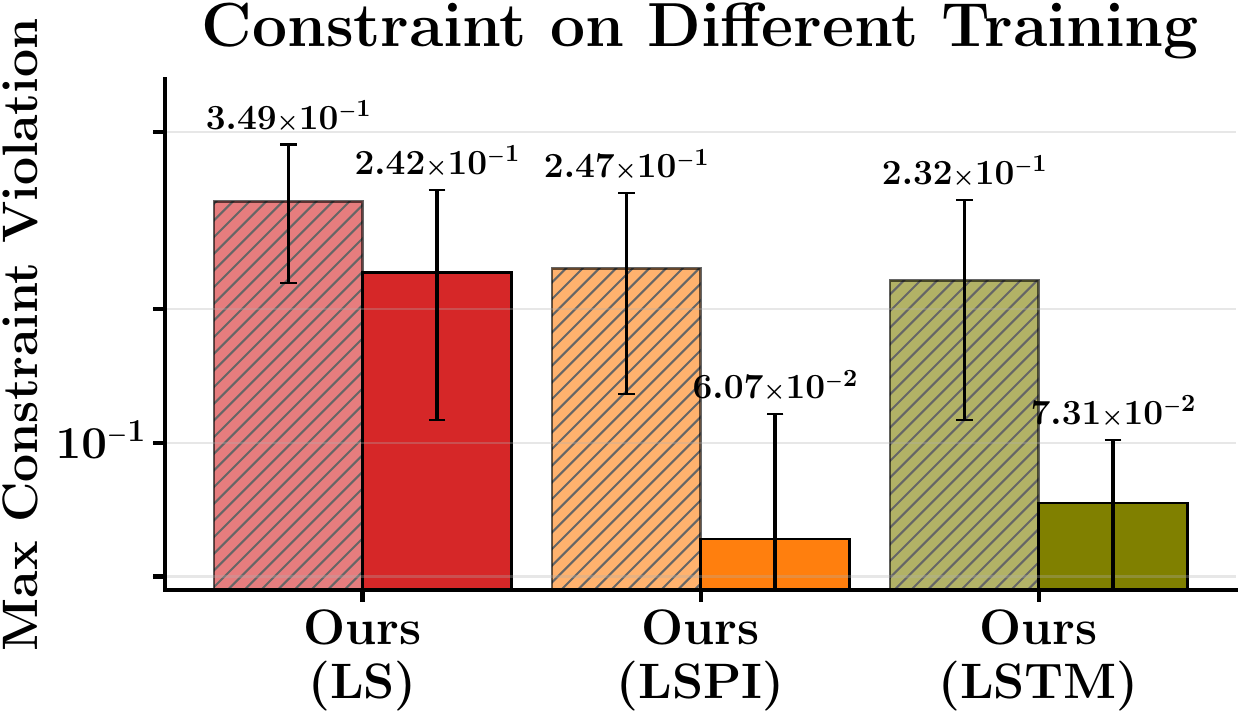} \\
    \vspace{-2pt}
    \includegraphics[width=0.49\linewidth]{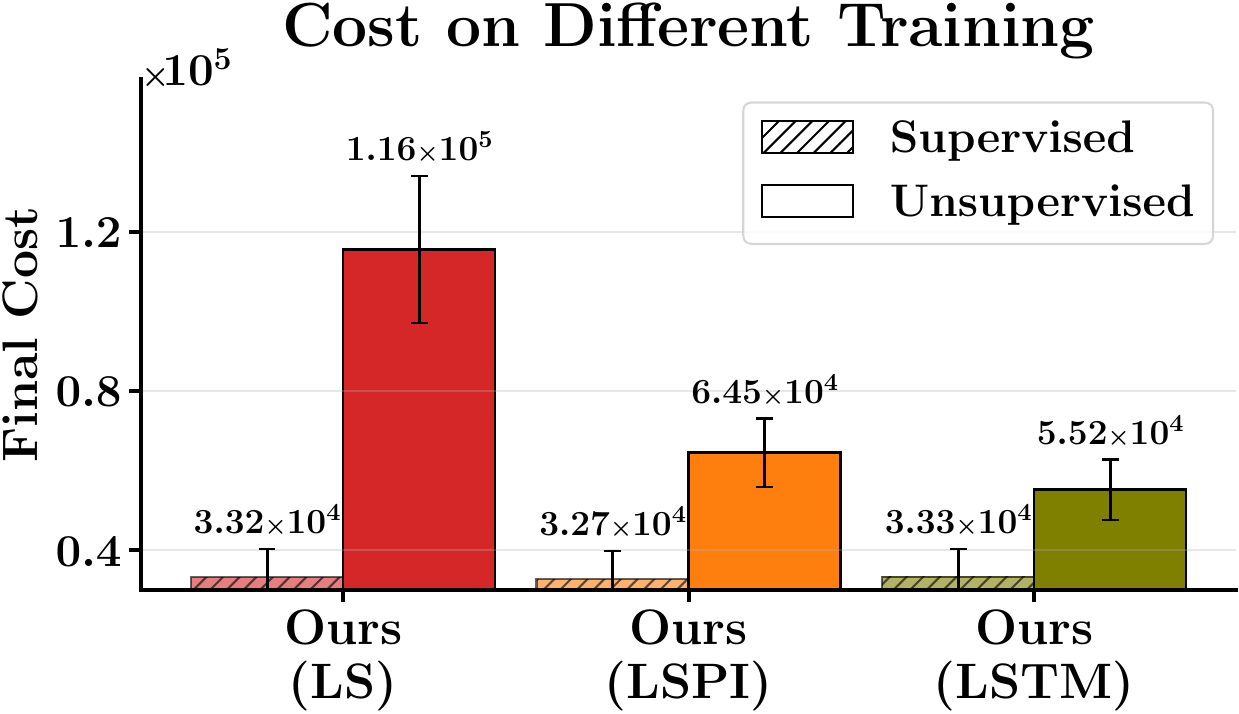} &
\hspace{-12pt}
    \includegraphics[width=0.49\linewidth]{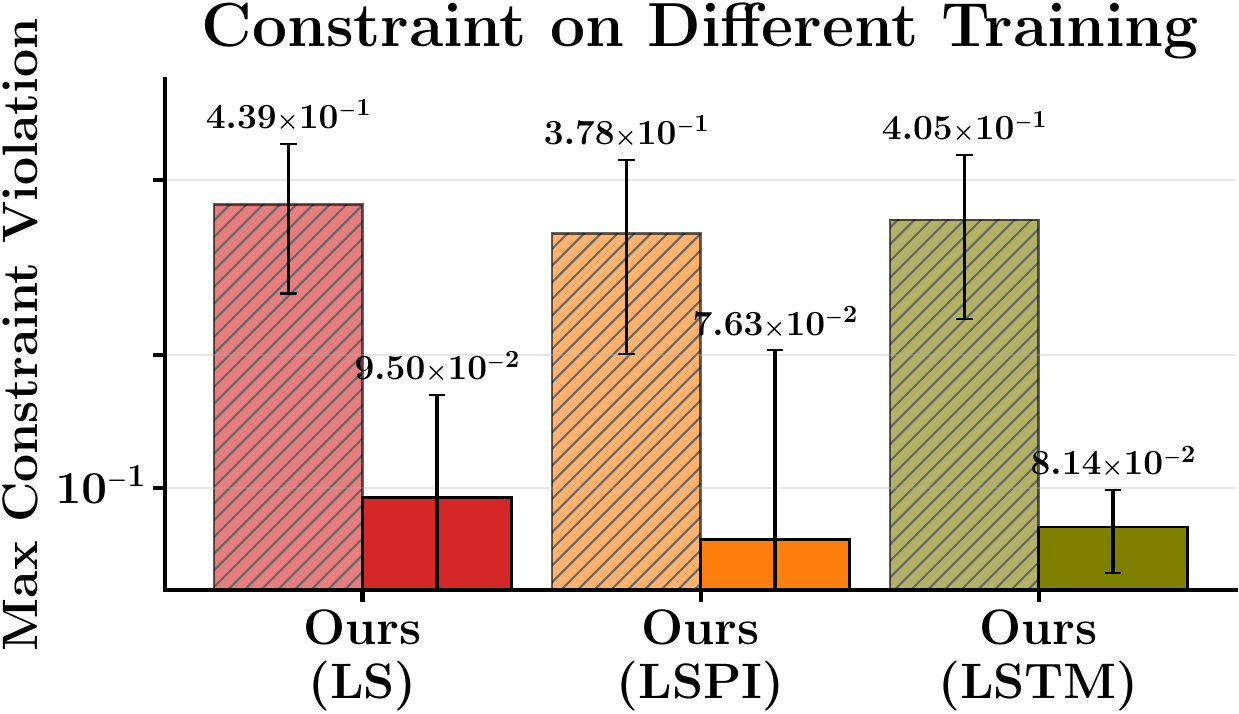} \\
    \vspace{-2pt}
    \includegraphics[width=0.49\linewidth]{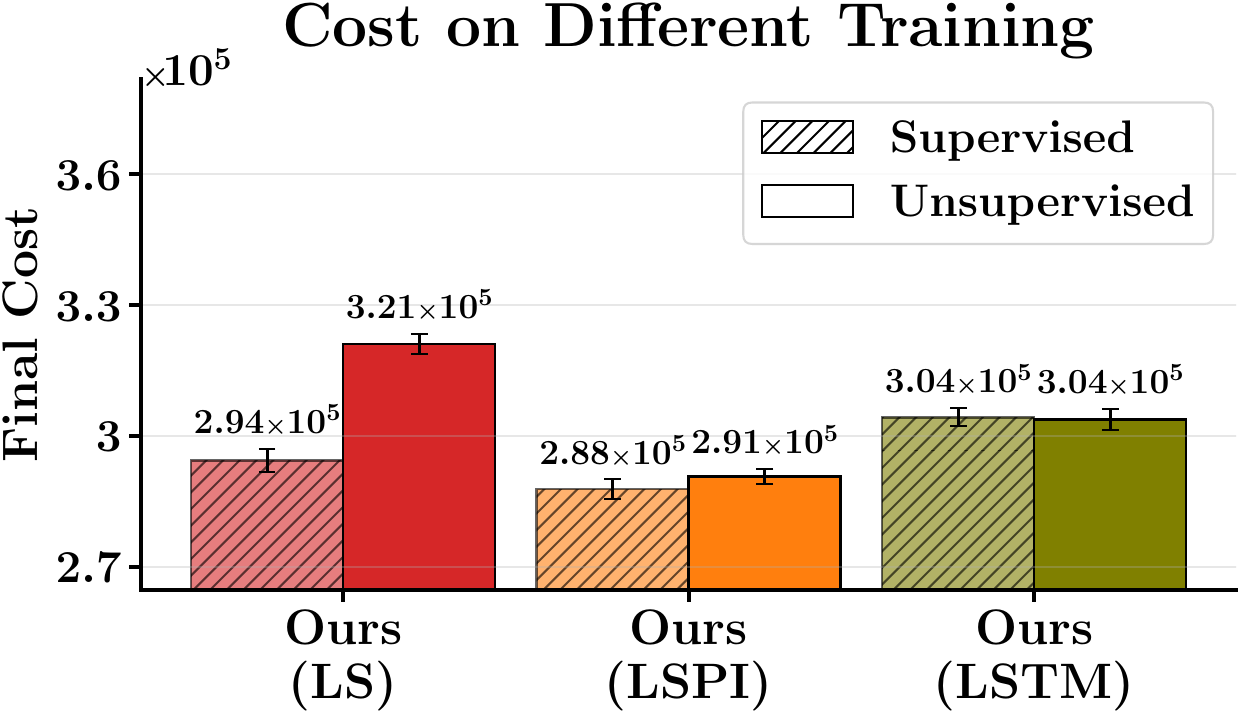} &
\hspace{-12pt}
    \includegraphics[width=0.49\linewidth]{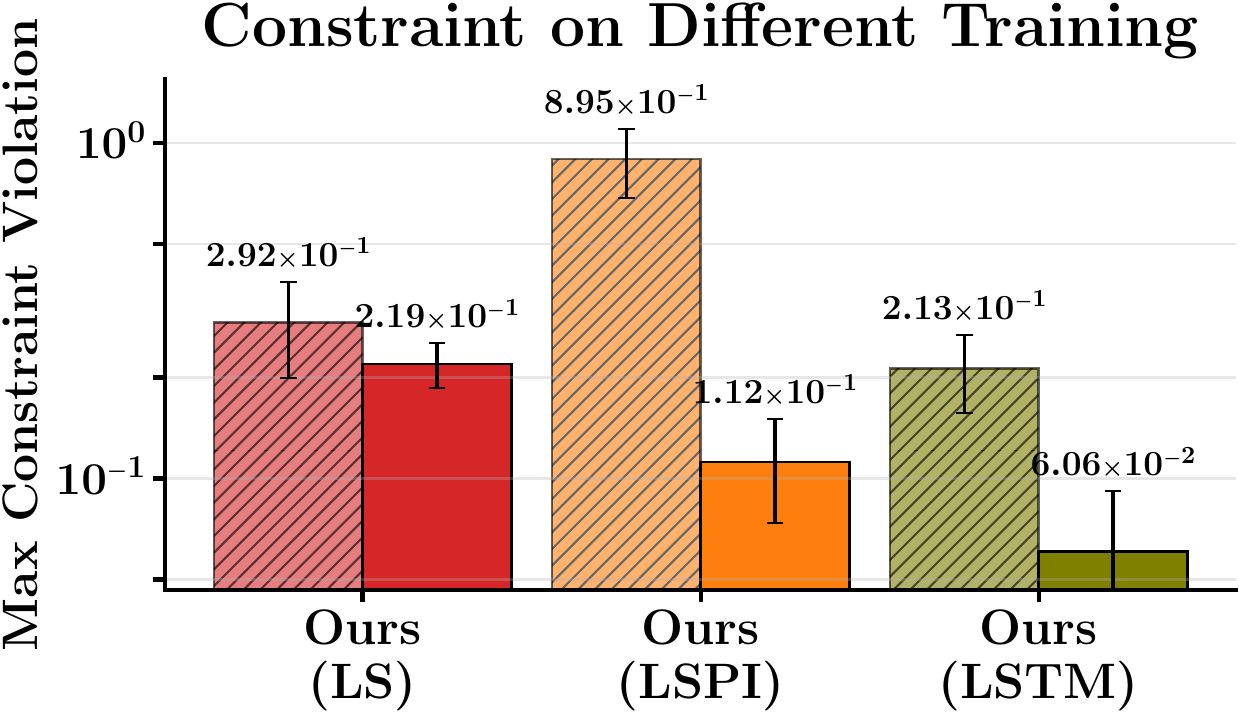}
    \end{tabular}
\hspace{-16pt}
\caption{Comparison of cost and constraint values generated by policies trained under different training schemes. Hatched bars correspond to policies trained using the supervised scheme, while plain bars correspond to policies trained using the unsupervised scheme. The rows correspond to car obstacle field (top), car intersection (middle), and quadrotor obstacle field (bottom). Error bars indicate standard deviation.}
\label{fig:sup_vs_unsup}
\end{figure}

In addition to its favorable performance over the other policies, the LSTM adds minimal overhead in wall-clock time. This is because the computational cost of solving the inner-loop ADMM-DDP subproblems is significantly higher than the cost of forward-propagating the LSTM. Furthermore, the LSTM achieves \emph{lower} wall-clock time than LSPI in the quadrotor case. This is due to the penalty parameters predicted by LSPI yielding inner-loop ADMM-DDP subproblems which take longer to solve.

A candidate explanation for the superior performance of the LSTM is that the increased number of parameters in the LSTM-based policies allow the model to find a better (potentially degenerate, constant) solution rather than actually incorporating feedback. Analysis of the policy outputs demonstrates that this is not the case. For example, the LSTM trained on the car obstacle field task produces high variance outputs with values below $10^1$ and above $10^2.$ We have included figures showing a representative sample of the outputs of each policy in Appendix~\ref{app:sample_policy_output}.

\subsection{Comparing the Unsupervised and Supervised Losses}
Finally, we compare the impact of our supervised and unsupervised loss functions on Deep Coordinator's performance by evaluating the cost and maximum constraint violation achieved by models trained on the two loss functions. For a detailed description of the supervised and unsupervised training schemes, consult Appendix~\ref{app:training_setup_details}. The results are shown in Fig. \ref{fig:sup_vs_unsup}. 

The supervised loss yields \deepcoordinator models that often find lower cost trajectories than the unsupervised loss. In particular, the cost achieved by supervised \deepcoordinator models on the car intersection task is 1.66-3.49x lower than that achieved by unsupervised \deepcoordinator models. However, the resulting trajectories are often degenerate, and heavily violate constraints. Additionally, 3 of the 9 supervised policies produced numerically unstable inner-loop problems when attempting to scale, failing to complete their iteration budget. This further indicates the degenerate nature of the trajectories produced by the supervised method. 

Across our three problems, the unsupervised \deepcoordinator models typically yield significantly lower constraint violation than their supervised counterparts. On car problems, LSPI and LSTM \deepcoordinator models achieve between 3.17-4.98x lower constraint violation when trained using the unsupervised loss in lieu of the supervised loss. This effect also occurs in the quadrotor obstacle field case; however the cost values are much closer between frameworks.

Overall, the unsupervised \deepcoordinator models tend to sacrifice some trajectory cost for considerably higher trajectory feasibility, indicating that it is better-aligned with the deployment objectives of \deepcoordinatorperiod 

\newpage
\section{Sample Policy Output}\label{app:sample_policy_output}
In this appendix, we illustrate how the outputs of the feedback policy differ between policy parameterizations.
For each feedback scheme, we display the output $\rho$ and $\mu$ values. We plot a line for each (agent, timestep, state/control dimension) tuple, representing the policy output averaged over the problem instances for this component of the penalty vector. For each line, the area $\pm$ one standard deviation around the mean is drawn as a translucent shaded region around the mean. A colormap over the flattened (agent, timestep, state/control dimension) dimensions is used to distinguish between lines. 
The plots below are from the unsupervised models for the car obstacle field task. 

The LS policy employs a single constant value for $\rho$ and $\mu$, applied uniformly to every problem and iteration. Its output is therefore flat across iterations with zero variance.
\begin{figure}[H]
    \centering
    \begin{tabular}{cc}
        \includegraphics[width=0.45\linewidth]{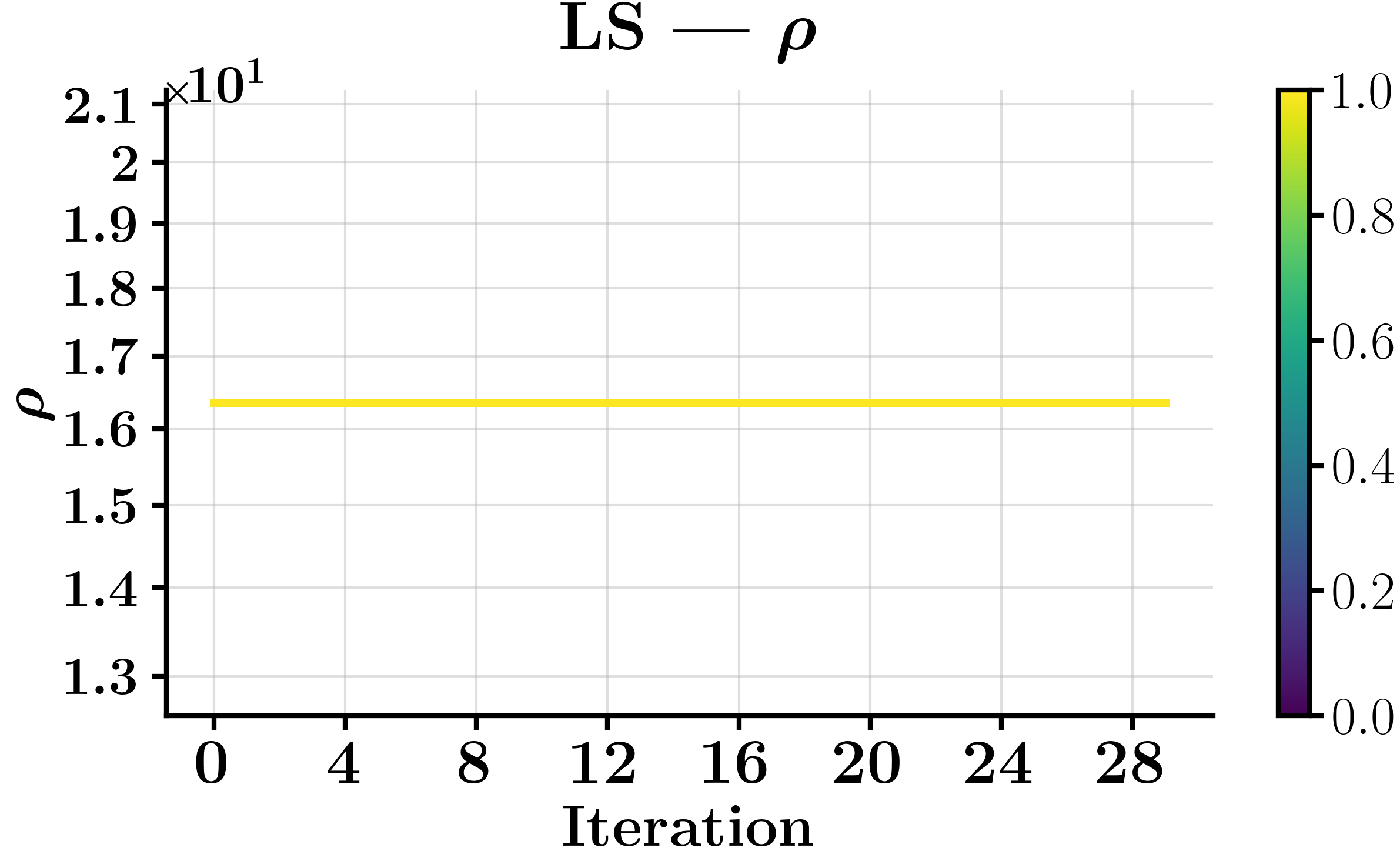} & \includegraphics[width=0.45\linewidth]{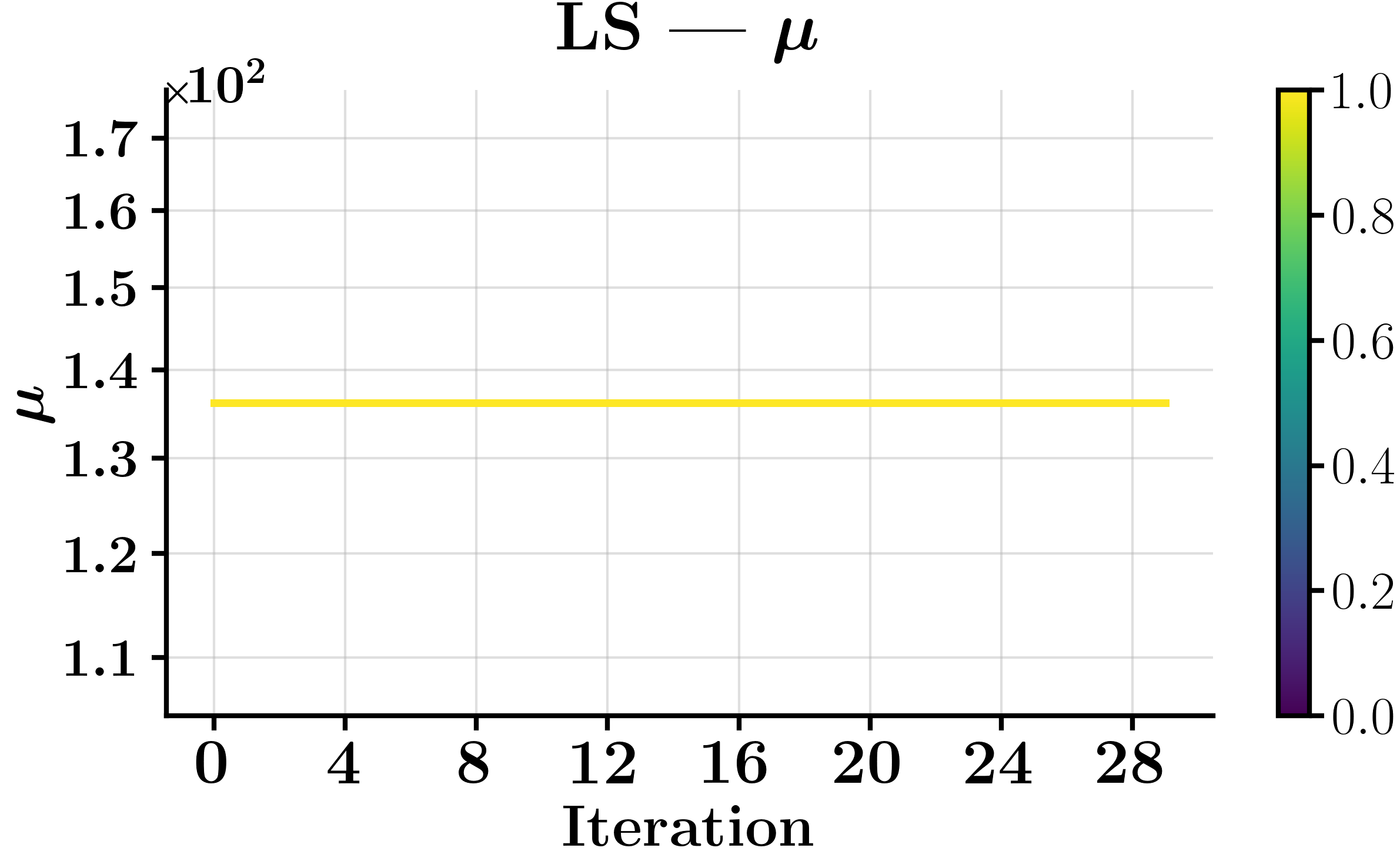}
    \end{tabular}
    \caption{Sample output from unsupervised LS policy at each iteration.}
    \label{tab:sample_rho_and_mu_ls}
\end{figure}

The LSPI policy is also problem agnostic but learns a separate scalar at each iteration, so its mean varies along the iterations while the variance remains zero.
\begin{figure}[H]
    \centering
    \begin{tabular}{cc}
        \includegraphics[width=0.45\linewidth]{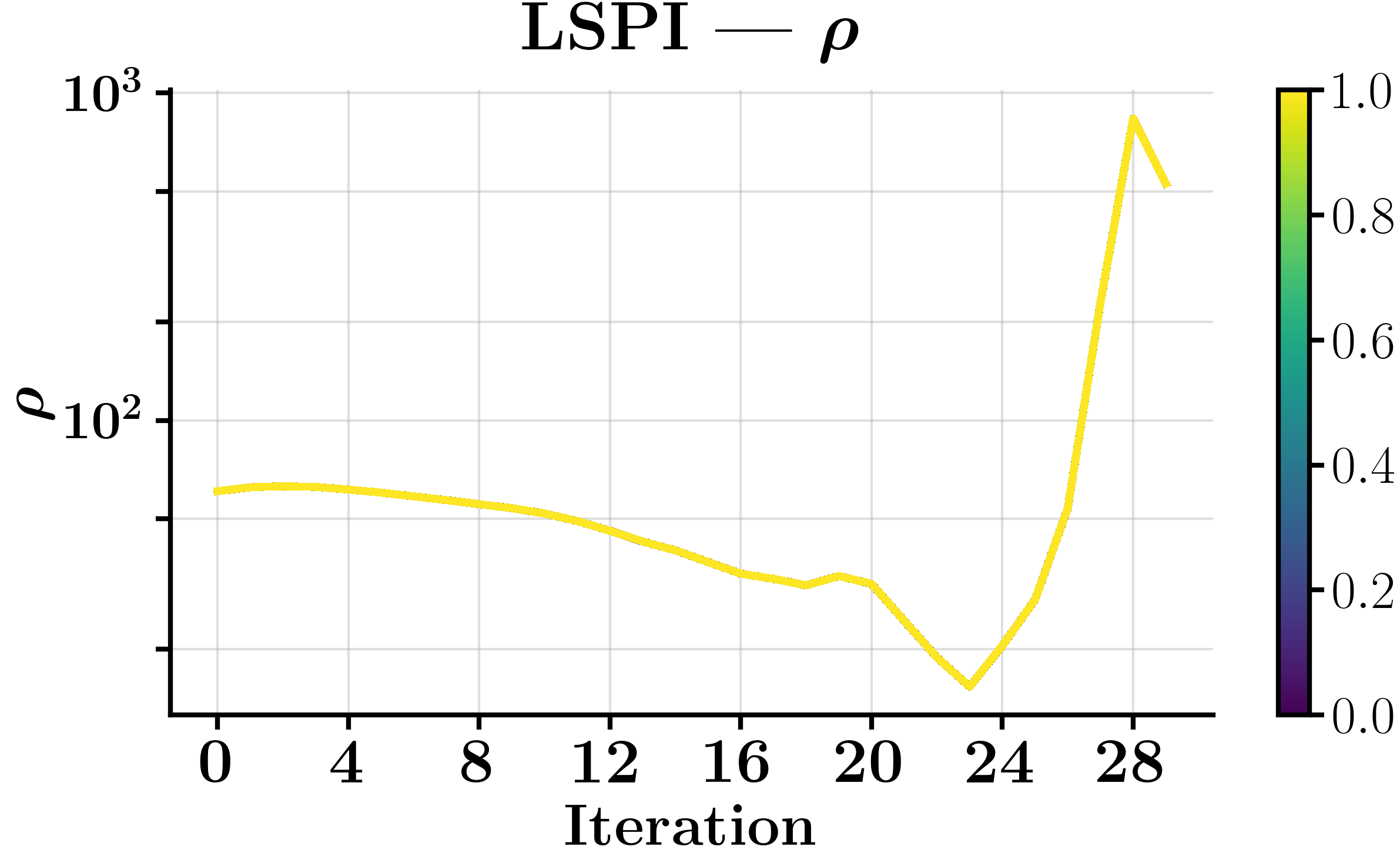} & \includegraphics[width=0.45\linewidth]{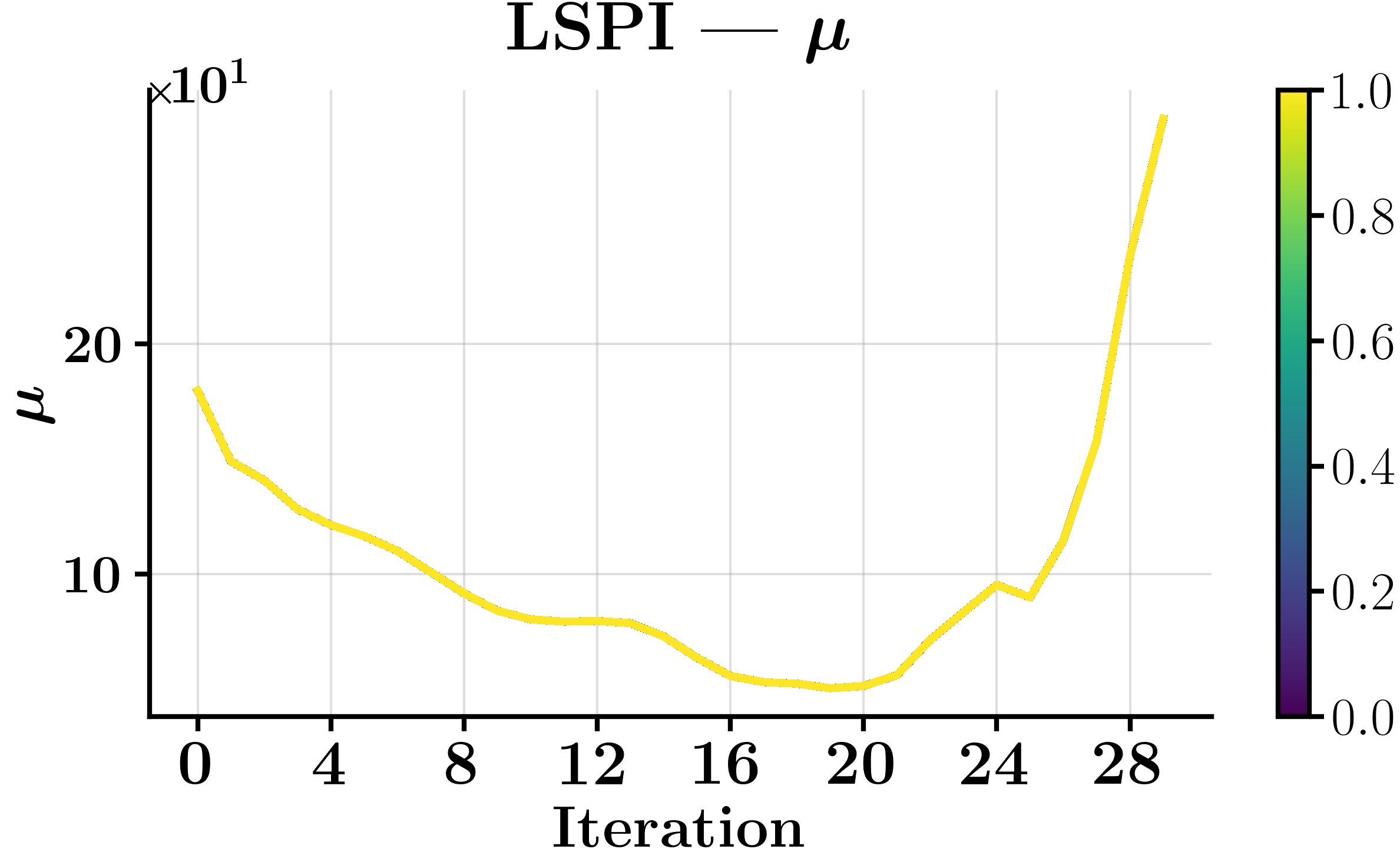}
    \end{tabular}
    \caption{Sample output from unsupervised LSPI at each iteration.}
    \label{tab:sample_rho_and_mu_lspi}
\end{figure}

The LSTM policy produces a distinct value for each entry, with variance even within the same agent, timestep, and state or control dimension.

\begin{figure}[H]
    \centering
    \begin{tabular}{cc}
        \includegraphics[width=0.45\linewidth]{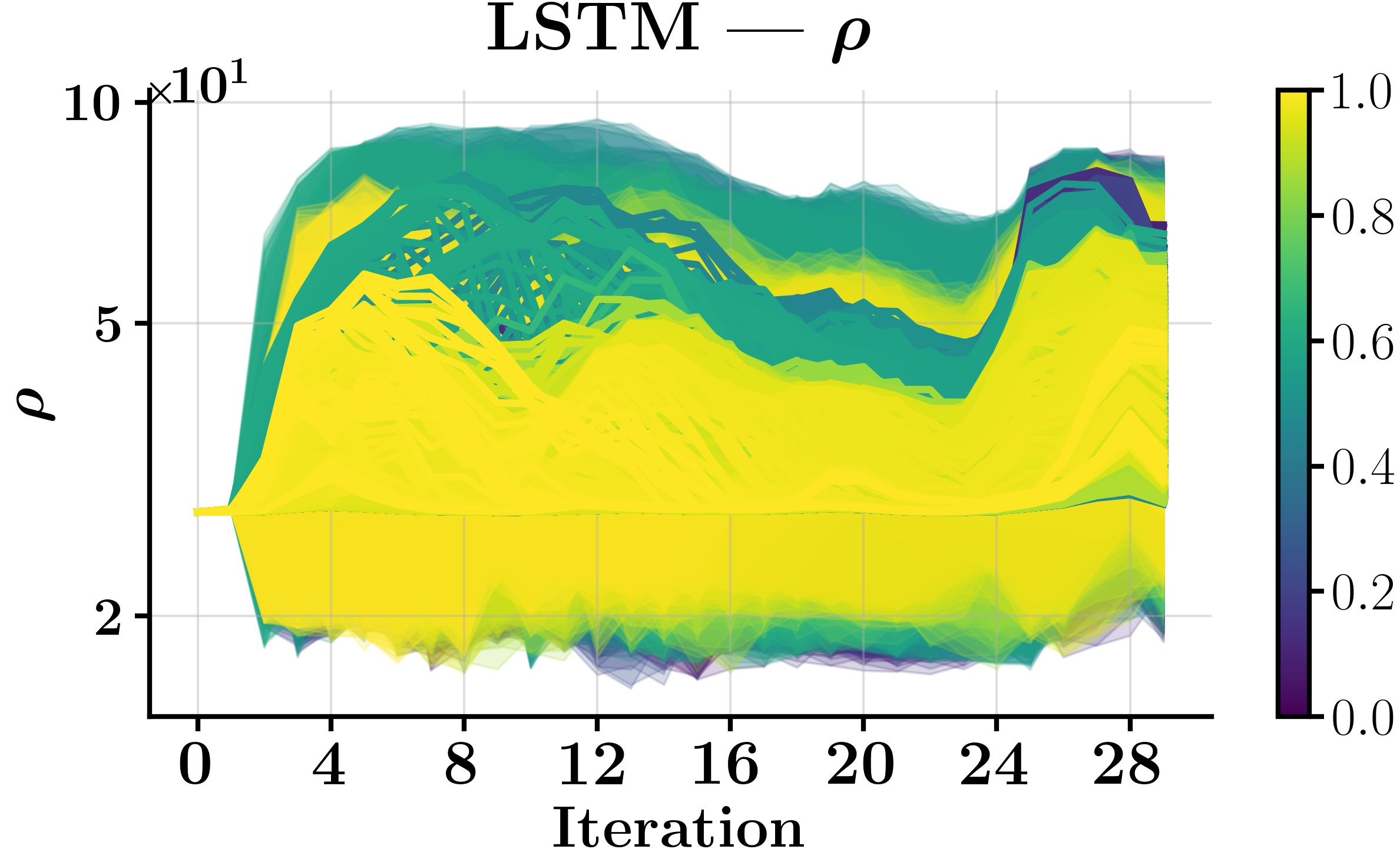} & \includegraphics[width=0.45\linewidth]{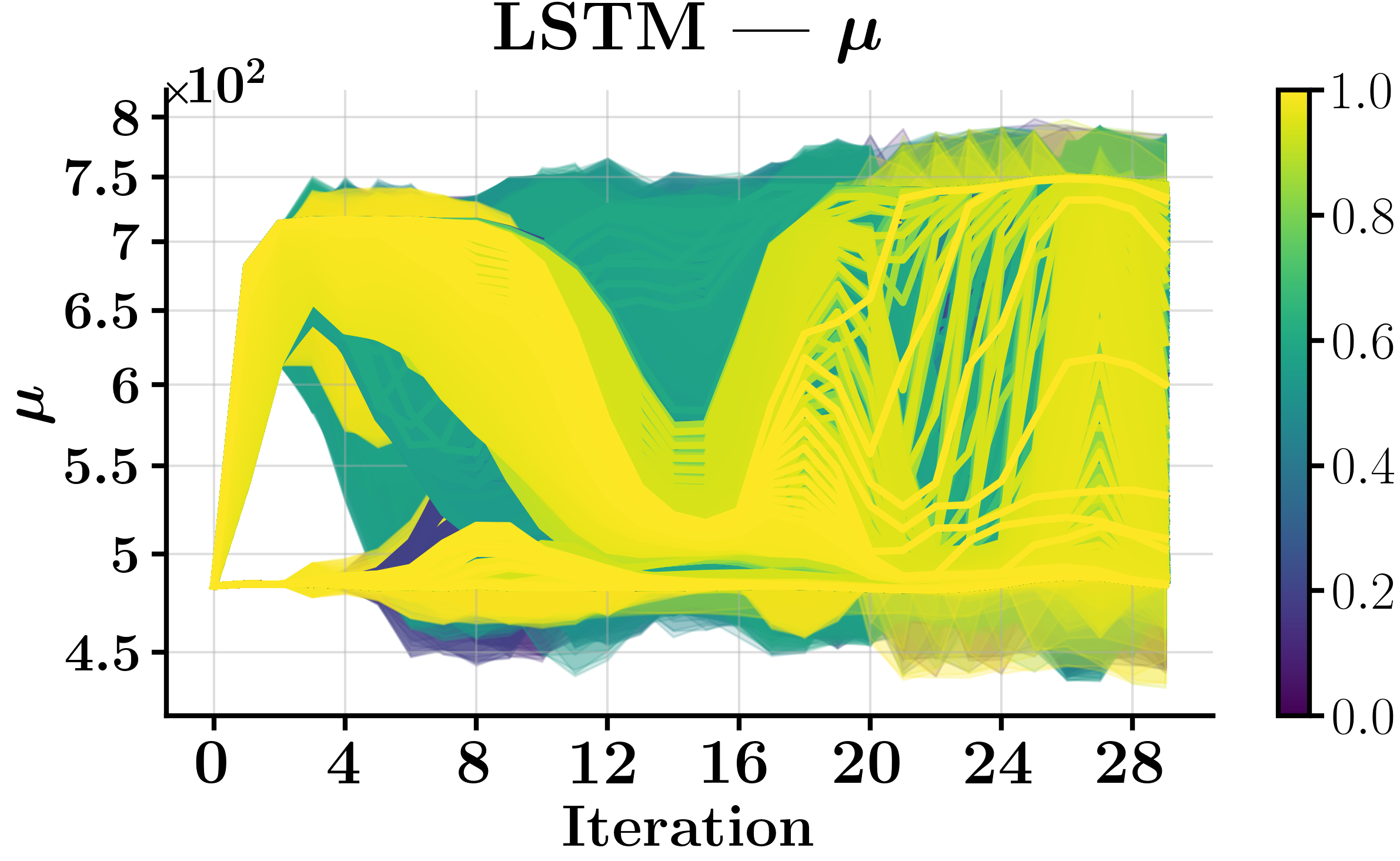}
    \end{tabular}
    \caption{Sample output from unsupervised LSTM policy at each iteration.}
    \label{tab:sample_rho_and_mu_lstm}
\end{figure}

\newpage
\section{Overview of the Implicit Differentiation Framework}\label{app:gradient_computation}
In this appendix, we provide an overview of our IFT-based differentiation framework. For a  detailed derivation and proofs of the theorems, consult Appendix~\ref{app:gradient_computation_detailed_derivation}. 

Our approach is based on differentiating through Subproblem 2 in its barrier-augmented form, which avoids the gradient discontinuities that arise when the active set changes and that compromise training stability.
We derive an analogous approach for computing gradients through Subproblem 1 based on~\citep{oshin2024differentiable}.
This approach is based on DDP and is favorable because it is more memory-efficient compared to the Pontryagin-based methods~\citep{jin2020pontryagin} and exploits the temporal sparsity of the control problem to compute gradients efficiently by reusing Hessian factorizations from the last iteration of DDP.

\subsection{Backpropagation Through Time}

The forward pass of the architecture is given by Algorithm \ref{alg:deep_coordinator}.
After $K$ iterations, the final iterate $\bv^K$ is used to compute the task loss $L(\bv^K)$.

To compute the hypergradient $\nabla_w L$ with respect to the policy weights $w$, we apply the chain rule at iteration $a$:
\begin{equation}
    % \pdv{L}{w} = \pdv{L}{\boldsymbol{\theta}^a} \pdv{\boldsymbol{\theta}^a}{w},
    \nabla_w L = \pdv{\boldsymbol{\theta}^a}{w}^\top \nabla_{\boldsymbol{\theta}^a} L,
\end{equation}
where the term $\nabla_{\boldsymbol{\theta}^a} L$ captures how the penalty parameters influence the loss through their effect on the solutions to subproblems 1 and 2.
Since computing $\nabla_{\boldsymbol{\theta}^a} L$ requires differentiating through both subproblems, we introduce adjoint variables $\delta \bz^a$ and $\delta \tz^a$ that satisfy adjoint equations derived from the subproblem KKT conditions.
These adjoint variables capture the sensitivity of the loss with respect to the solutions of their respective subproblems.

\subsection{Barrier-augmentation Approach}
Subproblem 2 solves an NLP problem with inequality constraints enforcing safety and inter-agent constraints.
At convergence, the solution $\tz^a = [\tx^a, \tu^a, \ty^a]$ satisfies the KKT conditions $\nabla_{\tz^a} \tilde{\calL}(\tz^a; \ts^a) = 0$, where $\ty^a$ are the Lagrange multipliers for the safety constraints, $\tilde{\calL}$ is the Lagrangian, and $\ts^a = [\bz^a, \blambda^{a - 1}, \bxi^{a - 1}, \boldsymbol{\theta}^a]$ collects the inputs to Subproblem 2.
Applying the IFT yields
\begin{equation}
    \pdv{\tz^a}{\ts^a} = - \nabla_{\tz\tz}^2 \tilde{\calL}^{-1} \nabla_{\tz\ts}^2 \tilde{\calL}.
\end{equation}
The main challenge is computing the adjoint variable $\delta \tz^a = - \nabla_{\tz\tz}^2 \tilde{\calL}^{-1} \nabla_{\tz^a} L$ efficiently.
We avoid instantiating the full Hessian matrix by exploiting the fact that the optimization is distributed across time, allowing us to solve $T$ linear systems in parallel.
To obtain smooth gradients, we differentiate through a barrier-augmented form of the subproblem~\citep{jin2021safe} rather than detecting the active set~\citep[Section 4]{stellato2020osqp}. By writing $\tilde{\ell}_t$ for the per-step objective and $g_{t,j}$ for the $j$-th inequality constraint at time $t$, the barrier-augmented objective is
\begin{equation} \label{eq:barrier_objective}
    \check{\calL}_t(\txu_t; \ts_t) = \tilde{\ell}_t(\txu_t; \ts_t) - \tau \sum_{j} \log\!\left(-g_{t,j}(\txu_t; \ts_t)\right),
\end{equation}
with barrier parameter $\tau > 0$. Because every constraint contributes a smooth term to $\check{\calL}_t$ regardless of its activeness, differentiating through~\eqref{eq:barrier_objective} avoids the discontinuities incurred when the active set changes.

\begin{proposition}[Differentiation through Subproblem 2]
\label{app:proposition:differentiation_through_subproblem_2a}
Let $\txu_t = (\tx_t, \tu_t)$ denote the primal variables at time $t$, and suppose the forward solve returns a strictly feasible stationary point of~\eqref{eq:barrier_objective}, i.e.\ $g_{t,j}(\txu_t; \ts_t) < 0$ for all $j$ and $\nabla_{\txu_t} \check{\calL}_t = 0$, with $\nabla_{\txu_t}^2 \check{\calL}_t$ nonsingular.
Then the adjoint variable $\delta \txu_t$ is the solution of the linear system
\begin{equation} \label{eq:subproblem_2_linear_system}
    \nabla_{\txu_t}^2 \check{\calL}_t \, \delta \txu_t = -\nabla_{\txu_t} L,
\end{equation}
where the Hessian on the left-hand side expands as
\begin{equation}
    \nabla_{\txu_t}^2 \check{\calL}_t = \nabla_{\txu_t}^2 \tilde{\ell}_t
    - \tau \sum_j \frac{\nabla_{\txu_t}^2 g_{t,j}}{g_{t,j}}
    + \tau \sum_j \frac{\nabla_{\txu_t} g_{t,j} \nabla_{\txu_t} g_{t,j}^\top}{g_{t,j}^2}.
\end{equation}
\end{proposition}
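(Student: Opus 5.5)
The plan is to apply the implicit function theorem to the stationarity condition of the barrier-augmented objective~\eqref{eq:barrier_objective} at a fixed time $t$. Define $F(\txu_t, \ts_t) \defeq \nabla_{\txu_t} \check{\calL}_t(\txu_t; \ts_t)$.

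\textbf{Step 1: smoothness of $F$.} By hypothesis the forward solve returns a point with $g_{t,j} < 0$ for all $j$. By continuity of the $g_{t,j}$, strict feasibility then persists on a neighborhood of $(\txu_t, \ts_t)$. On that neighborhood each $\log(-g_{t,j})$ is as smooth as $g_{t,j}$ itself, so $F$ is continuously differentiable. This assumes $\tilde{\ell}_t$ and $g_{t,j}$ are $C^2$ in both arguments, which I take as the standing regularity assumption.

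\textbf{Step 2: apply the IFT.} We have $F = 0$ at the returned point, and $\nabla_{\txu_t} F = \nabla^2_{\txu_t} \check{\calL}_t$ is nonsingular by assumption. The IFT therefore gives a locally unique $C^1$ solution map $\ts_t \mapsto \txu_t(\ts_t)$, with
\[
\pdv{\txu_t}{\ts_t} = -\left(\nabla^2_{\txu_t}\check{\calL}_t\right)^{-1} \nabla^2_{\txu_t \ts_t}\check{\calL}_t .
\]
This mirrors the general formula stated earlier in the appendix.

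\textbf{Step 3: contract with the upstream gradient.} By the chain rule, $\nabla_{\ts_t} L = \pdv{\txu_t}{\ts_t}^{\top} \nabla_{\txu_t} L$. Substituting Step 2 gives
\[
\nabla_{\ts_t} L = (\nabla^2_{\txu_t \ts_t}\check{\calL}_t)^\top \, \delta\txu_t, \qquad \delta\txu_t \defeq -\left(\nabla^2_{\txu_t}\check{\calL}_t\right)^{-\top} \nabla_{\txu_t} L .
\]
The Hessian is symmetric because $\check{\calL}_t$ is $C^2$, so the transpose drops. Hence $\delta\txu_t$ is exactly the solution of~\eqref{eq:subproblem_2_linear_system}, and it exists and is unique because the Hessian is nonsingular.

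\textbf{Step 4: expand the Hessian.} Differentiate $-\tau\log(-g_{t,j})$ twice. The gradient is $-\tau \nabla g_{t,j}/g_{t,j}$. Applying the quotient rule to this gradient gives the Hessian
\[
-\tau \frac{\nabla^2 g_{t,j}}{g_{t,j}} + \tau \frac{\nabla g_{t,j}\nabla g_{t,j}^\top}{g_{t,j}^2}.
\]
Summing over $j$ and adding $\nabla^2 \tilde{\ell}_t$ yields the claimed expansion.

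\textbf{Main obstacle.} The only delicate point is Step 1. The IFT has to be applied to the barrier-augmented problem, not the original constrained NLP, and we must ensure the stationary point stays interior under perturbation. Strict feasibility together with continuity settles this. I would also note that the time-decoupled structure of Subproblem 2 makes the global Hessian block diagonal across $t$. This justifies solving the $T$ systems independently and in parallel.
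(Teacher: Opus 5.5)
Your proposal is correct and follows essentially the same route as the paper. The paper imposes the stationarity condition $\nabla_{\txu_t}\check{\calL}_t = 0$ at the strictly feasible returned point, takes the adjoint to be $-(\nabla^2_{\txu_t}\check{\calL}_t)^{-1}\nabla_{\txu_t} L$ via the IFT (one system per time step), and expands the barrier Hessian by differentiating $-\tau\nabla g_{t,j}/g_{t,j}$; your extra care about strict feasibility persisting on a neighborhood, $C^2$ regularity for the IFT, and Hessian symmetry to drop the transpose makes explicit what the paper's brief derivation leaves implicit.
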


The linear system~\eqref{eq:subproblem_2_linear_system} is of dimension $n$, where $n$ is the number of variables, so solving the linear system requires $O(n^3)$ time complexity and $O(n^2)$ memory, independent of the number of constraints.

To fully recover the adjoint $\delta \tz_t$ we would also need the dual adjoint $\delta \ty_t$.
In our setting this is unnecessary for two reasons.
First, the task loss $L$ is not a function of the safety multipliers $\ty_t$, so $\nabla_{\ty_t} L = 0$.
Second, the multipliers do not feed back as inputs to either subproblem, so they carry no sensitivity downstream.
Consequently $\delta \ty_t$ contributes nothing to the hypergradient, and our implementation does not compute it.
For completeness, we give its closed form in Corollary~\ref{app:corollary:differentiation_through_subproblem_2_dual}.

\begin{corollary}[Recover adjoint variable]
\label{app:corollary:differentiation_through_subproblem_2_dual}
    Using the barrier multiplier estimate $\ty_{t,j} = -\tau / g_{t,j}$, the dual adjoint corresponding to constraint $g_{t,j}$ is
    \begin{equation*}
        \delta \ty_{t,j} = \frac{\tau}{g_{t,j}^2} \nabla_{\txu_t} g_{t,j}^\top \delta \txu_t,
    \end{equation*}
    and the full dual adjoint $\delta \ty_t$ is recovered by concatenating the $\delta \ty_{t,j}$ over $j$.
\end{corollary}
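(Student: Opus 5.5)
The plan is to embed the barrier-augmented stationarity condition of Proposition~\ref{app:proposition:differentiation_through_subproblem_2a} into an equivalent primal--dual (perturbed KKT) system, and then read off the dual block of the adjoint from the block structure of its Jacobian. With the barrier multiplier estimate $\ty_{t,j} = -\tau/g_{t,j}$, strictly positive because $g_{t,j}<0$, the stationarity $\nabla_{\txu_t}\check{\calL}_t = 0$ becomes the pair
\begin{equation*}
F_1 \defeq \nabla_{\txu_t}\tilde{\ell}_t + \sum_j \ty_{t,j}\nabla_{\txu_t} g_{t,j} = 0, \qquad F_{2,j} \defeq \ty_{t,j}\, g_{t,j} + \tau = 0 .
\end{equation*}
This is exactly the perturbed complementarity form of the KKT conditions for $\tz_t = (\txu_t,\ty_t)$. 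The adjoint $\delta\tz_t$ is defined, as in the overview, by the linear system $J^\top \delta\tz_t = -\nabla_{\tz_t}L$. Here $J$ is the Jacobian of $(F_1,F_2)$ with respect to $(\txu_t,\ty_t)$. Because $\nabla_{\ty_t}L = 0$, the right-hand side is $(-\nabla_{\txu_t}L,\, 0)$.

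The next step is to compute the blocks of $J$. We have $\partial F_1/\partial\txu_t = H \defeq \nabla^2\tilde{\ell}_t + \sum_j \ty_{t,j}\nabla^2 g_{t,j}$ and $\partial F_1/\partial \ty_{t,j} = \nabla g_{t,j}$. On the complementarity side, $\partial F_{2,j}/\partial\txu_t = \ty_{t,j}\nabla g_{t,j}^\top$ and $\partial F_{2,j}/\partial\ty_{t,j} = g_{t,j}$, which is a diagonal block. A single Newton-type system, written with some adjoint sign convention, then contains a second-row equation that couples only $\delta\txu_t$ and the scalar $\delta\ty_{t,j}$, via $\ty_{t,j}\nabla g_{t,j}^\top\delta\txu_t$ and $g_{t,j}\,\delta\ty_{t,j}$. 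Solving that row for $\delta\ty_{t,j}$ and substituting $\ty_{t,j}=-\tau/g_{t,j}$ should produce
\begin{equation*}
\delta\ty_{t,j} = \frac{\tau}{g_{t,j}^2}\nabla g_{t,j}^\top\delta\txu_t .
\end{equation*}
Eliminating $\delta\ty$ from the first row gives the Schur complement. This equals $H + \sum_j \frac{\tau}{g_{t,j}^2}\nabla g_{t,j}\nabla g_{t,j}^\top$, which after the same substitution is exactly the barrier Hessian in Proposition~\ref{app:proposition:differentiation_through_subproblem_2a}. Consequently the primal block $\delta\txu_t$ coincides with the solution of~\eqref{eq:subproblem_2_linear_system}. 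This consistency check confirms that the elimination is legitimate; nonsingularity of the Schur complement then gives nonsingularity of $J$, because the diagonal block $g_{t,j}\neq 0$.

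The main obstacle is the sign and transpose bookkeeping. The raw Jacobian $J$ is not symmetric, since its $(2,1)$ block carries the factor $\ty_{t,j}$ while its $(1,2)$ block does not. Whether the adjoint uses $J$ or $J^\top$, and whether $\delta\ty$ is scaled by $\ty$, determines whether the sign is $+\tau/g^2$ and whether the Schur complement reproduces the stated Hessian. I would first symmetrize by dividing row $F_{2,j}$ by $-\ty_{t,j}$, that is, working with $-g_{t,j} - \tau/\ty_{t,j}$ in place of $F_{2,j}$. That choice yields a symmetric KKT matrix with off-diagonal block $-\nabla g$ and diagonal block $-\tau/\ty_{t,j}^2 = -g_{t,j}^2/\tau$. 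I would then choose the convention that matches the primal system of Proposition~\ref{app:proposition:differentiation_through_subproblem_2a}, so that the resulting expression agrees with the claimed formula. Finally, concatenating the $\delta\ty_{t,j}$ over $j$ gives the full dual adjoint $\delta\ty_t$.
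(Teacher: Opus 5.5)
Your route differs from the paper's, but its core step is the same computation. The paper never forms a primal--dual system. It regards $\ty_{t,j} = -\tau/g_{t,j}(\txu_t)$ as an explicit function of the primal variables, differentiates, and contracts with $\delta\txu_t$, i.e.\ $\delta\ty_{t,j} = \nabla_{\txu_t}(-\tau/g_{t,j})^\top\delta\txu_t$. The complementarity row you actually solve, $\ty_{t,j}\nabla g_{t,j}^\top\delta\txu_t + g_{t,j}\,\delta\ty_{t,j} = 0$, is exactly the linearization of $\ty_{t,j}g_{t,j}+\tau=0$, so it is that same tangent. What your embedding adds is a consistency certificate the paper lacks. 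It shows that $(\delta\txu_t,\delta\ty_t)$ jointly solve a genuine primal--dual system whose Schur complement is the barrier Hessian of Proposition~\ref{app:proposition:differentiation_through_subproblem_2a}. This ties the corollary to the overview's definition $\delta\tz = -\nabla^2\tilde{\calL}^{-1}\nabla L$, rather than simply declaring the tangent of $\ty(\txu_t)$ to be the dual adjoint.

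The bookkeeping has to be fixed concretely, though; choosing whichever convention reproduces the claim would be circular. The row you solve in your second paragraph is a row of $J$, not of $J^\top$ as you defined the adjoint. With the raw $J^\top$, the second block row is $\nabla g_{t,j}^\top\delta\txu_t + g_{t,j}\,\delta\ty_{t,j}=0$, which gives $\delta\ty_{t,j} = -g_{t,j}^{-1}\nabla g_{t,j}^\top\delta\txu_t$. That is off from the claim by the factor $\ty_{t,j}$. The primal block and the Schur complement are identical for $J$ and $J^\top$, so your consistency check cannot catch this. The $J$-solve agrees with the claim only because, with a zero right-hand side, it is invariant to rescaling the complementarity residual; the $J^\top$-solve is not.

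Symmetrization is the right fix, but you must divide by $+\ty_{t,j}$. The residual $g_{t,j}+\tau/\ty_{t,j}$ has derivatives $\nabla g_{t,j}^\top$ and $-\tau/\ty_{t,j}^2$. The resulting matrix is the symmetric Hessian in $(\txu_t,\ty_t)$ of $\tilde\ell_t + \sum_j\ty_{t,j}g_{t,j} + \tau\sum_j\log\ty_{t,j}$, whose maximization over $\ty_t$ returns $\check{\calL}_t$ up to a constant. Its second row gives $\delta\ty_{t,j} = (\ty_{t,j}^2/\tau)\nabla g_{t,j}^\top\delta\txu_t = (\tau/g_{t,j}^2)\nabla g_{t,j}^\top\delta\txu_t$, with no transpose or sign convention left to choose. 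Your residual $-g_{t,j}-\tau/\ty_{t,j}$ instead has derivatives $-\nabla g_{t,j}^\top$ and $+\tau/\ty_{t,j}^2$, not $-\tau/\ty_{t,j}^2$. Its off-diagonal block therefore does not match the $+\nabla g_{t,j}$ block of $F_1$, so the matrix is not symmetric.
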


The barrier approach provides two advantages over active-set detection based methods. First, the gradient is continuous even when the active set changes, avoiding jumps incurred by active-set detection. Second, the barrier parameter $\tau$ acts as a tunable knob over the degree of smoothing, trading off the fidelity of gradient against the conditioning of the linear system~\eqref{eq:subproblem_2_linear_system}.

\subsection{Differentiation Through Subproblem 1}

Subproblem 1 solves an optimal control problem that is decoupled across agents.
At convergence, the solution $\bz^a = [\bx^a, \bu^a, \by^a]$ satisfies the KKT conditions $\nabla_{\bz^a} \bar{\calL}(\bz^a; \bs^a) = 0$, where $\by^a$ are the Lagrange multipliers for the dynamics constraints, $\bar{\calL}$ is the Lagrangian, and $\bs^a = [\tz^{a - 1}, \blambda^{a - 1},  \bxi^{a - 1}, \boldsymbol{\theta}^a]$ collects all the inputs to Subproblem 1.
Applying the IFT yields
\begin{equation}
    \pdv{\bz^a}{\bs^a} = -\nabla_{\bz\bz}^2 \bar{\calL}^{-1} \nabla_{\bz\bs}^2 \bar{\calL}.
\end{equation}
The adjoint variable $\delta \bz^a = -\nabla_{\bz\bz}^2 \bar{\calL}^{-1} \nabla_{\bz^a} L$ can be computed efficiently without forming the full Hessian by exploiting two key properties of the optimization: (1) the temporal sparsity of the problem allows an efficient backwards-in-time recursion, and (2) the problem is decoupled across agents, allowing for parallel computation.
Furthermore, the Hessian factorization from the final DDP iteration of the forward pass can be reused, avoiding expensive matrix refactorizations~\citep{oshin2024differentiable}.

\begin{theorem}[Differentiation through Subproblem 1]
\label{app:theorem:differentiation_through_subproblem_1}
% Let $\delta \bz^a = (\delta \bx^a, \delta \bu^a, \delta \by^a)$.
The gradients of the loss with respect to the Subproblem 1 inputs are given by
\begin{align*}
\begin{aligned}
\nabla_{\tilde{x}_{i,t}} L &= \delta x_{i,t} \odot (-\boldsymbol{\rho}_{i,t}), \\
\nabla_{\lambda_{i,t}} L &= \delta x_{i,t}, \\
\nabla_{\boldsymbol{\rho}_{i,t}} L &= \delta x_{i,t} \odot (x_{i,t} - \tilde{x}_{i,t}),
\end{aligned} \hfill \begin{aligned}
    \nabla_{\tilde{u}_{i,t}} L &= \delta u_{i,t} \odot (-\boldsymbol{\mu}_{i,t}), \\
    \nabla_{\xi_{i,t}} L &= \delta u_{i,t}, \\
    \nabla_{\boldsymbol{\mu}_{i,t}} L &= \delta u_{i,t} \odot (u_{i,t} - \tilde{u}_{i,t}).
\end{aligned}
\end{align*}
\end{theorem}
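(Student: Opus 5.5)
We obtain these formulas from the implicit function theorem applied to the KKT system of Subproblem~\eqref{eq:subproblem_1}, followed by the chain rule. The key observation is that the Subproblem~1 inputs $\bs^a = [\tz^{a-1}, \blambda^{a-1}, \bxi^{a-1}, \boldsymbol{\theta}^a]$ enter the Lagrangian $\bar{\calL}$ only through the augmented stage costs $\hat{\ell}_{i,t}$. The dynamics constraints $\bx_{i,t+1} = \f{i}(\bx_{i,t},\bu_{i,t})$, with multipliers $\by$, do not depend on $\bs^a$ at all. Hence $\nabla^2_{\bz\bs}\bar{\calL}$ has nonzero blocks only in the rows for $\bx$ and $\bu$. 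We therefore define the adjoint by
\begin{equation*}
\delta \bz = -\nabla^2_{\bz\bz}\bar{\calL}^{-1}\nabla_{\bz}L .
\end{equation*}
Using the IFT expression for $\partial\bz^a/\partial\bs^a$, the chain rule then gives
\begin{equation*}
\nabla_{\bs}L = \Big(\pdv{\bz}{\bs}\Big)^{\top}\nabla_{\bz}L = (\nabla^2_{\bz\bs}\bar{\calL})^{\top}\delta\bz .
\end{equation*}
Here we use the symmetry of $\nabla^2_{\bz\bz}\bar{\calL}$. The $\by$-components of $\delta\bz$ multiply zero blocks, so only $\delta x_{i,t}$ and $\delta u_{i,t}$ survive.

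Next we compute the mixed derivatives explicitly. Expanding the square, the $\bx$-part of $\hat{\ell}_{i,t}$ equals
\begin{equation*}
\tfrac{\boldsymbol{\rho}_{i,t}}{2}\|\bx_{i,t}-\tx_{i,t}\|^2 + \blambda_{i,t}^\top(\bx_{i,t}-\tx_{i,t})
\end{equation*}
plus a term independent of $\bx$. That term does not affect $\nabla_{\bx}\bar{\calL}$, so it drops out of the mixed Hessian. Since $\boldsymbol{\rho}$ acts elementwise, we get
\begin{equation*}
\nabla_{\bx_{i,t}}\bar{\calL} \ni \boldsymbol{\rho}_{i,t}\odot(\bx_{i,t}-\tx_{i,t}) + \blambda_{i,t}.
\end{equation*}
Differentiating this in turn with respect to $\tx_{i,t}$, $\blambda_{i,t}$ and $\boldsymbol{\rho}_{i,t}$ yields the diagonal Jacobians $\diag(-\boldsymbol{\rho}_{i,t})$, $\bI$ and $\diag(\bx_{i,t}-\tx_{i,t})$. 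Each of these couples only to the $(i,t)$ row, because the cost is separable across agent–time pairs.

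Multiplying the transposes of these diagonal blocks by $\delta x_{i,t}$ gives the three left-hand formulas as Hadamard products. The control block is identical after replacing $(\tx,\blambda,\boldsymbol{\rho})$ with $(\tu,\bxi,\boldsymbol{\mu})$, for $t\le T-1$. The terminal stage contributes only the $\bx$-terms.

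The main obstacle is sign and convention bookkeeping rather than any genuine difficulty. One must check that the adjoint is defined so that $\nabla_{\bs}L = +(\nabla^2_{\bz\bs}\bar{\calL})^\top\delta\bz$. This follows from the symmetry of the KKT Hessian and the minus sign in $\delta\bz^a$. One must also record that these are gradients through this layer alone, with $\nabla_{\bz}L$ being the upstream sensitivity propagated from later layers. Existence of the inverse is assumed, as in the preceding IFT expression; it holds when the DDP solution satisfies second-order sufficiency with a nonsingular KKT matrix. The efficient computation of $\delta\bz$ via the reused DDP Riccati factorization is a separate computational matter and is not needed for the identities themselves.
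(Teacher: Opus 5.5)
Your proposal is correct and follows essentially the same route as the paper: apply the IFT to the Subproblem~1 KKT system, form the adjoint $\delta\bz = -\nabla^2_{\bz\bz}\bar{\calL}^{-1}\nabla_{\bz}L$, and contract it with the diagonal mixed-Hessian blocks $\diag(-\boldsymbol{\rho}_{i,t})$, $\bI$, $\diag(\bx_{i,t}-\tx_{i,t})$ and their control analogues. You also make explicit two points the paper leaves implicit: the symmetry argument that fixes the sign in $\nabla_{\bs}L = (\nabla^2_{\bz\bs}\bar{\calL})^\top\delta\bz$, and why the dynamics-multiplier components of $\delta\bz$ drop out.
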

These expressions reveal an intuitive relationship: the gradient with respect to the penalty parameters is proportional to the primal residual $\bx^a - \tx^{a - 1}$ weighted by the loss sensitivity $\delta \bx^a$.
This provides direct signal for the policies to learn to adapt the penalty parameters for constraint satisfaction in an end-to-end fashion based on the task performance. 

\newpage
\section{Detailed Derivation of the Implicit Differentiation Framework}\label{app:gradient_computation_detailed_derivation}
In this appendix, we provide a detailed derivation of the gradient computation scheme for training Deep Coordinator.
The derivation is organized into three parts: (1) the overall backpropagation through one ADMM-DDP iteration, (2) implicit differentiation through Subproblem 2 with proofs of Proposition~\ref{app:proposition:differentiation_through_subproblem_2a} and Corollary~\ref{app:corollary:differentiation_through_subproblem_2_dual}, and (3) implicit differentiation through Subproblem 1 with a proof of Theorem~\ref{app:theorem:differentiation_through_subproblem_1}.

\subsection{Backpropagation Through One ADMM-DDP Iteration}
\noindent
To simplify the notation, let $\bzeta = (\blambda, \bxi)$ be the vector of combined dual variables.
At each iteration $a \in \{1, \ldots, K\}$, the Deep Coordinator forward pass consists of the following operations:
\begin{enumerate}
    \item Predict hyperparameters via
    \begin{equation} 
        \bTheta^a = \pi_{w^a}(\bTheta^{a -1 }, \bv^{a - 1}, \bchi)
    \end{equation}
    \item Solve Subproblem 1 for dynamically-feasible trajectories via
    \begin{equation} \label{eq:appendix:subproblem_1}
        \bz^a = (\bx^a, \bu^a, \by^a) = \text{DDP}(\bs^a),
    \end{equation}
    where $\by^a$ are the Lagrange multipliers for the dynamics constraints and $\bs^a = (\tz^{a - 1}, \bzeta^{a - 1}, \bTheta^a)$ collects all inputs to Subproblem 1 at iteration $a$.
    \item Solve Subproblem 2 for safe trajectories via
    \begin{equation} \label{eq:appendix:subproblem_2}
        \tz^a = (\tx^a, \tu^a, \ty^a) = \text{NLP}(\ts^a),
    \end{equation}
    where $\ty^a$ are the Lagrange multipliers for the safety constraints and $\ts^a = (\bz^a, \bzeta^{a - 1}, \bTheta^a)$ collects all inputs to Subproblem 2 at iteration $a$.
    \item Update dual variables via
    \begin{equation} \label{eq:appendix:dual_ascent}
        \bzeta^a = \bzeta^{a - 1} + \bTheta^a \odot ((\bx^a, \bu^a) - (\tx^a, \tu^a)).
    \end{equation}
\end{enumerate}
After $K$ iterations, the final iterate $\bv^K = (\bz^K, \tz^K, \bzeta^K)$ is used to compute the task loss $L(\bv^K)$.
Our goal is to compute the hypergradient $\nabla_w L$ to train the feedback policies via gradient descent.
Throughout the derivation, we use $\dv{f}{x}$ to denote the total derivative, which includes both direct and indirect relationships through other variables, while $\pdv{f}{x}$ denotes the partial derivative where all other variables are held constant.

Consider the final iteration $a = K$. We consider reverse-mode automatic differentiation (AD) of $L$ with respect to $w$. By the chain rule, the hypergradient is
\begin{equation} \label{eq:appendix:hypergradient_1}
    (\nabla_w L(\bv^K))^\top = \dv{L}{w} = \pdv{L}{\bz^K} \pdv{\bz^K}{w} + \pdv{L}{\tz^K} \pdv{\tz^K}{w} + \pdv{L}{\bzeta^K} \pdv{\bzeta^K}{w}.
\end{equation}
As we are performing reverse-mode AD, the order that we compute the derivatives matters for efficiency. First, we differentiate through the dual variable update \eqref{eq:appendix:dual_ascent}. Since $\bzeta^K$ depends on $\bz^K$, $\tz^K$, $\bzeta^{K - 1}$, and $\bTheta^K$ through \eqref{eq:appendix:dual_ascent}, we expand, yielding
\begin{equation*}
    \pdv{\bzeta^K}{w} = \pdv{\bzeta^K}{\bz^K} \pdv{\bz^K}{w} + \pdv{\bzeta^K}{\tz^K} \pdv{\tz^K}{w} + \pdv{\bzeta^K}{\bzeta^{K - 1}} \pdv{\bzeta^{K - 1}}{w} + \pdv{\bzeta^K}{\bTheta^K} \pdv{\bTheta^K}{w}.
\end{equation*}
Substituting into \eqref{eq:appendix:hypergradient_1} and combining like terms yields
\begin{align} \label{eq:appendix:hypergradient_2}
    \dv{L}{w} &= \left(\pdv{L}{\bz^K} + \pdv{L}{\bzeta^K} \pdv{\bzeta^K}{\bz^K}\right) \pdv{\bz^K}{w} \\
    &+ \left(\pdv{L}{\tz^K} + \pdv{L}{\bzeta^K} \pdv{\bzeta^K}{\tz^K}\right) \pdv{\tz^K}{w} + \pdv{L}{\bzeta^K} \pdv{\bzeta^K}{\bzeta^{K - 1}} \pdv{\bzeta^{K - 1}}{w} + \pdv{L}{\bzeta^K} \pdv{\bzeta^K}{\bTheta^K} \pdv{\bTheta^K}{w}.
\end{align}
Since $\tz^K$ depends on $\bz^K$ and $\bzeta^{K - 1}$, we differentiate through Subproblem 2 next.
Define the total derivative of $L$ with respect to $\tz^K$ as
\begin{equation*}
    \dv{L}{\tz^K} = \pdv{L}{\tz^K} + \pdv{L}{\bzeta^K} \pdv{\bzeta^K}{\tz^K},
\end{equation*}
so the second term of \eqref{eq:appendix:hypergradient_2} can be written more compactly as $\dv{L}{\tz^K} \pdv{\tz^K}{w} = \nabla_{\tz^K} L^\top \pdv{\tz^K}{w}$.
Computing $\pdv{\tz^K}{w} = \dv{\tz^K}{\ts^K} \pdv{\ts^K}{w}$ involves differentiating through Subproblem 2 \eqref{eq:appendix:subproblem_2} in order to compute $\dv{\tz^K}{\ts^K}$.
Let $\tilde{\calL}(\tz; \ts)$ denote the Lagrangian for Subproblem 2.
At convergence, the solution satisfies the KKT conditions $\nabla_{\tz^K} \tilde{\calL}(\tz^K; \ts^K) = 0$.
Applying the IFT yields
\begin{equation*}
    \dv{\tz^K}{\ts^K} = -\nabla_{\tz^K,\tz^K} \tilde{\calL}^{-1} \nabla_{\tz^K,\ts^K} \tilde{\calL}.
\end{equation*}

Now, we introduce the Subproblem 2 adjoint variable $\delta \tz^K$, which can be computed by solving the linear system 
\begin{equation} \label{eq:appendix:subproblem_2_adjoint}
    \delta \tz^K \defeq - \nabla_{\tz^K,\tz^K} \tilde{\calL}^{-1} \nabla_{\tz^K} L \iff \nabla_{\tz^K,\tz^K} \tilde{\calL} \, \delta \tz^K = - \nabla_{\tz^K} L.
\end{equation}

We utilize barrier-augmentation for formulating and solving this linear system efficiently by exploiting the structure of Subproblem 2. Namely, since Subproblem 2 decomposes across time, we solve $T$ independent linear systems in parallel (one per time step). Moreover, barrier-augmentation considers the effects from all constraints altogether, which provide smooth gradients. The derivation is provided in the following subsection.

Substituting the adjoint variable \eqref{eq:appendix:subproblem_2_adjoint} into the hypergradient \eqref{eq:appendix:hypergradient_2} and expanding $\ts^K$ using the chain rule yields
\begin{equation} \label{eq:appendix:hypergradient_3}
\begin{aligned}
    \dv{L}{w} = \left( \pdv{L}{\bz^K} + \pdv{L}{\bzeta^K} \pdv{\bzeta^K}{\bz^K} + (\delta \tz^K)^\top \nabla_{\tz^K,\bz^K} \tilde{\calL} \right) \pdv{\bz^K}{w} \\
    + \left( (\delta \tz^K)^\top \nabla_{\tz^K,\bzeta^{K - 1}} \tilde{\calL} + \pdv{L}{\bzeta^K} \pdv{\bzeta^K}{\bzeta^{K - 1}} \right) \pdv{\bzeta^{K - 1}}{w} \\
    + \left( (\delta \tz^K)^\top \nabla_{\tz^K,\theta^K} \tilde{\calL} + \pdv{L}{\bzeta^K} \pdv{\bzeta^K}{\bTheta^K} \right) \pdv{\bTheta^K}{w}.
\end{aligned}
\end{equation}
Define the total derivative of $L$ with respect to $\bz^K$ as
\begin{equation*}
    \dv{L}{\bz^K} \defeq \pdv{L}{\bz^K} + \pdv{L}{\bzeta^K} \pdv{\bzeta^K}{\bz^K} + (\delta \tz^K)^\top \nabla_{\tz^K,\bz^K} \tilde{\calL},
\end{equation*}
so the first term of \eqref{eq:appendix:hypergradient_3} can be written more compactly as $\dv{L}{\bz^K} \pdv{\bz^K}{w} = \nabla_{\bz^K} L^\top \pdv{\bz^K}{w}$.
Computing $\pdv{\bz^K}{w} = \dv{\bz^K}{\bs^K} \pdv{\bs^K}{w}$ involves differentiating through Subproblem 1 \eqref{eq:appendix:subproblem_1} in order to compute $\dv{\bz^K}{\bs^K}$.
Let $\bar{\calL}(\bz; \bs)$ denote the Lagrangian for Subproblem 1.
At convergence, the solution satisfies the KKT conditions $\nabla_{\bz^K} \bar{\calL}(\bz^K; \bs^K) = 0$. Applying the IFT yields
\begin{equation*}
    \dv{\bz^K}{\bs^K} = -\nabla_{\bz^K,\bz^K} \bar{\calL}^{-1} \nabla_{\bz^K,\bs^K} \bar{\calL}.
\end{equation*}

Next, we introduce the Subproblem 1 adjoint variable $\delta \bz^K$, which can be computed by solving the linear system.
\begin{equation} \label{eq:appendix:subproblem_1_adjoint}
    \delta \bz^K \defeq - \nabla_{\bz^K,\bz^K} \bar{\calL}^{-1} \nabla_{\bz^K} L \iff \nabla_{\bz^K,\bz^K} \bar{\calL} \, \delta \bz^K = - \nabla_{\bz^K} L.
\end{equation}
Our method exploits the agent-wise decomposition of Subproblem 1 and the temporal sparsity of the control problem to compute $\delta \bz^K$ by avoiding forming the entire Hessian $\nabla_{\bz^K,\bz^K} \bar{\calL}$.
Each individual agent's subproblem can be differentiated through by reusing the Hessian factorizations from the DDP forward pass~\citep[Theorem 5]{oshin2024differentiable}.
Moreover, Theorem 3 provides closed-form expressions for the gradients with respect to all of the Subproblem 1 inputs, with proof provided in the below subsection.
These expressions are equivalent to the matrix multiplication $(\delta \bz^K)^\top \nabla_{\bz^K, \bs^K} \bar{\calL}$ by expanding the Hessian with respect to each input. Substituting the adjoint variable \eqref{eq:appendix:subproblem_1_adjoint} into the hypergradient \eqref{eq:appendix:hypergradient_3} and expanding $\bs^K$ using the chain rule yields
\begin{equation*}
\begin{aligned}
    \dv{L}{w} &= (\delta \bz^K)^\top \nabla_{\bz^K, \tz^{K - 1}} \bar{\calL} \pdv{\tz^{K - 1}}{w} \\
    &+ \left( (\delta \tz^K)^\top \nabla_{\tz^K,\bzeta^{K - 1}} \tilde{\calL} + \pdv{L}{\bzeta^K} \pdv{\bzeta^K}{\bzeta^{K - 1}} + (\delta \bz^K)^\top \nabla_{\bz^K, \bzeta^{K - 1}} \bar{\calL} \right) \pdv{\bzeta^{K - 1}}{w} \\
    &+ \left( (\delta \tz^K)^\top \nabla_{\tz^K,\theta^K} \tilde{\calL} + \pdv{L}{\bzeta^K} \pdv{\bzeta^K}{\bTheta^K} + (\delta \bz^K)^\top \nabla_{\bz^K, \bTheta^K} \bar{\calL} \right) \pdv{\bTheta^K}{w}.
\end{aligned}
\end{equation*}

Finally, let
\begin{equation*}
    \pdv{L}{\bTheta^K} \defeq (\delta \tz^K)^\top \nabla_{\tz^K,\theta^K} \tilde{\calL} + \pdv{L}{\bzeta^K} \pdv{\bzeta^K}{\bTheta^K} + (\delta \bz^K)^\top \nabla_{\bz^K, \bTheta^K} \bar{\calL},
\end{equation*}
and note that $\bTheta^K = \pi_{w^K}(\bv^{K - 1}, \bchi)$, so
\begin{equation*}
    \pdv{\bTheta^K}{w} = \pdv{\pi_{w^K}}{w^K} \pdv{w^K}{w} + \pdv{\pi_{w^K}}{\bv^{K - 1}} \pdv{\bv^{K - 1}}{w},
\end{equation*}
where the first term gives the direct contribution of the weights $w_K$ for iteration $K$ and the second term propagates gradients backward to iteration $K - 1$.
Expanding $\bv^{K - 1}$ yields the final hypergradient:
\begin{equation*}
\begin{aligned}
    \dv{L}{w} = \pdv{L}{\bTheta^K} \pdv{\pi_{w^K}}{\bz^{K - 1}} \pdv{\bz^{K - 1}}{w} + \left( (\delta \bz^K)^\top \nabla_{\bz^K, \tz^{K - 1}} \bar{\calL} + \pdv{L}{\bTheta^K} \pdv{\pi_{w^K}}{\tz^{K - 1}} \right) \pdv{\tz^{K - 1}}{w} \\
    + \left( (\delta \tz^K)^\top \nabla_{\tz^K,\bzeta^{K - 1}} \tilde{\calL} + \pdv{L}{\bzeta^K} \pdv{\bzeta^K}{\bzeta^{K - 1}} + (\delta \bz^K)^\top \nabla_{\bz^K, \bzeta^{K - 1}} \bar{\calL} + \pdv{L}{\bTheta^K} \pdv{\pi_{w^K}}{\bzeta^{K - 1}} \right) \pdv{\bzeta^{K - 1}}{w} \\
    + \pdv{L}{\bTheta^K} \pdv{\pi_{w^K}}{w^K} \pdv{w^K}{w}.
\end{aligned}
\end{equation*}

To simplify the notation, define the accumulated partial derivatives through iteration $K$:
\begin{equation*}
\begin{aligned}
    \pdv{L}{\bz^{K - 1}} &\defeq \pdv{L}{\bTheta^K} \pdv{\pi_{w^K}}{\bz^{K - 1}}, \\
    \pdv{L}{\tz^{K - 1}} &\defeq (\delta \bz^K)^\top \nabla_{\bz^K, \tz^{K - 1}} \bar{\calL} + \pdv{L}{\bTheta^K} \pdv{\pi_{w^K}}{\tz^{K - 1}}, \\
    \pdv{L}{\bzeta^{K - 1}} &\defeq (\delta \tz^K)^\top \nabla_{\tz^K,\bzeta^{K - 1}} \tilde{\calL} + \pdv{L}{\bzeta^K} \pdv{\bzeta^K}{\bzeta^{K - 1}} + (\delta \bz^K)^\top \nabla_{\bz^K, \bzeta^{K - 1}} \bar{\calL} + \pdv{L}{\bTheta^K} \pdv{\pi_{w^K}}{\bzeta^{K - 1}}, \\
    \pdv{L}{w^K} &\defeq \pdv{L}{\bTheta^K} \pdv{\pi_{w^K}}{w^K}.
\end{aligned}
\end{equation*}
These derivatives capture how $\bz^{K - 1}$, $\tz^{K - 1}$, $\bzeta^{K - 1}$, and $w^K$ affect the loss through the final iteration (iteration $K$) of Deep Coordinator.
Using these definitions, the hypergradient simplifies to
\begin{equation*}
    \dv{L}{w} = \pdv{L}{\bz^{K - 1}} \pdv{\bz^{K - 1}}{w} + \pdv{L}{\tz^{K - 1}} \pdv{\tz^{K - 1}}{w} + \pdv{L}{\bzeta^{K - 1}} \pdv{\bzeta^{K - 1}}{w} + \pdv{L}{w^K} \pdv{w^K}{w}.
\end{equation*}
This is equivalent to the initial hypergradient presented in \eqref{eq:appendix:hypergradient_1}, except the backpropagation is now through iteration $a = K - 1$.
The process repeats recursively from $a = K, K - 1, \ldots, 1$, accumulating gradients via backpropagation through time.

\newpage
\subsection{Implicit Differentiation Through Subproblem 2}
\noindent

\subsubsection{Barrier Augmentation}
We now prove Proposition~\ref{app:proposition:differentiation_through_subproblem_2a}, which gives the per-timestep linear system for the primal adjoint $\delta \txu_t$.

Since the Subproblem 2 is separable across time, the adjoint system decouples into $T$ independent per-step system solved in parallel.
The forward solve returns a strictly feasible point $g_{t,j} < 0$, which we treat as a stationary point of~\eqref{eq:barrier_objective} and impose the stationarity condition for every $t$:
\begin{equation}
    \nabla_{\txu_t} \check{\calL}_t
    = \nabla_{\txu_t} \tilde{\ell}_t - \tau \sum_{j} \left( \frac{\nabla_{\txu_t} g_{t,j}}{g_{t,j}} \right)
     = 0.
\end{equation}

To compute $\delta \txu_t$, we solve the following linear system:
\begin{equation}
    \nabla_{\txu_t}^2 \check{\calL}_t \delta \txu_t = -\nabla_{\txu_t} L,
\end{equation}
where the Hessian on the left-hand side expands as
\begin{equation}
    \nabla_{\txu_t}^2 \check{\calL}_t = \nabla_{\txu_t}^2 \tilde{\ell}_t
    - \tau \sum_j \frac{\nabla_{\txu_t}^2 g_{t,j}}{g_{t,j}}
    + \tau \sum_j \frac{\nabla_{\txu_t} g_{t,j} \nabla_{\txu_t} g_{t,j}^\top}{g_{t,j}^2}.
\end{equation}

As a constraint becomes nearly active, $g_{t,j}$ is small in magnitude, so the last term scales as $\tau / g_{t,j}^2$ and dominates the Hessian, leaving it poorly conditioned. Moreover, for nonconvex constraints the term $\nabla_{\txu_t}^2 g_{t,j}$ may render $\nabla_{\txu_t}^2 \check{\calL}_t$ indefinite.

To prevent this ill-conditioning and mitigate possible indefiniteness, in practice, we regularize the Hessian and instead solve
\begin{equation}
    (\nabla_{\txu_t}^2 \check{\calL}_t + \delta I) \delta \txu_t = -\nabla_{\txu_t} L,
\end{equation}
where $\delta > 0$ is set to $10^{-5}$ in our implementation.
Note that the source of ill-conditioning here differs from the active-set method. Rather than the rank deficiency caused by linearly dependent active constraints, it stems from the large curvature of nearly-active barrier terms and the possible indefiniteness of nonconvex constraints.
In addition, $\tau$ controls how much the transition between active sets is smoothed; we set it to $10^{-6}$ for training.

\subsubsection{Recover Dual Adjoint}

We now derive Corollary~\ref{app:corollary:differentiation_through_subproblem_2_dual}, which recovers the dual adjoint $\delta \ty_t$ from the primal adjoint $\delta \txu_t$.

Under barrier augmentation the inequality multipliers are not independent variables. Actually, at convergence, they are tied to the primal solution through the perturbed complementarity relation
\begin{equation*}
    \ty_{t,j} = -\tau / g_{t,j},
\end{equation*}
where $\ty_{t,j}$ is the barrier multiplier estimate for the $j$-th constraint $g_{t,j}$. Since this expresses $\ty_{t,j}$ as a function of $\txu_t$ through $g_{t,j}(\txu_t)$, differentiating both sides with respect to $\txu_t$ and contracting with the primal adjoint $\delta \txu_t$ gives
\begin{equation*}
    \delta \ty_{t,j} = \nabla_{\txu_t}\!\left(-\frac{\tau}{g_{t,j}}\right)^{\!\top}\! \delta \txu_t = \frac{\tau}{g_{t,j}^2}\, \nabla_{\txu_t} g_{t,j}^\top\, \delta \txu_t,
\end{equation*}
and the full dual adjoint $\delta \ty_t$ is recovered by concatenating $\delta \ty_{t,j}$ over $j$.

Because $\delta \ty_t$ follows in closed form once $\delta \txu_t$ is known, no separate dual linear system is required. As previously noted, $\delta \ty_t$ does not enter the hypergradient and is reported only for completeness.

\newpage

\subsection{Implicit Differentiation Through Subproblem 1}
\noindent
Finally, we prove Theorem~\ref{app:theorem:differentiation_through_subproblem_1}, which gives closed-form expressions for the gradients of $L$ with respect to the Subproblem 1 inputs.
\begin{proof}
Recall that for Subproblem 1, each agent solves a control problem with the form
\begin{equation*}
\begin{aligned}
    \min_{x, u} & \sum_{t = 0}^{T - 1} \Bigl(\ell(x_t, u_t) + \frac{1}{2} x_t^\top (\rho_t I) x_t + (\lambda_t - \rho_t \odot \tilde{x}_t)^\top x_t \\
    & \quad + \frac{1}{2} u_t^\top (\mu_t I) u_t + (\xi_t - \mu_t \odot \tilde{u}_t)^\top u_t \Bigl) \\
    & \quad + \ell_T(x_T) + \frac{1}{2} x_T^\top (\rho_T I) x_T + (\lambda_T - \rho_T \odot \tilde{x}_T)^\top x_T, \\
    \text{subject to} \quad & x_{t + 1} = f(x_t, u_t), \quad x_0 = \bar{x}_0,
\end{aligned}
\end{equation*}
where we have simplified the quadratic penalty terms and dropped the agent index $i$ for clarity.
Let $\bar{\calL}(x, u, y; s)$ denote the Lagrangian for this optimization, where recall $s = (\tilde{x}, \tilde{u}, \lambda, \xi, \rho, \mu)$.
The optimal solution $z^* = (x^*, u^*, y^*)$ satisfies the KKT conditions $\nabla_z \bar{\calL}(z^*; s) = 0$, namely
\begin{equation*}
\begin{aligned}
    \nabla_{y_0} \bar{\calL} &= \bar{x}_0 - x_0^* = 0, \\
    & \hquad \vdots \\
    \nabla_{x_t} \bar{\calL} &= \nabla_x \ell_t(x_t^*, u_t^*) + \rho_t \odot (x_t^* - \tilde{x}_t) + \lambda_t + \pdv{f_t}{x_t}^\top y_{t + 1}^* - y_t^* = 0, \\
    \nabla_{u_t} \bar{\calL} &= \nabla_u \ell_t(x_t^*, u_t^*) + \mu_t \odot (u_t^* - \tilde{u}_t) + \xi_t + \pdv{f_t}{u_t}^\top y_{t + 1}^* = 0, \\
    \nabla_{y_{t + 1}} \bar{\calL} &= f(x_t^*, u_t^*) - x_{t + 1}^* = 0, \\
    & \hquad \vdots \\
    \nabla_{x_T} \bar{\calL} &= \nabla \ell_T(x_T^*) + \rho_T \odot (x_T^* - \tilde{x}_T) + \lambda_T - y_T^* = 0.
\end{aligned}
\end{equation*}
Given $\nabla_z L$, reverse-mode AD through this optimization involves computing the adjoint vector $\delta z = -\nabla_{zz}\bar{\calL}^{-1} \nabla_z L$, as discussed above.
This linear system can be solved efficiently through a single DDP iteration that reuses the Hessian factorizations from the last iteration of DDP from the forward pass~\citep[Theorem 5]{oshin2024differentiable}.
This vector is then multiplied by the Hessian $\nabla_{zs} \bar{\calL}$ yielding $\nabla_s L = \nabla_{zs} \bar{\calL}^\top \delta z$.
The individual blocks of the Hessian $\nabla_{zs} \bar{\calL}$ are given by
\begin{equation}
\label{eq:appendix:subproblem_1_gradients}
\begin{aligned}
    \nabla_{x_t,\tilde{x}_t} \bar{\calL} &= \diag(-\rho_t), \\
    \nabla_{x_t,\lambda_t} \bar{\calL} &= I, \\
    \nabla_{x_t,\rho_t} \bar{\calL} &= \diag(x_t - \tilde{x}_t), \\
\end{aligned} \quad
\begin{aligned}
    \nabla_{u_t,\tilde{u}_t} \bar{\calL} &= \diag(-\mu_t), \\
    \nabla_{u_t,\xi_t} \bar{\calL} &= I, \\
    \nabla_{u_t,\mu_t} \bar{\calL} &= \diag(u_t - \tilde{u}_t),
\end{aligned}
\end{equation}
which can be derived straightforwardly by noting that each input ($\tilde{x}_t$, $\lambda_t$, $\rho_t$, etc.) at time $t$ only depends on its respective state/control at time $t$.
Multiplying each of these terms by the respective elements of $\delta z$ yields~\eqref{eq:appendix:subproblem_1_gradients}.
\end{proof}

\newpage
\section{Details on Experimental Setup}\label{app:details_on_experimental_setup}
\subsection{Details on Problem Types}\label{app:details_on_problem_types}
In this appendix, we include details on problem generation and agent dynamics. Note that we warm-start Subproblem 2 with the solution from Subproblem 1. Also, for fast computation of Subproblem 2, we linearize all constraints at current nominal trajectory making the problem a linearly-constrained quadratic programming problem.

\subsubsection{Car Obstacle Field}
We first consider a multi-agent avoidance task with a randomized obstacle field. Each agent has Dubins vehicle dynamics, with the state given by $(x_{i,t}, y_{i,t}, \theta_{i,t})\in\R^3$ and controls given by $(v_{i,t}, \omega_{i,t})\in\R^2$, where $(x_{i,t}, y_{i,t})$ is the agent's position, $\theta_{i,t}$ is its orientation, $v_{i,t}$ is its linear forward velocity, and $\omega_{i,t}$ is its angular velocity. The dynamics are Euler-discretized with timestep $dt = 0.1$. Each agent has quadratic cost 
\begin{equation} \label{eq:quadratic_cost}
\begin{aligned}[c]
    \paramell{t}(\bx_{i,t}, \bu_{i,t}) &= \frac{1}{2}\bx_{i,t}^\top \bQ_i \bx_{i,t} + \frac{1}{2}\bu_{i,t}^\top \bR_i \bu_{i,t} \\
    \paramell{T}(\bx_{i,T}) &= \frac{1}{2}\bx_{i,T}^\top \bQ_i^f \bx_{i,T}
\end{aligned}
\end{equation}
with $\bQ_i = \bI_3$, $\bR_i = \diag(0.3, 0.3)$ and $\bQ_i^f = 100 \cdot \bQ_i$, where $\bI_3 \in \R^{3 \times 3}$ is the identity matrix. The controls are subject to $-v_{\max} \leq v_{i,t} \leq v_{\max}$ and $\quad -\omega_{\max}\leq \omega_{i,t} \leq \omega_{\max}$, where $v_{\max} = 5$ m/s and $\omega_{\max} = 5$ rad/s.
Each vehicle must also satisfy the circular obstacle constraints defined by
\begin{equation}\label{eq:car_obstacle_field_circular_obstacle_constraints}
    \norm{(x_{i,t}, y_{i,t}) - \bc}_2 \geq r_{\text{circ}} + d_{\text{safe}},
\end{equation}
where $\bc = (c_1, c_2)$ is the center of the obstacle, $r_{\text{circ}} > 0$ is its radius, and $d_{\text{safe}} > 0$ is the desired safety distance.
The agents must also abide by the inter-agent distance constraints
\begin{equation}\label{eq:car_obstacle_field_interagent_constraints}
    \norm{(x_{i,t}, y_{i,t}) - (x_{j,t}, y_{j,t})}_2 \geq d_{\text{safe}} \ \forall (i, j), t.
\end{equation}
We consider $N = 15$ agents initialized in a stationary grid and a target formation across the obstacle field. The agents must navigate to the target formation across a field of 3 circular obstacles with randomized radii and centers.

\subsubsection{Car Intersection}
\label{app:sec:car_intersection_generation}
Next, we consider a more complex task where both the initial and target positions of the agents are randomized, requiring significantly different solutions for each problem instance.
We consider $N = 8$ agents with Dubins vehicle dynamics discretized in $dt = 0.05$ and random initial conditions chosen from one of 16 fixed locations at a four-way road intersection.
The remaining 8 positions are then randomly assigned as targets.
The agents must navigate the intersection without running off the road or colliding to reach their target.
The quadratic cost~\eqref{eq:quadratic_cost} is chosen with parameters $\bQ_i = \diag(1, 1, 0)$, $\bR_i =\diag(0.1, 0.1)$ and $\bQ_i^f = 100 \cdot \bI_3$ imposed for each agent.
The control box constraints are set to $v_{\max} = 5$ m/s and $\omega_{\max} = 1$ rad/s.

We center the intersection at $\bc \in \R^2$, where each roadway $j\in\{1,2\}$ has a heading $\boldsymbol{\theta}_j$ and width $w_j > 0$. Road $j$ is the set of points whose lateral distance to the road centerline is at most $w_j /2$. Thus, for an agent $i$, its road membership constraint with respect to road $j$ is defined by
\begin{equation}
    s_{i,t}^j = |\boldsymbol{n}(\boldsymbol{\theta}_j)^\top(\bp_{i,t}- \bc)| - \left(\frac{w_j}{2} - d_{\text{safe}}\right) \leq 0,
\end{equation}
where $\boldsymbol{n}(\boldsymbol{\theta}_j) = (-\sin\boldsymbol{\theta}_j, \cos\boldsymbol{\theta}_j)\in\R^2$. The intersection constraint is then given by
\begin{equation}
    \softmin_\beta(s_{i,t}^1, s_{i,t}^2) \leq 0,
\end{equation}
where $\beta$ is the reciprocal of the temperature of the softmin function. For our problem setup, we set $\bc = (0,0)$, $\boldsymbol{\theta}_1 = 0$, $\boldsymbol{\theta}_2 = \pi/2$, $w_1=w_2=10$, and $\beta = 100$.

\subsubsection{Quadrotor Obstacle Field}
Finally, we deploy \deepcoordinator on multi-agent quadrotor maneuvering problems. We adopt the dynamics model from~\citep{Sabatino2015QuadrotorCM} where the state is given by the positions $\bp_{i, t}$, body velocities $\bv_{i,t}$, Euler angles $\bTheta_{i,t}$, and body rates $\bOmega_{i,t}$ of the quadrotor, and the controls are the collective thrust $F_{i, t}$ and torques $\tau_{i,t}^x, \tau_{i,t}^y, \tau_{i,t}^z$.
We discretize the dynamics using the Euler method with timestep $dt = 0.1$.
The problem consists of $N = 10$ agents and a time horizon of $T = 100$ timesteps.
Each agent has a quadratic cost~\eqref{eq:quadratic_cost} with $\bQ_i = \bdiag(\bI_3, 0.1\bI_3, 0.7\bI_3,0.1\bI_3)$, $\bR_i = \bI_4$, and $\bQ_i^f=100 \cdot \bQ_i$.
The control limits for each quadrotor are
\begin{equation*}
    F_{\min} \leq F_{i,t} \leq F_{\max}, \quad -\tau_{\max} \leq\tau_{i,t}^x, \tau_{i,t}^y, \tau_{i,t}^z\leq \tau_{\max},
\end{equation*}
where $F_{\min} = 5$, $F_{\max} = 15$ N, and $\tau_{\max} = 0.2$ N$\cdot$m. We use constraints identical to \eqref{eq:car_obstacle_field_circular_obstacle_constraints} to enforce cylindrical obstacle constraints and impose inter-agent distance constraints by extending \eqref{eq:car_obstacle_field_interagent_constraints} to $\R^3$. 

The drones are initialized with random altitude displacements $p_{i,0}^z$ and aim to reach their designated target state while navigating across a field with 7 cylindrical obstacles with randomized radii and centers. To maximize interactions between agents, we set target states in a grid such that the agents initialized farthest from the obstacle are assigned the most distant targets.

\newpage
\subsection{Details on Scaled Problem Types}\label{app:details_on_scaled_problem_types}
In this appendix, we discuss how the problems for scaling experiments are generated. We let $M$ denote the scaling factor relative to the training problems. 

\subsubsection{Scaled Car Obstacle Field}
For each problem, we randomly sample an obstacle field containing $2M + 1$ circular obstacles. The obstacle centers $c_{o}$ have $y$-positions uniformly spaced along $[-12M - 9.5, \, 12M + 2.5]$; the offset of 2.5 is added to ensure that more agents are initialized at the same y-coordinate as the obstacle centers, as was the case in the original task, which forces agents to break the symmetry when passing the obstacle. The $x$-positions and radii are sampled independently via $c_{o, x} \sim \mathcal{U}([-7, \,7])$ and $r_o \sim \mathcal{U}([2.75, \,4.75])$, similarly to the original task. Again, a safety radius of 1.0 is enforced. Problems contain $15M$ Dubins vehicle agents, initially arranged in a $5M \times 3$ grid where the agent in row $n$ and column $m$ has position
\begin{equation*}
\begin{aligned}[c]
    p_{\textrm{init}, mn} = (-25 + 5m, -2.5 - 10M + 5n)
\end{aligned}
\end{equation*}
for $n = 0, ..., 5M - 1$ and $m = 0, ..., 2$. The agents begin with an initial heading of 0 rad. Each agent's target state is again $x_{\textrm{target}, i} = x_{\textrm{init}, i} + (40, 0, 0)$. Obstacle constraints are enforced between all obstacles and agents. To minimize the number of inter-agent constraints, each agent only imposes these constraints on the agents initialized in its own row, the four rows above it, and the four rows below it.

\begin{figure}[h]
    \centering
    \includegraphics[width=\linewidth]{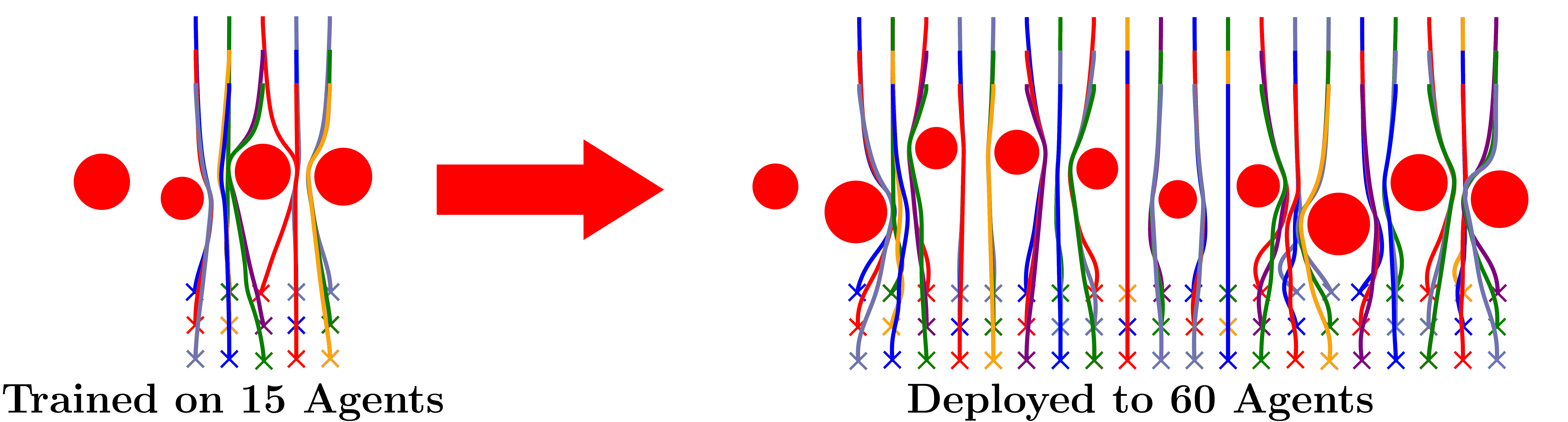}
    \caption{Car obstacle field scaling experiments. \deepcoordinator trained on $N = 15$ agents and deployed to $N = 60$ agents.}
    \label{fig:obs_scale}
\end{figure}

In these scaled car obstacle field problems, scaling is nearly isomorphic. In other words, each agent observes a structurally similar local problem at each scaling factor even as the total number of agents increases. Thus, this setting enables us to analyze whether \deepcoordinator produces transferable coordination rules rather than memorizing the training problem size. 

\subsubsection{Scaled Car Intersection}
For each problem, we employ the roadway constraint described in Appendix~\ref{app:sec:car_intersection_generation} with $c = (0, 0), \, \theta_1 = 0, \, \theta_2 = \pi/2, \, w_1 = w_2 = 10$ and $\beta = 100$. We similarly generate the initial and target state sets as
\begin{subequations}
    \begin{align}
        \mathcal{X}_{\textrm{inits}} &= \bigcup_{d\in\mathcal{D}}\{(-d, \pm h, 0), (\pm h, -d, -\pi/2), (d, \pm h, \pi), (\pm h, d, \pi/2)\}, \\
        \mathcal{X}_{\textrm{targets}} &= \bigcup_{d\in\mathcal{D}}\{(-d, \pm h, \pi), (\pm h, -d, \pi/2), (d, \pm h, 0), (\pm h, d, -\pi/2)\},
    \end{align}
\end{subequations}
where $\mathcal{D} = \{8.0, 10.0, 12.0\}$ when $N=12$, and $\mathcal{D} = \{8.0, 9.0, 10.0, 11.0\}$ when $N = 16$. We then sample a subset of initial and target states uniformly at random without replacement, with target positions distinct from all sampled initial positions. Then, we perturb the sampled initial and target positions of each agent to get the final states
\begin{subequations}
    \begin{align}
        x_{\textrm{init}, i} &= \overline{x}_{\textrm{init}, i} + v_i, \ v_i\sim\mathcal{N}(\bzero_3, \sigma\bdiag(\bI_2, \bzero_1)), \\
        x_{\textrm{target}, i} &= \overline{x}_{\textrm{target}, i} + w_i, \ w_i\sim\mathcal{N}(\bzero_3, \sigma\bdiag(\bI_2, \bzero_1)),
    \end{align}
\end{subequations}
where $\sigma = 0.1$, $\bI_2\in\R^{2\times 2}$ is the identity matrix, and $\bzero_3\in\R^3$, $\bzero_1\in\R$ are zero vectors.

\begin{figure}[h]
    \centering
    \includegraphics[width=\linewidth]{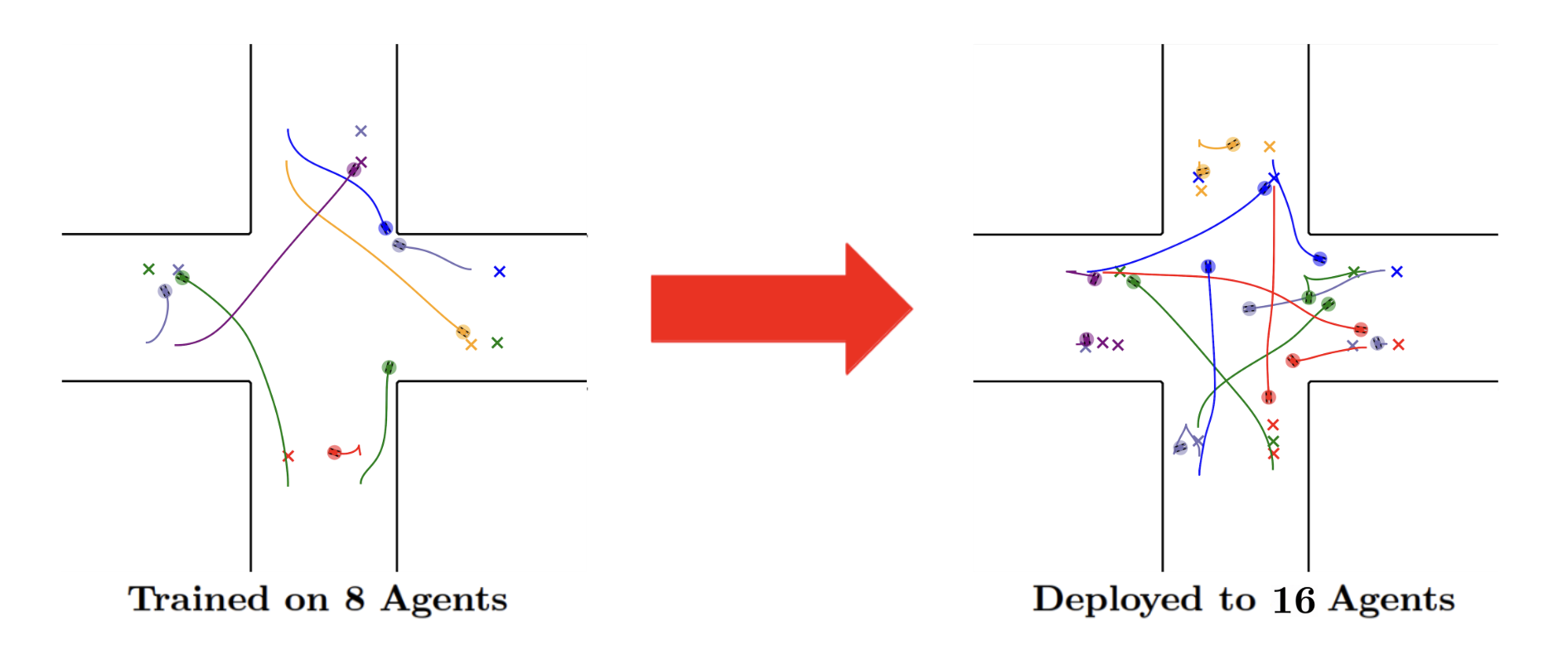}
    \caption{Car intersection scaling experiments. \deepcoordinator trained on $N = 8$ agents and deployed to $N = 16$ agents.}
    \label{fig:inter_scale}
\end{figure}

Unlike the obstacle field problems, the larger intersection problems are not locally isomorphic to smaller problems. We increase the density of the agents, introducing more competing right-of-way decisions and tighter coupling between agents. 

\subsubsection{Scaled Quadrotor Obstacle Field}
For each problem, we randomly sample an obstacle field containing $7M$ cylindrical obstacles. The obstacle centers $c_o$ have $y$-positions uniformly spaced along $[-18M, \,18M]$ and $x$-positions and radii are sampled independently via $c_{o, x} \sim \mathcal{U}([-7, \,7])$ and $r_o \sim \mathcal{U}([1.5, \,2])$, similarly to the original task. Again, a safety radius of 1.0 is enforced. Problems contain $10M$ quadrotor agents, initially arranged in a $5M \times 2$ grid where the agent in row $n$ and column $m$ has $x,y$ position 
\begin{equation*}
\begin{aligned}[c]
    (p^x, p^y)_{\textrm{init}, \,mn} = (-20 + 5m, -2.5 - 10M + 5n)
\end{aligned}
\end{equation*}
for $n = 0, ..., 5M - 1$ and $m = 0, 1$. The initial $z$ position of each quadrotor is independently sampled via $p_{\textrm{init}, \,i}^z \sim \mathcal{U}([-12,\,12])$ and all other state variables are again initialized to the hovering condition. Each agent's target state is given by $p_{\textrm{target}, \,i} = p_{\textrm{init}, \,i} \odot (-1, 1, ..., 1)$. The target $z$ position for all agents is set to zero, and the hovering condition is enforced at the target state. Obstacle constraints are enforced between all obstacles and agents. To minimize the number of inter-agent constraints, each agent only imposes these constraints on the agents initialized in its own row, the four rows above it, and the four rows below it.

Similar to the scaled car obstacle field problems, scaling is nearly isomorphic in our scaled quadrotor obstacle field problems. 

\begin{figure}[h]
    \centering
    \includegraphics[width=\linewidth]{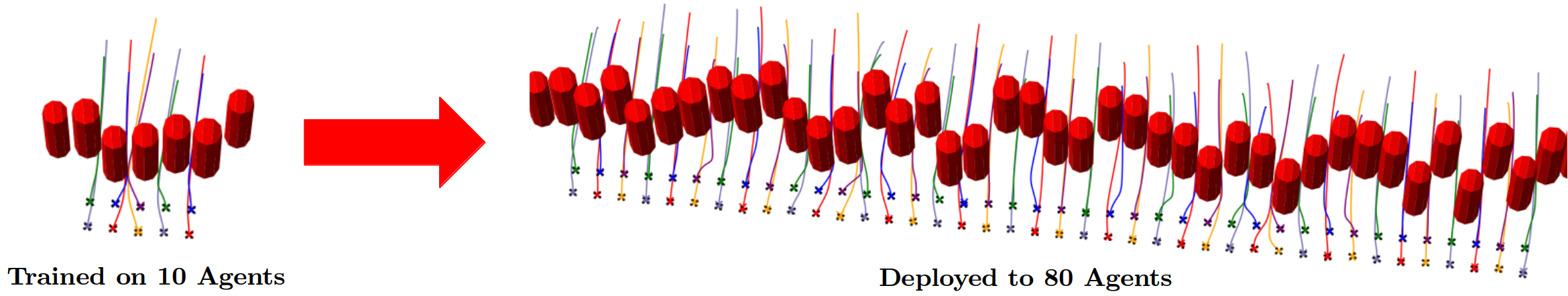}
    \caption{Quadrotor obstacle field scaling experiments. \deepcoordinator trained on $N = 10$ agents and deployed to $N = 80$ agents.}
    \label{fig:quad_scale}
\end{figure}

\newpage 

\newpage
\subsection{LSTM Architecture}\label{app:lstm}
In this appendix, we describe the LSTM feedback architecture used in the experiments. Recall that we learn policies $\pi_{w^a}$ which adapt hyperparameters $\boldsymbol{\theta}^a = [\boldsymbol{\rho}^a, \boldsymbol{\mu}^a]$ at every network layer via
\begin{equation}
\begin{split}
    \boldsymbol{\theta}^a = \pi_{w^a}(\boldsymbol{\theta}^{a-1}, \bv^{a-1}, \boldsymbol{\chi}).
\end{split}
\end{equation}
In the version of \deepcoordinator parameterized by an LSTM, we employ the shared per-agent, per-timestep feedback scheme. We predict $\rho$ and $\mu$ using two different policies with identical architectures but different weights $w_\rho$ and $w_\mu$. This structure ensures that \deepcoordinator learns policies that can generalize to larger problems but are also aware of the different roles played by $\rho$ and $\mu$. To minimize the number of weights, $w_\rho$ and $w_\mu$ are shared across all ADMM-DDP iterations. 

The policies $\pi_{w_\rho}$ and $\pi_{w_\mu}$ are executed for each agent $i$, timestep $t$, and state or control dimension $d$ to generate the relevant component of the penalty vector. Each instantiation of the policy is given 14 inputs derived from the previous hyperparameters $\theta^{a -1}$, the previous iterate $\bv^{a-1}$ and the problem context $\boldsymbol{\chi}$. The first three inputs are the relevant components of the dual variable and ADMM residuals. Formally, these inputs may be expressed as
\begin{align}
    \blambda^{a-1}_{i, t, d}&, \\
    \boldsymbol{r}_{\textrm{prim}, x, i, t, d}^{a-1}&= \bx^{a-1}_{i, t, d} - \tx^{a-1}_{i, t, d}, \\
    \boldsymbol{r}_{\textrm{dual},x, i, t, d}^{a-1}&= \boldsymbol{\rho}^{a-1}_{i, t, d}(\tx^{a-1}_{i, t, d} - \tx^{a-2}_{i, t, d}),
\end{align}
for $\pi_{w_\rho^a}$, and 
\begin{align}
    \bxi^{a-1}_{i, t, d} &, \\
    \boldsymbol{r}_{\textrm{prim},u, i, t, d}^{a-1}&= \bu^{a-1}_{i, t, d} - \tu^{a-1}_{i, t, d}, \\
    \boldsymbol{r}_{\textrm{dual}, u, i, t, d}^{a-1}&= \boldsymbol{\mu}^{a-1}_{i, t, d}(\tu^{a-1}_{i, t, d} - \tu^{a-2}_{i, t, d}),
\end{align}
for $\pi_{w_\mu^a}$. Both policies then receive an input representing the global average state constraint violation
\begin{equation}
\begin{split}
    G(\bx^{a-1}, \bu^{a-1}) &= \frac{1}{M}\sum_{g_{i,t}\in \boldsymbol{\chi}} \max\{0, g_{i,t}(\bx_{i,t}^{a-1}, \bu_{i,t}^{a-1})\}^2 \\
    &\qquad \qquad \quad+ \frac{1}{M}\sum_{h_{ij,t}\in\boldsymbol{\chi}} \max\{0, h_{ij,t}(\bx_{i,t}^{a-1}, \bu_{i,t}^{a-1}, \bx_{j,t}^{a-1}, \bu_{j,t}^{a-1})\}^2
\end{split}
\end{equation}
where $M$ is the total number of constraints, $g_{i,t}$ are the single-agent constraints for agent $i$ at time step $t$, and $h_{ij,t}$ are the inter-agent constraints for agents $i$ and $j$ at timestep $t$. By providing the constraint violation as an input to $\pi_{w^a}$, we enable our policy to adapt the penalty parameters to both indirect ($\blambda, \bxi$) and direct ($g_{i,t}, h_{ij,t}$) metrics regarding trajectory safety. 

Finally, both policies also receive a positional-encoding vector $PE^{a}\in\R^{10}$, enabling the feedback functions to adapt their behavior at each of the $K$ unfolded ADMM-DDP iterations while keeping shared weights across all layers. We use the transformer-style positional encoding where entry $l$ is given by
\begin{equation}
    (PE^a)_l = \begin{cases}
        \displaystyle \sin\left(\frac{a}{2^{l/10}}\right) & \text{if $l$ is even}, \\
        \displaystyle \cos\left(\frac{a}{2^{l/10}}\right) & \text{if $l$ is odd}.
    \end{cases}
\end{equation}
Each of these inputs is then concatenated into the vectors 
\begin{subequations}
    \begin{align}
        [\blambda^{a-1}_{i, t, d}, \, \boldsymbol{r}_{\textrm{prim}, x, i, t, d}^{a-1},  \, \boldsymbol{r}_{\textrm{dual}, x, i, t, d}^{a-1},  \, G(\bx^{a-1}, \,  \bu^{a-1}),  PE^a], \\
        [\bxi^{a-1}_{i, t, d},  \, \boldsymbol{r}_{\textrm{prim}, u, i, t, d}^{a-1},  \, \boldsymbol{r}_{\textrm{dual}, u, i, t, d}^{a-1},  \, G(\bx^{a-1}, \bu^{a-1}),  \,  PE^a].
    \end{align}
\end{subequations}
We parameterize $\pi_{w_\rho}$, $\pi_{w_\mu}$ as recurrent neural networks consisting of a single LSTM cell at each layer. At each unrolled iteration of ADMM-DDP, the LSTM receives the input vector of dimension $d_{\text{in}} = 14$ and maintains a hidden state and cell state of dimension $d_h = 32$. To compute the final penalty parameter value, the output of the LSTM is transformed by applying a multilayer perceptron (MLP) to the hidden state. The MLP consists of three fully-connected hidden layers of size 32 with sigmoid activation functions,
\begin{equation}
    \sigma(z)  = \frac{1}{1+e^{-z}},
\end{equation}
applied between layers and an $\exp(\cdot)$ activation applied to the final layer. This design decouples temporal representation learning, handled by the LSTM, from output regression, handled by the MLP. The final output dimension from the MLP is $d_\textrm{out}=1$, enabling our policy to remain agnostic of problem dimensions by batching over the agent, time, and state or control dimension.

Lastly, we scale the output from the LSTM architecture by learned scaling parameters $\gamma_{\rho}^a$ and $\gamma_{\mu}^a$. We initialize these parameters to match the tuning of Vanilla ADMM-DDP to ensure \deepcoordinator does not converge to the degenerate behavior of producing near-zero $\boldsymbol{\rho}^a$ and $\boldsymbol{\mu}^a$ at every iteration. 

\newpage
\subsection{Baseline Solvers}\label{app:baseline_solvers}
\subsubsection{Vanilla ADMM-DDP} \label{subsec:van_admm_ddp}
Vanilla ADMM-DDP refers to ADMM-DDP using constant hyperparameters across all iterations. To tune Vanilla ADMM-DDP, we select the lowest values of $\rho, \mu \in \{..., 10, 20, 30, ..., 90, 100, 200, 300, ...\}$ that yield trajectories which are constraint-satisfying and reach the target after $D$ iterations, where $D$ is the number of iterations used to generate the dataset. This yields $\rho = 300, \mu = 100$ for car obstacle field, $\rho = 500, \mu = 400$ for car intersection, and $\rho = 200, \mu = 400$ for quadrotor obstacle field.

\subsubsection{Adaptive ADMM-DDP} Adaptive ADMM-DDP refers to ADMM-DDP which adapts penalty parameters $\boldsymbol{\rho}^a$ and $\boldsymbol{\mu}^a$ using  the ADMM adaptation rule,
\begin{equation}
    \boldsymbol{\rho}_{i,t}^a= \begin{cases}
    \boldsymbol{\rho}_{i,t}^{a-1}\chi_{\textrm{adapt}} & \text{if } \lVert\boldsymbol{r}_{\textrm{prim}, x,i,t}^{a-1}\rVert_2 \geq \sigma_{\textrm{adapt}} \lVert\boldsymbol{r}_{\textrm{dual}, x, i,t}^{a-1}\rVert_2, \\
    \boldsymbol{\rho}_{i,t}^{a-1}/\chi_{\textrm{adapt}} & \text{if } \lVert \boldsymbol{r}_{\textrm{dual}, x, i,t}^{a-1}\rVert_2 \geq \sigma _{\textrm{adapt}}\lVert\boldsymbol{r}_{\textrm{prim}, x,i,t}^{a-1}\rVert_2,
    \end{cases}
\end{equation}
where $\chi_{\textrm{adapt}}> 0$ and $\sigma_{\textrm{adapt}} > 0$, and $\boldsymbol{\mu}^a$ is adapted similarly. We initialize the penalty parameters to be identical to those used in Vanilla ADMM-DDP and tune the adaptive scaling terms $\chi_{\textrm{adapt}}$ and $_{\sigma_\textrm{adapt}}$ until the adaptive solver is numerically stable with minimal constraint violation over 30 iterations. This yields $\chi_{\textrm{adapt}} = 1.2$ and $\sigma_{\textrm{adapt}} = 10$.

\newpage
\subsection{Training Setup}\label{app:training_setup_details}
For both the supervised and unsupervised training schemes, we train policies on 80 problems split into 4 batches, and select the model which shows minimum loss value within 50 epochs. For training, we utilize the AdamW~\citep{loshchilov2017decoupled} optimizer with the weight decay set to $10^{-3}$. The learning rate $\gamma_\text{rate}$ is tuned separately for each (training scheme, task, policy) tuple. 

\subsubsection{Supervised Training}
\label{app:sec:supervised_training}
For supervised training, the ground truth solution $\bv^*(\boldsymbol{\chi}_m)$ is obtained by allowing Vanilla ADMM-DDP as described in ~\ref{subsec:van_admm_ddp} to run until convergence.

For the car intersection and quadrotor obstacle field task, \deepcoordinator is trained with the following supervised loss on all iterates
\begin{equation}
\label{app:eq:loss_all_iterates_scaled}
    L_j(\{\bv^a\}_{a = 1}^K) = \sum_{a = 1}^K \gamma^a ||\bv^a - \bv^*(\boldsymbol{\chi}_j)||_2^2,
\end{equation}
where $\bv^a$ is the iterate produced by \deepcoordinator at iteration $a$ and $\gamma^a$ is a scaling parameter. 
The scaling parameter is included to equalize the contributions from all iterates, as later iterates exhibit smaller distances. 
Under the control-theoretic interpretation of deep-unfolding, these intermediate costs can be interpreted as stage costs that encourage the optimizer to find a smooth path to the optimal solution.
In our experiments, we set $\gamma^a = \exp(\tfrac{a+1-K}{K/4})$.

However, the benefit of considering the intermediate iterates does not hold for all training tasks. In particular, \deepcoordinator models trained on the car obstacle field task exhibited improved performance when trained with the following supervised loss which considers only the final iterate
\begin{equation}
    \label{app:eq:loss_final_iterate}
    L(\bv^K) =  ||\bv^K - \bv^*(\boldsymbol{\chi}_j)||_2^2,
\end{equation}
where $\bv^K$ is the iterate at the final iteration.

Importantly, for both choices of supervised loss, we evaluate the norm over the dual variables $\blambda, \bxi$ in addition to the primal variables.
The dual variables capture how difficult it is to satisfy the constraints, and thus play an important role in shaping the optimizer's dynamics.
Including $\blambda, \bxi$ in the loss allows the learned optimizer to take this coupling information into account, improving generalizability and convergence towards the optimal solution $\bv^*$.

We tuned learning rate $\gamma_\text{rate}$ for each policy by sweeping values in $\{..., 1\times10^{-3}, 2\times10^{-3}, 5\times10^{-3}, 1\times10^{-2}, 2\times10^{-2}, 5\times10^{-2}, ...\}$.

\begin{table}[h]
    \centering
    \begin{tabular}{ccccc}
         Task & Parameter & LS & LSPI & LSTM  \\
         \toprule
         Car Obstacle Field & $\gamma_\text{rate}$ & $0.1$ & $0.05$ & $0.005$ \\
         \midrule
         Car Intersection & $\gamma_\text{rate}$ & $0.1$ & $0.04$ & $0.002$ \\
         \midrule
         Quadrotor Obstacle Field & $\gamma_\text{rate}$ & $0.1$ & $0.1$ & $0.002$ \\
         \bottomrule \\
    \end{tabular}
    \caption{Tuned parameters of supervised scheme for each policy.}
    \label{tab:supervised_parameters}
\end{table}

\subsubsection{Unsupervised Training}
For unsupervised training, we utilize the same 80 environments as in supervised training, though the ground-truth solution $\bv^*(\boldsymbol{\chi}_m)$ is not necessary in this case. By adjusting $\gamma_\text{cost}$ and $\gamma_\text{const}$, we can control the trade-off between constraint satisfaction and cost minimization. However, it is not necessary to tune all three of $\gamma_\text{cost},$  $\gamma_\text{const},$ and $\gamma_\text{rate}$, since the latter acts as a scaling factor for the two other terms. 

Instead, we fix $\gamma_\text{const} = 2 \times 10^{7}$ and coarsely sweep $\gamma_\text{cost}$ over values in $\{..., \, 0.015, \, 0.05, \, 0.15, \\ \, 0.5, \,...\}$ and the learning rate $\gamma_\text{rate}$ over values in $\{..., 1\times10^{-3}, 2\times10^{-3}, 5\times10^{-3}, 1\times10^{-2}, 2\times10^{-2}, 5\times10^{-2}, 1\times10^{-1}, 2\times10^{-1}, 5\times10^{-1}, ...\}$.
Note that the resulting performance is fairly robust to the precise choice of $\gamma_\text{cost}$ and $\gamma_\text{const}$, as indicated by the coarseness of the sweep.

\begin{table}[h]
    \centering
    \begin{tabular}{ccccc}
         Task & Parameter & LS & LSPI & LSTM  \\
         \toprule
         \multirow{3}{*}{Car Obstacle Field} & $\gamma_\text{rate}$ & $0.2$ & $0.05$ & $0.01$ \\
         & $\gamma_\text{cost}$ & $0.05$ & $0.015$ & $0.015$ \\
         & $\gamma_\text{const}$  & $2 \!\times\! 10^{7}$ & $2 \!\times\! 10^{7}$ & $2 \!\times\! 10^{7}$ \\
         \midrule
         \multirow{3}{*}{Car Intersection} & $\gamma_\text{rate}$ & $0.1$ & $0.05$ & $0.002$ \\
         & $\gamma_\text{cost}$ & $0.15$ & $0.015$ & $0.05$ \\
         & $\gamma_\text{const}$  & $2 \!\times\! 10^{7}$ & $2 \!\times\! 10^{7}$ & $2 \!\times\! 10^{7}$ \\
         \midrule
         \multirow{3}{*}{Quadrotor Obstacle Field} & $\gamma_\text{rate}$ & $0.05$ & $0.05$ & $0.002$ \\
         & $\gamma_\text{cost}$ & $0.05$ & $0.05$ & $0.015$ \\
         & $\gamma_\text{const}$  & $2 \!\times\! 10^{7}$ & $2 \!\times\! 10^{7}$ & $2 \!\times\! 10^{7}$ \\
         \bottomrule \\
    \end{tabular}
    \caption{Tuned parameters of unsupervised scheme for each policy.}
    \label{tab:supervised_parameters}
\end{table}

\newpage
\section{Detailed Experimental Results}\label{app:detailed_experimental_results}

\subsection{Cost and Constraint Violation Over an Extended Wall-Clock Time}\label{app:extended_wall_clock_time_plots}
Fig. \ref{fig:car_plot_type_2}-\ref{fig:quad_plot_type_2} show the cost and maximum constraint violation of \deepcoordinator and the baseline models on 20 unseen, training-scale problems over an extended wall-clock time. Since the wall-clock time required to complete an iteration of ADMM-DDP may vary between problem instances, the values are sub-sampled at discrete time intervals. The horizontal dashed lines represent the value found by the model at $K = 30$ iterations. Adaptive ADMM-DDP is truncated at 30 iterations as it frequently produces numerically unstable subproblems when run for longer, causing the optimization process to fail. All graphs end when Vanilla ADMM-DDP converges on all problems.

\begin{figure}[H]
\centering
% \begin{tabular}{ccc}
%     \hspace{-12pt}
%     \includegraphics[width=0.33\linewidth]{Figures/per_policy_output/carobs/plots/mult_1/bar_cost.pdf} &
%     \hspace{-12pt}
%     \includegraphics[width=0.33\linewidth]{Figures/per_policy_output/carobs/plots/mult_1/bar_constraint.pdf} &
%     \hspace{-12pt}
%     \includegraphics[width=0.33\linewidth]{Figures/per_policy_output/carobs/plots/mult_1/bar_time_to_target.pdf}
%     \hspace{-12pt}
% \end{tabular}
\begin{tabular}{cc}
    \hspace{-12pt}
    \includegraphics[width=0.49\linewidth]{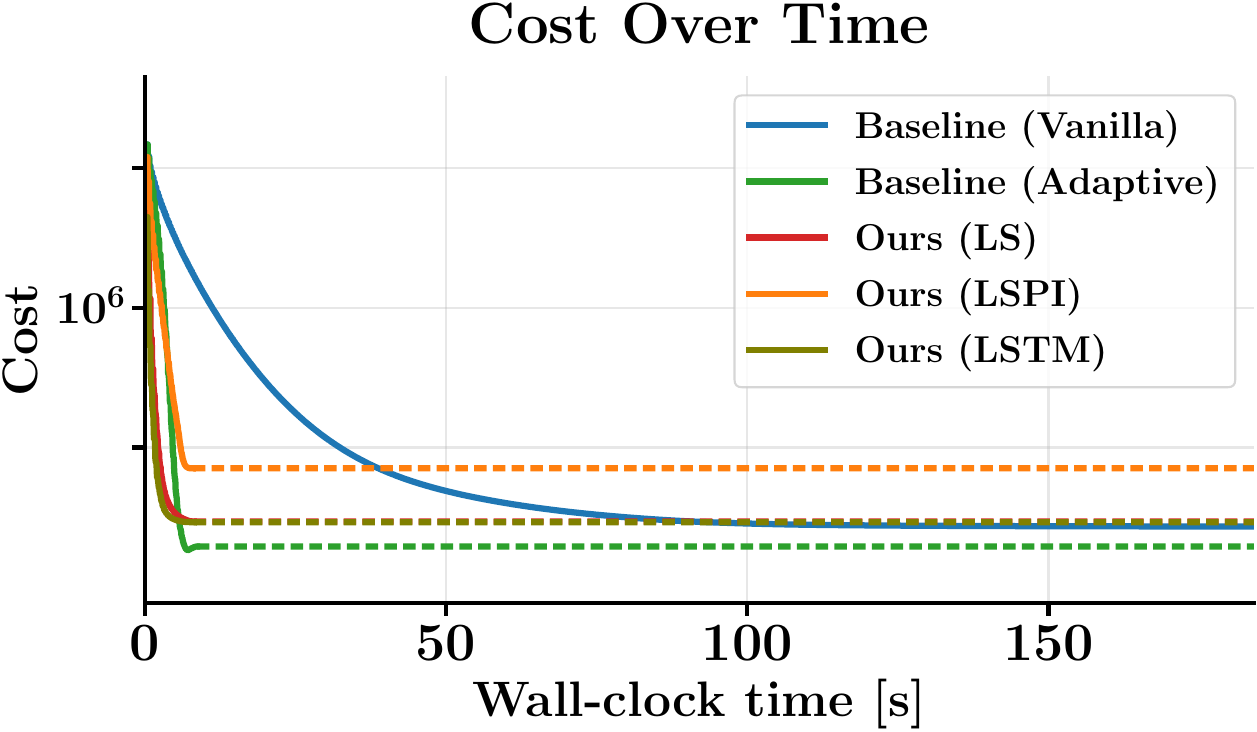} &
    \hspace{-12pt}
    \includegraphics[width=0.49\linewidth]{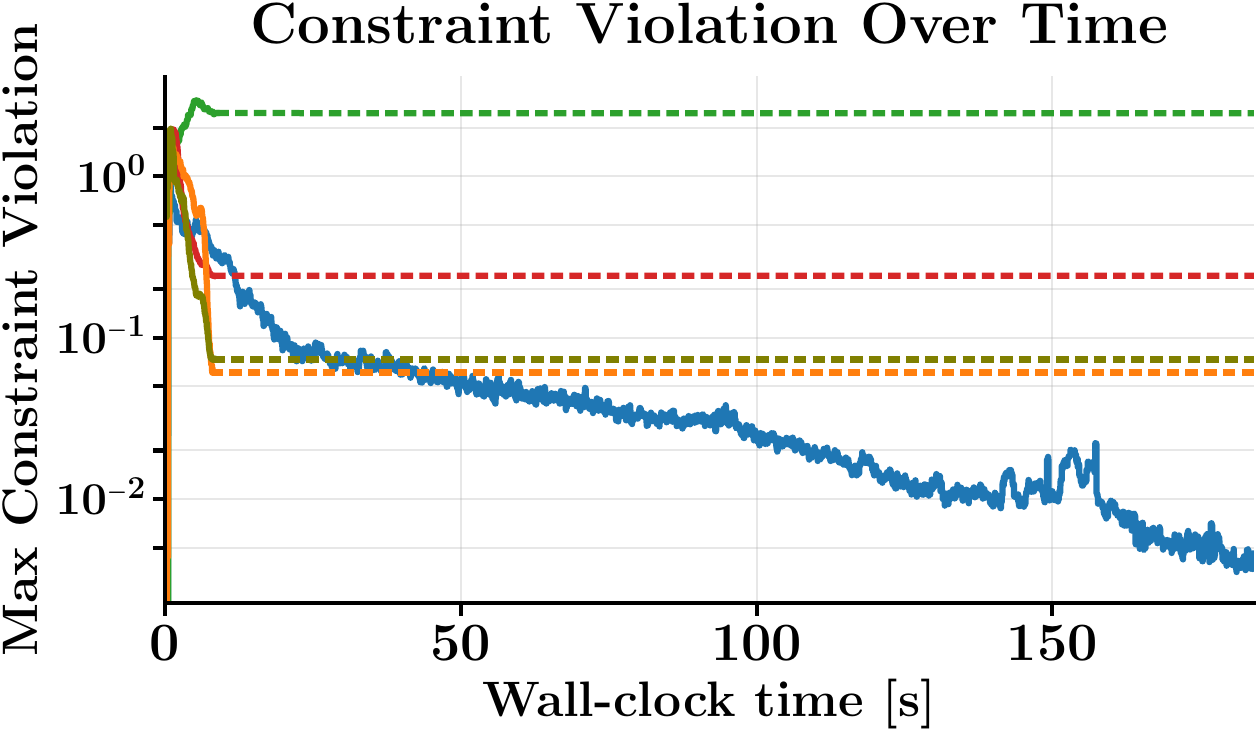}
    \hspace{-12pt}
\end{tabular}
\caption{Results of the car obstacle field task over time on 20 unseen test instances.}
\label{fig:car_plot_type_2}
\end{figure}

\begin{figure}[H]\label{app:intersection_extended_results}
\centering
% \begin{tabular}{ccc}
%     \hspace{-12pt}
%     \includegraphics[width=0.33\linewidth]{Figures/per_policy_output/intersection/plots/mult_1/bar_cost.pdf} &
%     \hspace{-12pt}
%     \includegraphics[width=0.33\linewidth]{Figures/per_policy_output/intersection/plots/mult_1/bar_constraint.pdf} &
%     \hspace{-12pt}
%     \includegraphics[width=0.33\linewidth]{Figures/per_policy_output/intersection/plots/mult_1/bar_time_to_target.pdf}
%     \hspace{-12pt}
% \end{tabular}

\begin{tabular}{cc}
    \hspace{-12pt}
    \includegraphics[width=0.49\linewidth]{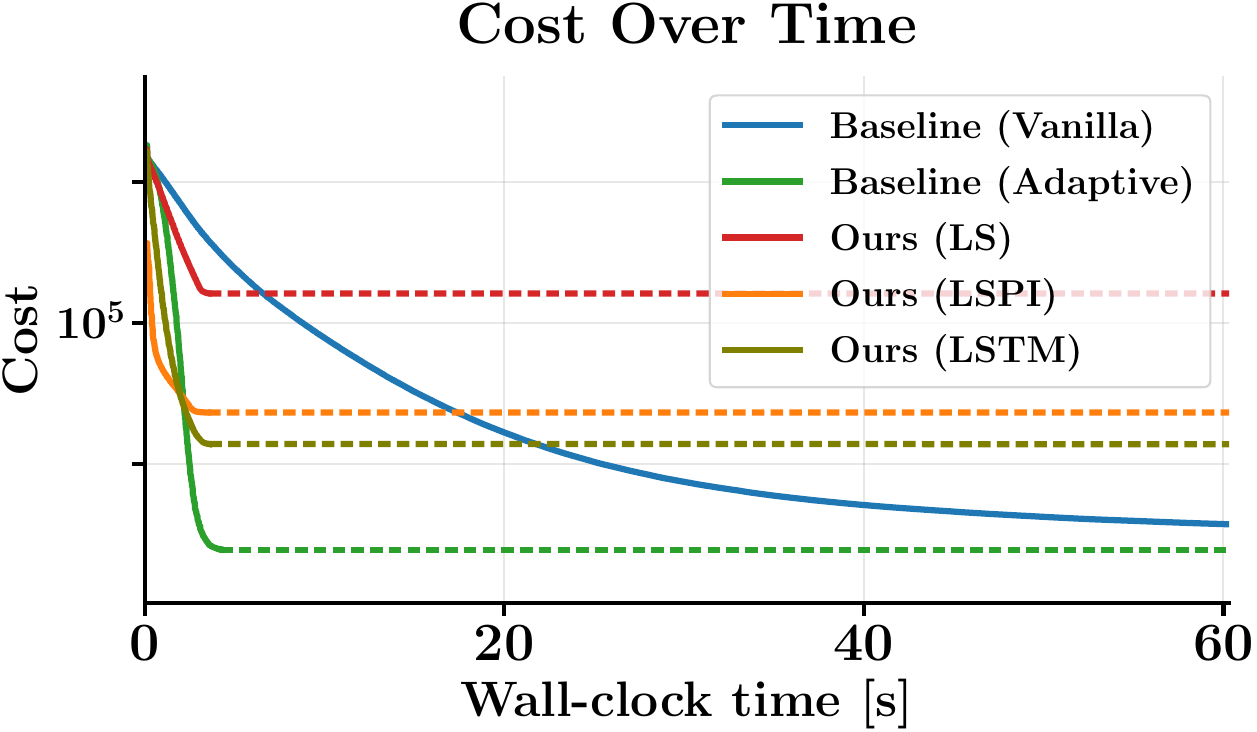} &
    \hspace{-12pt}
    \includegraphics[width=0.49\linewidth]{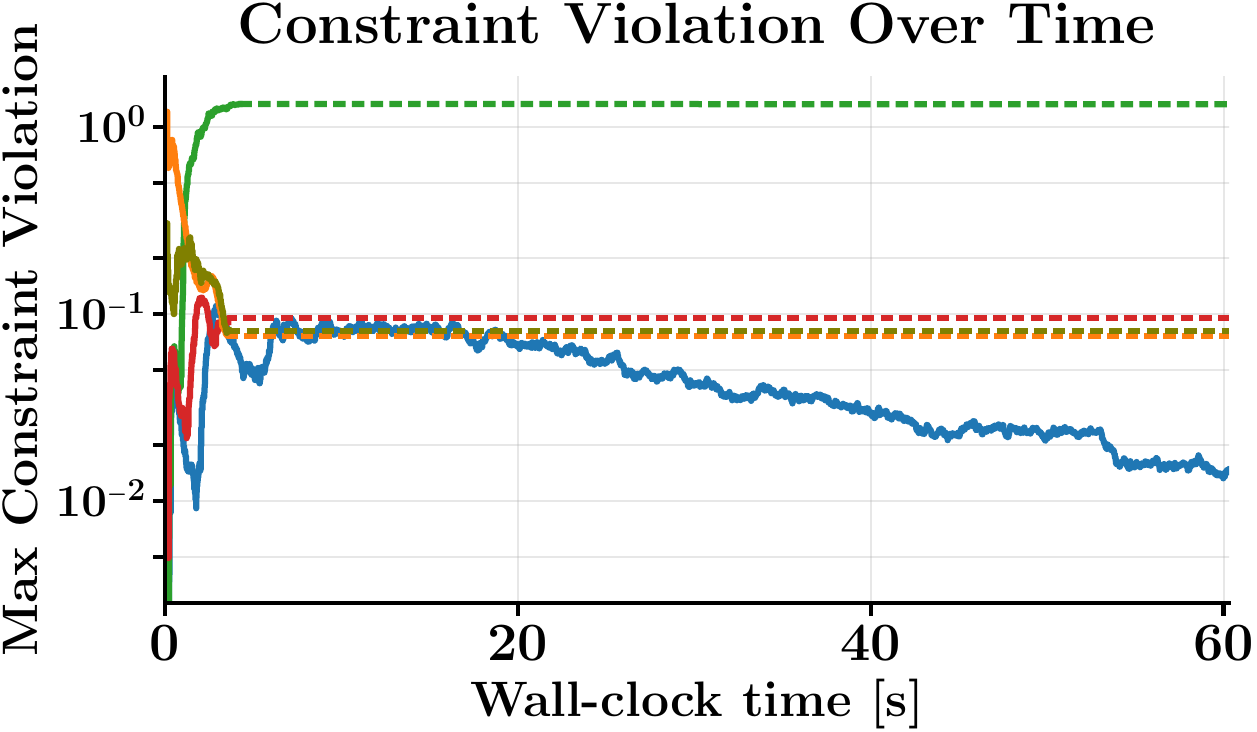}
    \hspace{-12pt}
\end{tabular}
\caption{Results of the car intersection task over time on 20 unseen test instances.}
\label{fig:inter_plot_type_2}
\end{figure}

\begin{figure}[H]
\centering
% \begin{tabular}{ccc}
%     \hspace{-12pt}
%     \includegraphics[width=0.33\linewidth]{Figures/per_policy_output/quad/plots/mult_1/bar_cost.pdf} &
%     \hspace{-12pt}
%     \includegraphics[width=0.33\linewidth]{Figures/per_policy_output/quad/plots/mult_1/bar_constraint.pdf} &
%     \hspace{-12pt}
%     \includegraphics[width=0.33\linewidth]{Figures/per_policy_output/quad/plots/mult_1/bar_time_to_target.pdf}
%     \hspace{-12pt}
% \end{tabular}
\begin{tabular}{cc}
    \hspace{-12pt}
    \includegraphics[width=0.49\linewidth]{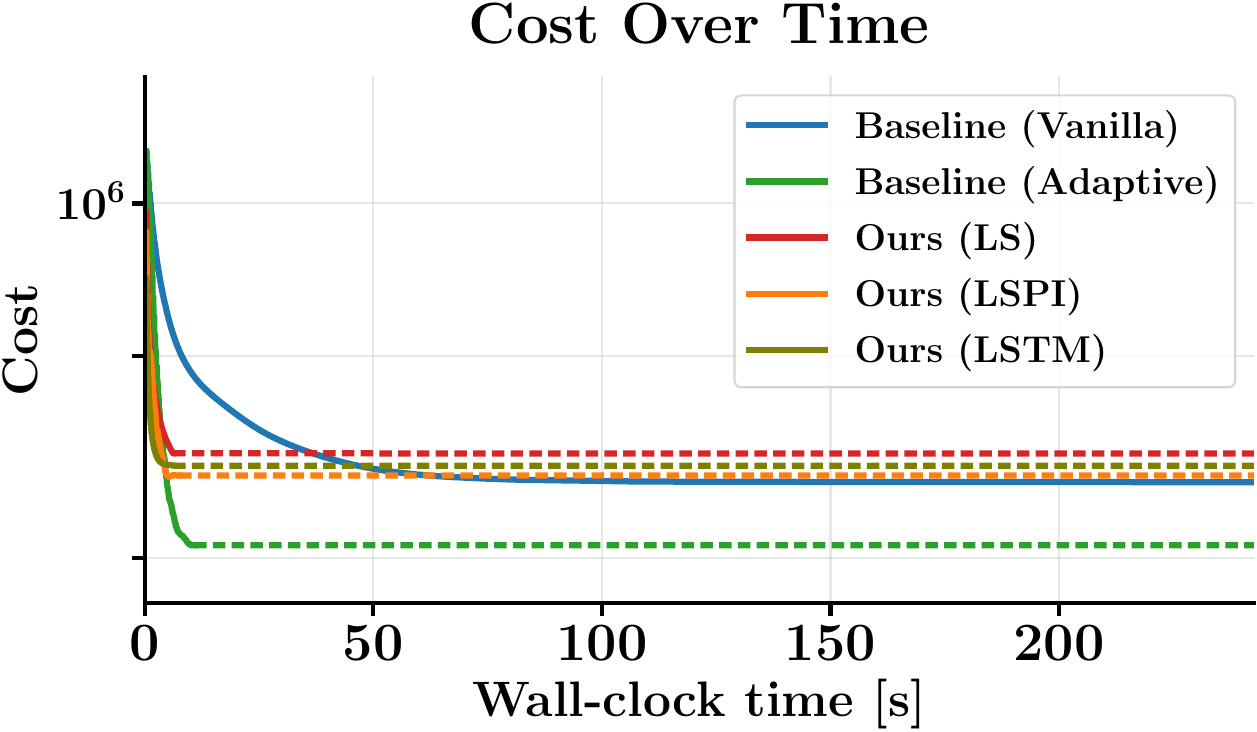} &
    \hspace{-12pt}
    \includegraphics[width=0.49\linewidth]{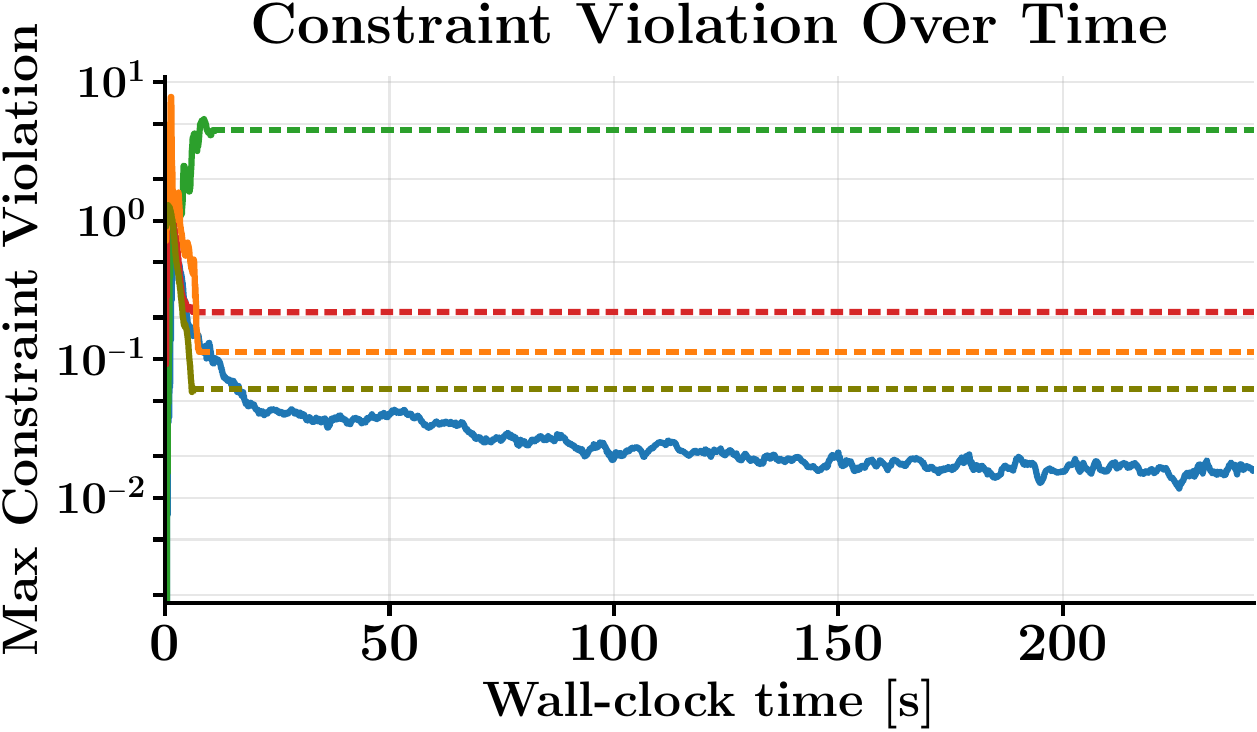}
    \hspace{-12pt}
\end{tabular}
\caption{Results of the quadrotor obstacle field task over time on 20 unseen test instances.}
\label{fig:quad_plot_type_2}
\end{figure}

\subsection{Detailed Numerical Results of Scaling Experiments}\label{app:scaling_table}

\begin{table}[H]
\centering
\begin{tabular}{ccccc|ccc}
\multirow{2}{*}{Task} & \multirow{2}{*}{$N$} & \multirow{2}{*}{Metric} & \multicolumn{2}{c|}{ADMM-DDP} & \multicolumn{3}{c}{Deep Coordinator} \\
 & & & Vanilla & Adaptive & LS & LSPI & LSTM \\ \toprule
\multirow{9}{*}{A} & \multirow{3}{*}{15} & Final Cost & $1235076.8$ & $\boldsymbol{305770.9}$ & $346303.4$ & $450818.9$ & $\underline{344902.0}$ \\
 &  & Max Const. & $0.395$ & $2.5$ & $0.242$ & $\boldsymbol{0.061}$ & $\underline{0.073}$ \\
 &  & Speed-up & - & - & $\underline{9.35}$ & $4.83$ & $\boldsymbol{9.44}$ \\
\cmidrule(lr){2-8}
 & \multirow{3}{*}{30} & Final Cost & $2453478.6$ & $\boldsymbol{610535.4}$ & $691183.5$ & $892261.2$ & $\underline{687550.9}$ \\
 &  & Max Const. & $0.417$ & $2.4$ & $0.297$ & $\boldsymbol{0.057}$ & $\underline{0.100}$ \\
 &  & Speed-up & - & - & $\underline{9.62}$ & $4.99$ & $\boldsymbol{9.78}$ \\
\cmidrule(lr){2-8}
 & \multirow{3}{*}{60} & Final Cost & $4909243.0$ & $\boldsymbol{1311819.8}$ & $1383651.4$ & $1786253.3$ & $\underline{1378203.1}$ \\
 &  & Max Const. & $0.515$ & $2.8$ & $0.360$ & $\boldsymbol{0.067}$ & $\underline{0.125}$ \\
 &  & Speed-up & - & - & $\underline{9.67}$ & $5.09$ & $\boldsymbol{9.84}$ \\
\midrule
\multirow{9}{*}{B} & \multirow{3}{*}{8} & Final Cost & $156365.7$ & $\boldsymbol{32893.1}$ & $115555.8$ & $64493.6$ & $\underline{55195.8}$ \\
 &  & Max Const. & $0.100$ & $1.3$ & $0.095$ & $\boldsymbol{0.076}$ & $\underline{0.081}$ \\
 &  & Speed-up & - & - & $1.87$ & $\underline{5.07}$ & $\boldsymbol{6.18}$ \\
\cmidrule(lr){2-8}
 & \multirow{3}{*}{12} & Final Cost & $191742.3$ & $\boldsymbol{39049.7}$ & $144439.9$ & $76969.7$ & $\underline{72047.7}$ \\
 &  & Max Const. & $0.152$ & $1.3$ & $0.162$ & $\underline{0.099}$ & $\boldsymbol{0.092}$ \\
 &  & Speed-up & - & - & $1.93$ & $\underline{5.47}$ & $\boldsymbol{5.92}$ \\
\cmidrule(lr){2-8}
 & \multirow{3}{*}{16} & Final Cost & $232346.3$ & $\boldsymbol{45185.4}$ & $170993.5$ & $\underline{89625.4}$ & $93419.0$ \\
 &  & Max Const. & $\underline{0.314}$ & $1.4$ & $0.325$ & $0.411$ & $\boldsymbol{0.293}$ \\
 &  & Speed-up & - & - & $1.86$ & $\boldsymbol{23.77}$ & $\underline{23.31}$ \\
\midrule
\multirow{12}{*}{C} & \multirow{3}{*}{10} & Final Cost & $590101.0$ & $\boldsymbol{211639.1}$ & $321112.6$ & $\underline{290707.0}$ & $303822.8$ \\
 &  & Max Const. & $0.158$ & $4.5$ & $0.219$ & $\underline{0.112}$ & $\boldsymbol{0.061}$ \\
 &  & Speed-up & - & - & $5.03$ & $\boldsymbol{6.34}$ & $\underline{6.24}$ \\
\cmidrule(lr){2-8}
 & \multirow{3}{*}{20} & Final Cost & $1141216.9$ & $\boldsymbol{434224.8}$ & $641816.3$ & $\underline{580886.1}$ & $607757.9$ \\
 &  & Max Const. & $0.185$ & $5.6$ & $0.254$ & $\underline{0.153}$ & $\boldsymbol{0.083}$ \\
 &  & Speed-up & - & - & $5.41$ & $\boldsymbol{7.57}$ & $\underline{6.74}$ \\
\cmidrule(lr){2-8}
 & \multirow{3}{*}{40} & Final Cost & $2297989.1$ & $\boldsymbol{875745.0}$ & $1287241.8$ & $\underline{1163967.6}$ & $1218391.9$ \\
 &  & Max Const. & $0.213$ & $6.3$ & $0.292$ & $\underline{0.203}$ & $\boldsymbol{0.144}$ \\
 &  & Speed-up & - & - & $6.30$ & $\boldsymbol{19.11}$ & $\underline{11.52}$ \\
\cmidrule(lr){2-8}
 & \multirow{3}{*}{80} & Final Cost & $4641287.9$ & $\boldsymbol{1815901.7}$ & $2585792.7$ & $\underline{2339338.8}$ & $2448767.6$ \\
 &  & Max Const. & $0.389$ & $6.0$ & $0.468$ & $\underline{0.288}$ & $\boldsymbol{0.279}$ \\
 &  & Speed-up & - & - & $10.61$ & $\boldsymbol{48.61}$ & $\underline{30.92}$ \\
\bottomrule \\
\end{tabular}
\caption{Results for scaling experiments. \textbf{Task A:} Car obstacle field; \textbf{Task B:} Car intersection; \textbf{Task C:} Quadrotor obstacle field. The policies are trained using the smallest number of agents within deployment settings. The final cost and maximum constraint violation are obtained after 30 iterations. The speed-up ratio compares \deepcoordinator against the Baseline (Vanilla), where the vanilla optimizer runs until reaching a cost within 5\% of Deep Coordinator. Results are averaged over 20 problem instances for all scales. \textbf{Bold} indicates the best-performing model, while $\underline{\text{underline}}$ indicates the second-best.}
\label{tab:scaling_details}
\end{table}

\end{document}